\let\cite\citep
\renewcommand{\citet}[1]{\citeauthor{#1}~\cite{#1}}
\definecolor{darkmidnightblue}{rgb}{0.0, 0.2, 0.4}
\pgfplotsset{compat=1.16}
\DeclareRobustCommand\onedot{\futurelet\@let@token\@onedot}
\def\@onedot{\ifx\@let@token.\else.\null\fi\xspace}
\def\eg{e.g\onedot} 
\def\ie{i.e\onedot}
\def\wrt{w.r.t\onedot} 
\theoremstyle{plain}
\newtheorem{theorem}{Theorem}
\newtheorem{lemma}{Lemma}
\newtheorem*{lemma*}{Lemma}
\newtheorem*{subproblem*}{Subproblem}
\newtheorem*{problem*}{Problem}
\newtheorem*{conjecture*}{Conjecture}
\newtheorem{assumption}{Assumption}
\theoremstyle{definition}
\newtheorem*{definition*}{Definition}
\newtheorem*{remark}{Remark}
\newtheorem*{example*}{Example}
\DeclareMathOperator*{\argmin}{argmin}
\DeclareMathOperator*{\argmax}{argmax}
\DeclareMathOperator{\dom}{dom}
\DeclareMathOperator{\vect}{vec}
\DeclareMathOperator{\diag}{diag}
\DeclareMathOperator{\softmax}{softmax}
\newcommand{\abs}[1]{\left| #1 \right|} %
\newcommand{\norm}[1]{\left\| #1 \right\|} %
\newcommand{\set}[1]{\left\{ #1 \right\}}
\newcommand\inner[1]{\left\langle #1 \right\rangle}
\newcommand*\dotp{\mathpalette\dotp@{.5}}
\newcommand*\dotp@[2]{\mathbin{\vcenter{\hbox{\scalebox{#2}{$\m@th#1\bullet$}}}}}
\newcommand{\given}{\, | \,}
\newcommand{\NN}{{\mathbb{N}}}
\newcommand{\RR}{{\mathbb{R}}}
\newcommand{\cE}{{\mathcal{E}}}
\newcommand{\cG}{{\mathcal{G}}}
\newcommand{\cN}{{\mathcal{N}}}
\newcommand{\cO}{{\mathcal{O}}}
\newcommand{\cS}{{\mathcal{S}}}
\newcommand{\cV}{{\mathcal{V}}}
\newcommand{\cX}{{\mathcal{X}}}
\DeclareDocumentCommand{\newmathcommand}{ m O{0} m }{%
	\ifcsname\expandafter\@gobble\string#1\space\endcsname
	\expandafter\expandafter\expandafter\let\expandafter\csname old\string#1\expandafter\endcsname\expandafter=\csname\expandafter\@gobble\string#1\space\endcsname
	\else
	\expandafter\let\csname old\string#1\endcsname=#1
	\fi
	\expandafter\newcommand\csname new\string#1\endcsname[#2]{#3}
	\DeclareRobustCommand#1{%
		\ifmmode
		\expandafter\let\expandafter\next\csname new\string#1\endcsname
		\else
		\expandafter\let\expandafter\next\csname old\string#1\endcsname
		\fi
		\next
	}%
}
\newcommand{\0}{{\mathbf{0}}}
\newcommand{\1}{{\mathbf{1}}}
\newmathcommand{\a}{{\mathbf{a}}}
\newmathcommand{\b}{{\mathbf{b}}}
\newmathcommand{\c}{{\mathbf{c}}} 
\newmathcommand{\d}{{\mathbf{d}}}
\newcommand{\e}{{\mathbf{e}}}
\newcommand{\f}{{\mathbf{f}}}
\newcommand{\g}{{\mathbf{g}}}
\newmathcommand{\k}{{\mathbf{k}}}
\newmathcommand{\m}{{\mathbf{m}}}
\newmathcommand{\o}{{\mathbf{o}}}
\newmathcommand{\p}{{\mathbf{p}}}
\newmathcommand{\q}{{\mathbf{q}}}
\newcommand{\s}{{\mathbf{s}}}
\newmathcommand{\t}{{\mathbf{t}}}
\newmathcommand{\u}{{\mathbf{u}}}
\newmathcommand{\v}{{\mathbf{v}}}
\newcommand{\x}{\mathbf{x}}
\newcommand{\y}{{\mathbf{y}}}
\newcommand{\z}{{\mathbf{z}}}
\newmathcommand{\C}{{\mathbf{C}}}
\newmathcommand{\G}{{\mathbf{G}}}
\newmathcommand{\H}{{\mathbf{H}}}
\newmathcommand{\M}{{\mathbf{M}}}
\newcommand{\A}{{\mathbf{A}}}
\newcommand{\I}{{\mathbf{I}}}
\newcommand{\K}{{\mathbf{K}}}
\newmathcommand{\P}{{\mathbf{P}}}
\newmathcommand{\Q}{{\mathbf{Q}}}
\newmathcommand{\S}{{\mathbf{S}}}
\newcommand{\X}{{\mathbf{X}}}
\newcommand{\Y}{{\mathbf{Y}}}
\newcommand{\Z}{{\mathbf{Z}}}
\newcommand{\thetab}{\pmb{\theta}}
\newcommand{\mub}{\pmb{\mu}}
\newcommand{\rhob}{\pmb{\rho}}
\newcommand{\Thetab}{\pmb{\Theta}}
\newcommand{\iter}[1]{^{#1}}
\newcommand{\fw}{Frank-Wolfe\xspace}
\newcommand{\voc}{PASCAL VOC\xspace}
\newcommand{\lp}{\mathrm{LP}}
\newcommand{\new}[1]{#1}
\newenvironment{newtext}
{}
{}
\newcommand{\meanstd}[2]{$\underset{\pm #2}{#1}$}
\newcommand{\lfw}[1][]{%
	\ifthenelse{\isempty{#1}}{$\ell_2$FW\xspace}{$\ell_2$FW\hspace{-2pt}$_{#1}$\xspace}%
}
\newcommand{\efw}[1][]{%
	\ifthenelse{\isempty{#1}}{$e$FW\xspace}{$e$FW\hspace{-2pt}$_{#1}$\xspace}%
}
\colorlet{darkgreen}{green!50.1960784313725!black}
\colorlet{efwcolor}{OliveGreen}
\colorlet{lfwcolor}{red}
\colorlet{pgdcolor}{Magenta}
\colorlet{pgmcolor}{Orange}
\colorlet{admmcolor}{Maroon}
\colorlet{mfcolor}{blue}
\colorlet{fwcolor}{Cyan}
\renewcommand{\paragraph}{%
	\@startsection{paragraph}{4}%
	{\z@}{0.50ex \@plus 1ex \@minus .2ex}{-1em}%
	{\normalfont\normalsize\bfseries}%
}
\newcommand{\mytitle}{Regularized Frank-Wolfe for Dense CRFs: Generalizing Mean Field and Beyond}
\title{\mytitle}
\author{%
	\DJ.Khu\^e L\^e-Huu %
	\qquad\qquad Karteek Alahari \\
	Univ.~Grenoble Alpes, Inria, CNRS, Grenoble INP, LJK\\
	38000 Grenoble, France\\
	\texttt{\{khue.le,karteek.alahari\}@inria.fr}
}
\begin{document}
	
\newgeometry{
	textheight=9.1in,
	textwidth=5.63in,
	top=1in,
	headheight=12pt,
	headsep=25pt,
	footskip=30pt
}
	
\maketitle

\begin{abstract}
	We introduce \emph{regularized Frank-Wolfe}, a general and effective algorithm for inference and learning of dense conditional random fields (CRFs). The algorithm optimizes a nonconvex continuous relaxation of the CRF inference problem using vanilla Frank-Wolfe with approximate updates, which are equivalent to minimizing a regularized energy function. Our proposed method is a generalization of existing algorithms such as mean field or concave-convex procedure. This perspective not only offers a unified analysis of these algorithms, but also allows an easy way of exploring different variants that potentially yield better performance. We illustrate this in our empirical results on standard semantic segmentation datasets, where several instantiations of our regularized Frank-Wolfe outperform mean field inference, both as a standalone component and as an end-to-end trainable layer in a neural network. We also show that dense CRFs, coupled with our new algorithms, produce significant improvements over strong CNN baselines.
\end{abstract}

\section{Introduction}

Fully-connected or \emph{dense} conditional random fields (CRFs)~\cite{krahenbuhl2011efficient}---combined with strong pixel-level classifiers such as a convolutional neural network (CNN)~\cite{fukushima1980neocognitron,lecun1989backpropagation}---have been a highly-successful paradigm for semantic segmentation. Top-performing systems on the PASCAL VOC benchmark~\cite{everingham2010pascal} used to include a CRF as either a post-processing step~\cite{chandra2016fast,chen2016attention,chen2017deeplab,dai2015boxsup,li2017not,lin2016efficient,lin2017refinenet} or a trainable component~\cite{arnab2016higher,liu2015semantic,shen2017semantic,sun2016mixed,yu2016multi,zheng2015conditional}.
However, as CNNs got stronger, the improvements that CRFs brought decreased over time, and as a result they fell out of favor since 2017~\cite{lin2017refinenet}.

In this paper, we revisit dense CRFs with two contributions. First, on the theoretical side, we propose \emph{regularized Frank-Wolfe}, a new class of algorithms for inference and learning of CRFs that perform better than the popular \emph{mean field}~\cite{krahenbuhl2011efficient,krahenbuhl2013parameter,parisi1988statistical}---the method of choice in the aforementioned works. Regularized Frank-Wolfe optimizes a nonconvex continuous relaxation of the CRF inference problem (\S\ref{sec:background}) by performing approximate conditional-gradient updates (\S\ref{sec:smoothing-perspective}), which is equivalent to minimizing a regularized energy using the generalized Frank-Wolfe method~\cite{mine1981minimization} (\S\ref{sec:regularized-energy-perspective}). Several of its instantiations lead to new algorithms that have not been studied before in the MAP inference literature (\S\ref{sec:instantiations}). Moreover, we show that it also includes several existing methods, including mean field and the concave-convex procedure~\cite{yuille2002concave}, as special cases (\S\ref{sec:special-cases}). This generalized perspective allows a unified analysis of all these old and new algorithms in a single framework (\S\ref{sec:theoretical-analysis}). In particular, we show that they achieve a sublinear rate of convergence $\cO(1/\sqrt{k})$ for suitable stepsize schemes, and in certain cases (such as strongly-convex regularizer or concave energy) this can be improved to $\cO(1/k)$ (\S\ref{sec:convergence-analysis}). Furthermore, we provide a tightness analysis for the resulting nonconvex relaxation of the regularized energy, recovering some existing tightness results \cite{berthod1982definition,lehuu2018continuous,ravikumar2006quadratic} as special cases (\S\ref{sec:tightness-analysis}). The proposed algorithms are easy to implement, converge quickly in practice, and have (sub)differentiable iterates. Such properties are important for successful \emph{gradient-based learning} via backpropagation~\cite{linnainmaa1970representation,rumelhart1986learning}. %

Our second contribution lies on the practical side. In addition to mean field and regularized Frank-Wolfe variants, we re-implement several existing first-order inference methods~\cite{larsson2017projected,lehuu2018continuous,lions1979splitting}---those that are amenable to gradient-based learning---for comparison. Remarkably, we find that dense CRFs can still achieve important improvements over the strong DeepLabv3+~\cite{chen2018encoder} CNN model for all these solvers (\S\ref{sec:experiments}). In particular, our best variant of regularized Frank-Wolfe achieves a mean intersection-over-union (mIoU) score of $88.0$ on the PASCAL VOC test set (\S\ref{sec:learning-performance}), improving over DeepLabv3+. We hope that these encouraging results could attract interest from the community in considering dense CRFs (again) for tasks such as semantic segmentation. Our source code is made publicly available under the GNU general public license for this purpose.\footnote{\url{https://github.com/netw0rkf10w/CRF}}

\section{Background}\label{sec:background}

\subsection{Inference in CRFs}\label{sec:inference-in-CRFs}
Let $\s\in \cS_1\times\cdots\times\cS_n$ denote an assignment to $n$ discrete random variables $S_1,\ldots,S_n$, where each variable $S_i$ takes values in a finite set of states (or \emph{labels}) $\cS_i$. Let $\cG=(\cV,\cE)$ be a graph of $n$ nodes ($\cV = \set{1,2,\dots,n}$). A Markov random field (MRF) defined by $\cG$ encodes a family of joint distributions that can be factorized as follows, where $\phi_{i}:\cS_i\to\RR_{+}$ and $\phi_{ij}:\cS_{i}\times\cS_j\to\RR_{+}$ are the so-called \emph{unary} and \emph{pairwise} (respectively) potential functions, and $Z$ is a normalization factor:
\begin{equation}\label{eq:mrf-distribution}
	p(\s) = \frac{1}{Z}\prod_{i\in\cV}\phi_{i}(s_i)\prod_{ij\in\cE}\phi_{ij}(s_i,s_j).
\end{equation}
Note that~\eqref{eq:mrf-distribution} can also include conditional distributions, \ie, $p(\s\given\o)$ with observed variables $\o$. In this case the potentials may also depend on $\o$, \eg, $\phi_{i}(s_i;\o)$, and this model is referred to as conditional random field (CRF)~\cite{lafferty2001conditional}. We will present later (\S\ref{sec:experimental-setup}) such a model for image segmentation.
In the following, we use MRF and CRF interchangeably. We assume further that all nodes have the same set of labels: $\cS_i = \cS \ \forall i$, with cardinality $d = \abs{\cS}$. It is convenient to express $p(\s)$ as $\frac{1}{Z}\exp(-e(\s))$, where the so-called \emph{energy} $e(\s)$ is defined as
\begin{equation}\label{eq:mrf-energy}
	e(\s) = \sum_{i\in\cV} \theta_{i}(s_i) + \sum_{ij\in\cE}\theta_{ij}(s_i,s_j),\quad\mbox{with } \theta_i(s_i) = -\log \phi_{i}(s_i) \mbox{ (idem for $\theta_{ij}$)}.
\end{equation}
The task of maximum a posteriori (MAP) inference consists in finding the most probable joint assignment, also known as \emph{energy minimization} (which is NP-Hard in general~\cite{shimony1994finding}):
\begin{equation}\label{eq:map-S}
	\s^* = \argmax_{\s\in\cS^n} p(\s) = \argmin_{\s\in\cS^n} e(\s).
\end{equation}

\subsection{Continuous relaxation of MAP inference}\label{sec:continuous-relaxation}

Let $x_{is}$ be a binary variable such that $x_{is}=1$ iff label $s$ is assigned to node $i$. Then, $\x_i = (x_{is})_{s\in\cS} \in \set{0,1}^{d}$ denotes the one-hot vector for node $i$. Let $\x\in\set{0,1}^{nd}$ be the concatenation of all $\x_i$, and let $\thetab_i = (\theta_{i}(s))_{s\in\cS} \in\RR^d$, $\Thetab_{ij} = (\theta_{ij}(s,t))_{t\in\cS}^{s\in\cS} \in\RR^{d\times d}$. The energy~\eqref{eq:mrf-energy} is then
\begin{equation}\label{eq:energy-xi}
	E(\x;\thetab) = \sum_{i\in\cV}\thetab_i^\top\x_i +  \sum_{ij\in\cE} \x_i^\top\Thetab_{ij}\x_j,
\end{equation}
where $\thetab$ is a parameter vector composed of all $\thetab_i$ and $\Thetab_{ij}$. The MAP inference problem~\eqref{eq:map-S} transforms to minimizing $E(\x;\thetab)$ over the new variables $\x$. A natural approach is to relax the binary constraint and solve the continuous relaxation $\min_{\x\in\cX} E(\x;\thetab)$ with
\begin{equation}\label{eq:set-X}
	\cX = \set{\x \in\RR^{nd}: \x\ge\0, \1^\top\x_i = 1\ \forall i\in\cV}.
\end{equation}
This continuous relaxation is known to be \emph{tight}~\cite{berthod1982definition,lehuu2018continuous,ravikumar2006quadratic}. %
For convenience, we represent the unary potentials as a vector $\u(\thetab)=(\thetab_{i})_{i\in\cV}\in\RR^{nd}$, and the pairwise potentials as a symmetric $n\times n$ block matrix $\P(\thetab) \in\RR^{nd\times nd}$, where the $(i,j)$ block is $\Thetab_{ij}$. The problem is reduced to
\begin{equation}\label{eq:energy-x}
	\min_{\x\in\cX} E(\x;\thetab) \triangleq \frac{1}{2}\x^\top\P(\thetab)\x + \u(\thetab)^\top\x.
\end{equation}
The reason we have made $\thetab$ explicit in~\eqref{eq:energy-xi} and~\eqref{eq:energy-x} is to provide more clarity when discussing the differentiability of their solutions for the \emph{learning} task. Note that an optimal solution $\x^*$ to~\eqref{eq:energy-xi} and~\eqref{eq:energy-x} is a function of $\thetab$, and thus should be written as $\x^*(\thetab)$. Therefore, when we say a solution is differentiable, it is understood that it is so with respect to $\thetab$. When there is no ambiguity, we omit $\thetab$ and write simply $E(\x), \P$, and $\u$. We will be interested in problem \eqref{eq:energy-x} in the rest of the paper, though it should be noted that our method also applies to the so-called linear programming (LP) relaxation, which takes the form $\min_{\x\in \cX_\lp} E_\lp(\x;\thetab)$, where $E_\lp(\x;\thetab) = \thetab^\top\x$ and $\cX_\lp$ is the so-called \emph{local polytope}~\cite{wainwright2005map}. We refer to Appendix~\ref{sec:all-special-cases} for the details.

\subsection{Vanilla Frank-Wolfe algorithm for MAP inference}\label{sec:frank-wolfe-for-map-inference}
Since the continuous energy is differentiable, it is natural to apply first-order methods such as \fw~\cite{frank1956algorithm} to solving~\eqref{eq:energy-x}~\cite{lehuu2018continuous}. Starting from a feasible $\x\iter{0}\in\cX$, \fw approximately solves \eqref{eq:energy-x} by iterating the following steps, where $\alpha_k\in [0,1]$ follows some stepsize scheme:
\begin{equation}
	\p\iter{k} \in \argmin_{\p\in\cX}  \inner{\nabla E(\x\iter{k}), \p},\qquad
	\x\iter{k+1} = \x\iter{k} + \alpha_k(\p\iter{k} - \x\iter{k}).\label{eq:vanilla-Frank-Wolfe}
\end{equation}
The same idea has also been successfully applied to other types of continuous relaxations of~\eqref{eq:energy-x}, such as linear programming (LP)~\cite{meshi2015smooth} or convex quadratic programming (QP)~\cite{desmaison2016efficient,ravikumar2006quadratic}.

\section{Regularized Frank-Wolfe for inference}\label{sec:regularized-frank-wolfe}

In this section, we introduce the proposed regularized Frank-Wolfe as a general class of algorithms for MAP inference. The motivation and general framework are presented in \S\ref{sec:smoothing-perspective} and \S\ref{sec:regularized-energy-perspective}. We show that several new inference algorithms can be obtained using different regularizers (\S\ref{sec:instantiations}). Moreover, regularized Frank-Wolfe also includes a number of existing algorithms as special cases (\S\ref{sec:special-cases}).

\subsection{A smoothing perspective}\label{sec:smoothing-perspective}

We have seen in \S\ref{sec:frank-wolfe-for-map-inference} the (vanilla) Frank-Wolfe method for solving MAP inference. Unfortunately, from a \emph{learning} perspective, this algorithm is problematic. Indeed, CRF learning with stochastic gradient descent (SGD) is typically done by backpropagating through the optimization steps \cite{krahenbuhl2013parameter,schwing2015fully,zheng2015conditional}, but the iterate $\p\iter{k}$ in~\eqref{eq:vanilla-Frank-Wolfe} is piecewise constant and thus its gradient (\wrt $\thetab$) is zero almost everywhere (\S\ref{sec:non-diff-vanilla-FW}), which makes learning not possible.\footnote{\new{Note that backpropagation typically requires the operations to be differentiable (at least) almost everywhere, which $\p\iter{k}$ satisfies. Therefore, the issue here does not lie in \emph{differentiability}, but in the resulting \emph{zero} gradients.} Blackbox differentiation~\cite{pogancic2020differentiation} can deal with this scenario, but it is limited to LP relaxations.} This issue will be illustrated later in the experiments. A potential solution is to add a (typically strongly-)convex regularization term:
\begin{equation}\label{eq:direction-regularized}
	\p\iter{k} \in \argmin_{\p\in\cX} \set{\inner{\nabla E(\x\iter{k}), \p} + r(\p)},\qquad
	\x\iter{k+1} = \x\iter{k} + \alpha_k(\p\iter{k} - \x\iter{k}).
\end{equation}
With appropriately chosen regularizers, \eqref{eq:direction-regularized} becomes suitable for gradient-based learning.

The technique of approximating an update step by a regularized one is quite standard in the optimization literature. For example, the classical proximal gradient method~\cite{lions1979splitting} can be interpreted the same way. Our update~\eqref{eq:direction-regularized} is inspired by Nesterov's smoothing~\cite{nesterov2005smooth}, and thus is similar in spirit to its many applications~\cite{jojic2010accelerated,niculae2017regularized,song2014learning}. In the next section, we present a different, more general perspective to view~\eqref{eq:direction-regularized}, which offers more flexibility in designing new algorithms, in making connections to existing ones, and in analyzing their theoretical properties in a unified manner.

\subsection{A regularized energy perspective}\label{sec:regularized-energy-perspective}

Instead of approximating the local updates of a \emph{given algorithm} (in this case: vanilla Frank-Wolfe) to minimize the \emph{same} objective function, one may choose to keep the algorithm, and approximate instead the objective. At first glance, however, this idea does not seem to be a good one. Indeed, if we replace $E$ by some function $E'$ and apply vanilla Frank-Wolfe, 
the updates~\eqref{eq:vanilla-Frank-Wolfe} remains piecewise constant, thus we have the same zero-gradient issue. It turns out that, if we choose a more appropriate ``given algorithm'', this idea can work. Such a choice is the \emph{generalized Frank-Wolfe} algorithm~\cite{bach2015duality,bredies2007generalized,mine1981minimization}. %

Consider the following problem, where $f:\RR^m\to\RR\cup\set{+\infty}$ is differentiable but possibly nonconvex, and $g:\RR^m\to\RR\cup\set{+\infty}$ is proper, closed, and convex but possibly non-differentiable:
\begin{equation}\label{eq:composite}
	\min_\x F(\x) \triangleq f(\x) + g(\x).
\end{equation}
Generalized Frank-Wolfe solves~\eqref{eq:composite} by iterating
\begin{equation}\label{eq:direction-FW-generalized}
	\p\iter{k} \in \argmin_{\p} \set{ \inner{\nabla f(\x\iter{k}), \p} + g(\p)}, \qquad
	\x\iter{k+1} = \x\iter{k} + \alpha_k(\p\iter{k} - \x\iter{k}).
\end{equation}
If $g$ is the indicator function $\delta_\cX$ of $\cX$ (\ie, $\delta_\cX(\x) = 0$ if $\x\in\cX$ and $\delta_\cX(\x) = +\infty$ otherwise), then the algorithm clearly reduces to vanilla Frank-Wolfe~\eqref{eq:vanilla-Frank-Wolfe} for $\min_{\x\in\cX} f(\x)$. Therefore, generalized Frank-Wolfe applied to~\eqref{eq:energy-x} with $f=E$ and $g=\delta_\cX$ will yield exactly the same updates \eqref{eq:vanilla-Frank-Wolfe}. Now let us apply this algorithm to an approximate objective $E_r(\x) = E(\x) + r(\x)$ for some function $r$. Choosing $f = E$ and $g = r + \delta_\cX$, it is straightforward that \eqref{eq:direction-FW-generalized} reduces to \eqref{eq:direction-regularized}. Therefore, we have recovered the same algorithm as in~\S\ref{sec:smoothing-perspective}, but this time through different machinery.

This framework offers a great flexibility as one can choose $f$ and $g$ in many different ways to obtain new algorithms. The only conditions are $f$ being differentiable and $g$ being convex, so that the subproblem in~\eqref{eq:direction-FW-generalized} is well-defined and globally solvable.\footnote{Further mild conditions are required for convergence (\S\ref{sec:convergence-analysis}).} 
For example, instead of choosing $f = E$ and $g= r+ \delta_\cX$ as above, one can choose $f(\x) = \frac{1}{2}\x^\top\P\x$ and $g(\x) = \u^\top\x + r(\x) + \delta_\cX(\x)$. We will recover later in~\S\ref{sec:special-cases} some existing algorithms (as special cases) through this kind of decomposition. 
Finally, we present Algorithm~\ref{algo:regularized-FW} for (approximately) solving MAP inference~\eqref{eq:energy-x}.

\begin{algorithm}
	\caption{\label{algo:regularized-FW}Generic regularized Frank-Wolfe for (approximately) solving MAP inference~\eqref{eq:energy-x}.}
	\begin{algorithmic}[1]
		\State \label{algo:choice}Choose a regularizer $r$ such that there exist $f$ (differentiable) and $g$ (convex) satisfying $f + g = E + r + \delta_\cX$. Typically (but not necessarily) $r$ is convex on $\cX$ and is constant on $\cX\cap\set{0,1}^{nd}$.
		\State Initialization: $k\gets 0$, $\x\iter{0}\in\cX$, number of iterations $N$.
		\State \label{algo:rWF-direction}Compute $\p\iter{k} \in\argmin_{\p} \set{\inner{\nabla f(\x\iter{k}), \p} + g(\p)}$ and compute the stepsize $\alpha_k$.
		\State Update $\x\iter{k+1} = \x\iter{k} + \alpha_k(\p\iter{k} - \x\iter{k})$. Let $k\gets k+1$ and go to Step~\ref{algo:rWF-direction} until $k=N$.
		\State \label{algo:rounding}Rounding: convert $\x$ to a discrete solution and return.
	\end{algorithmic}
\end{algorithm}%

While the choice of $(r, f, g)$ can be highly flexible, it would make little sense to optimize a function that has nothing to do with the original objective (\ie, the discrete energy). Let $\overline{\cX} = \cX\cap\set{0,1}^{nd}$ denote the discrete domain of our problem. If we choose $r$ such that it is constant on $\overline{\cX}$ (as suggested in Step~\ref{algo:choice} above), then minimizing $E$ on $\overline{\cX}$ is equivalent to minimizing $E + r$ on $\overline{\cX}$, and thus Algorithm~\ref{algo:regularized-FW} actually solves the continuous relaxation of a (different) discrete problem that is equivalent to MAP inference. Further discussion on this matter, as well as on the rounding Step~\ref{algo:rounding}, are deferred until \S\ref{sec:tightness-analysis}.

\begin{newtext}
	Finally, we should note that adding a strongly-convex regularizer is not new in the MAP inference literature \cite{hu2018sdca,jojic2010accelerated,meshi2015smooth,sontag2007new,tang2016bethe}. In particular, some previous work even applied (vanilla) Frank-Wolfe to optimizing such regularized energy \cite{meshi2015smooth,sontag2007new,tang2016bethe}. All these algorithms, however, suffer from the aforementioned zero-gradient issue, as already explained in the beginning of this section.
\end{newtext}

\subsection{Particular instantiations }\label{sec:instantiations}

The previous section presents regularized Frank-Wolfe as a general algorithm for inference. We now discuss concrete examples of its instantiations. To the best of our knowledge, \emph{all the algorithms presented in this section are new and have not been studied previously in the MAP inference literature}. \new{In particular, despite some similarities with proximal gradient~\cite{lions1979splitting} and mirror descent~\cite{beck2003mirror,nemirovskij1983problem}, our following euclidean and entropic variants are actually different from these methods.\footnote{\new{In proximal gradient and mirror descent, the current iterate is constrained to stay close to the previous one, while this is not the case in our method. See Appendix~\ref{sec:derivation-peer-methods} for the details.}}}

\paragraph{Euclidean Frank-Wolfe} Perhaps the most natural choice is $\ell_2$ regularization. In Algorithm~\ref{algo:regularized-FW}, let us choose $f(\x) = E(\x)$ and $r(\x) = \frac{\lambda}{2}\norm{\x}_2^2$, where $\lambda> 0$ is a regularization weight. Let $\Pi_\cX(\v)$ be the projection of a vector $\v$ onto $\cX$. It can be shown (\S\ref{sec:lfw-details}) that Step~\ref{algo:rWF-direction} in Algorithm~\ref{algo:regularized-FW} becomes
\begin{equation}\label{eq:update-l2-FW}
	\p\iter{k}  = \argmin_{\p\in\cX} \set{\inner{\P\x\iter{k} + \u, \p} + \frac{\lambda}{2}\norm{\p}_2^2} = \Pi_\cX\left(-\frac{1}{\lambda}(\P\x\iter{k} + \u)\right) \quad\forall k\ge 0.
\end{equation}

\paragraph{Entropic Frank-Wolfe} In Algorithm~\ref{algo:regularized-FW}, let us choose  $f(\x) = E(\x)$ and $r(\x) = -\lambda H(\x)$, where $\lambda > 0$ is a regularization weight and $H(\x) = -\sum_{i\in\cV} \sum_{s\in\cS} x_{is}\log x_{is}$ is the entropy of $\x$ over $\cX$.
It can be shown (\S\ref{sec:efw-details}) that Step~\ref{algo:rWF-direction} in Algorithm~\ref{algo:regularized-FW} becomes
\begin{equation}\label{eq:update-entropy-FW}
	\p\iter{k}  = \argmin_{\p\in\cX} \set{\inner{\P\x\iter{k} + \u, \p} -\lambda H(\p) }
	= \softmax\left(-\frac{1}{\lambda}(\P\x\iter{k} + \u)\right) \quad\forall k\ge 0,
\end{equation}
where $\v = \softmax(\x)$ with $\x\in\RR^{nd}$ means $\v\in\RR^{nd}$ and $v_{is} = \frac{\exp(x_{is})}{\sum_{t\in \cS} \exp(x_{it})} \ \forall i\in\cV,\forall s\in\cS.$
The resulting algorithm has a tight connection with (parallel) mean field~\cite{krahenbuhl2011efficient,krahenbuhl2013parameter} (discussed in \S\ref{sec:special-cases}).

\paragraph{Other variants} One can consider more sophisticated regularizers, \eg, a weighted combination of $\ell_2$ norm and entropy. Other options include the many different regularizers that have been used in diverse machine learning applications, such as $\ell_p$ norm~\cite{niculae2017regularized}, lasso variants~\cite{niculae2017regularized}, or binary entropy~\cite{amos2019limited}. Although these variants also lead to new MAP inference algorithms, their implementations are more sophisticated since their subproblems~\eqref{eq:direction-FW-generalized} require numerical solutions as no closed form ones exist.

\subsection{Recovering existing algorithms as special cases}\label{sec:special-cases}

In addition to the above new algorithms, regularized Frank-Wolfe also includes several existing ones as special cases. We present some of them below and refer to Appendix \ref{sec:all-special-cases} for further details.

\paragraph{Mean field} This is a special case of the above Entropic Frank-Wolfe. Indeed, if we choose $\lambda = 1$ in~\eqref{eq:update-entropy-FW} and a constant stepsize $\alpha_k = 1 \ \forall k\ge 0$ in Algorithm~\ref{algo:regularized-FW}, then it is straightforward that this algorithm is reduced to the following update step, where $\cN_i$ is the set of neighbors of node $i$:
\begin{equation*}\label{eq:mean-field-parallel}
	\resizebox{\linewidth}{!}{$\x\iter{k+1} = \softmax(-\P\x\iter{k} - \u) \iff x_{is}\iter{k+1} = \frac{1}{Z_i}\exp\bigg(-\theta_{i}(s) - \sum_{j\in\cN_i}\sum_{t\in\cS}\theta_{ij}(s,t)x_{jt}\iter{k}\bigg)\ \forall i\in\cV,s\in\cS$.}
\end{equation*}
This is precisely a (parallel) mean field update~\cite{krahenbuhl2011efficient,krahenbuhl2013parameter}. To conclude, parallel mean field is an instance of Entropic Frank-Wolfe with unit regularization weight and unit stepsize. Interestingly, the update~\eqref{eq:update-entropy-FW} is the well-known softmax function with \emph{temperature} in the deep learning literature \cite{hinton2015distilling}. One could have easily come up with such a simple extension of mean field by adding a temperature to softmax (yet surprisingly this has not been tried before), but here we have provided a principled way to achieve that. As shown later in the experiments, with suitable $\lambda$, this extension yields much better results than vanilla mean field. Finally, we should note that the tight connection between mean field and first-order methods has been noticed before. \citet{krahenbuhl2013parameter} proposed several mean-field-type variants based on the concave-convex procedure~\cite{yuille2002concave}, while closely similar variants can also be obtained through proximal gradient \cite{ajanthan2019proximal,baque2016principled}, but unlike our generalized algorithm, these algorithms cannot recover \emph{exactly} the original mean field of \citet{krahenbuhl2011efficient}. 

\paragraph{Concave-convex procedure} CCCP~\cite{yuille2002concave} solves~\eqref{eq:composite}, assuming $f$ is concave and $g$ is convex, by updating $\x\iter{k+1}$ as a solution to $-\nabla f(\x\iter{k}) \in \partial g(\x\iter{k+1})$,\footnote{In the original CCCP~\cite{yuille2002concave}, $g$ is differentiable, thus the update becomes $-\nabla f(\x\iter{k}) = \nabla g(\x\iter{k+1})$.}
which is precisely~\eqref{eq:direction-FW-generalized} with stepsize $\alpha_k = 1$. We conclude that CCCP is a special case of generalized Frank-Wolfe with $f$ concave and unit stepsize. As a result, many existing CCCP-based inference algorithms~\cite{desmaison2016efficient,krahenbuhl2013parameter} can be seen as special cases of regularized Frank-Wolfe. For example, the ones presented by \citet{desmaison2016efficient} are instantiations of the proposed algorithm with either $f(\x) = -\x^\top\diag(\c)\x$ and $r(\x) = E(\x) + \x^\top\diag(\c)\x$ (where $\c\in\RR^{nd}$ is large enough so that $r(\x)$ is convex), or $f(\x) = \x^\top (\P - \C)\x$ and $r(\x) = \u^\top\x + \x^\top\C\x$ (where $\C$ is some matrix such that $f$ is concave and $r$ is convex). Note that in these instantiations, Step~\ref{algo:rWF-direction} in Algorithm~\ref{algo:regularized-FW} requires an iterative (numerical) solution. Finally, all the algorithms presented by \citet{krahenbuhl2013parameter} are also instantiations of the proposed method because they are based on CCCP. We refer to Appendix~\ref{sec:all-special-cases} for further details.

\paragraph{Vanilla Frank-Wolfe} This is trivially a special case of regularized Frank-Wolfe and we briefly discuss it for completeness. Choosing $f(\x) = E(\x)$ and $r(\x) = 0$ we obtain the algorithm by~\citet{lehuu2018continuous}. Likewise, the one by \citet{desmaison2016efficient} corresponds to $f(\x) = E(\x) - \c^\top\x + \x^\top\diag(\c)\x$ and $r(\x) = 0$, where $\c\in\RR^{nd}$ is large enough for $f$ to be convex. \new{In addition, we can also recover existing LP-based algorithms by choosing $\cX = \cX_\lp, r(\x) = 0$, and $f(\x) = E_\lp(\x) + R(\x)$ with suitable $R(\x)$. Indeed, the one by \citet{meshi2015smooth} takes $R(\x)$ as the squared $\ell_2$-norm of linear constraints, while the ones by \citet{sontag2007new} and \citet{tang2016bethe} correspond to $R(\x)$ being an entropy approximation and its generalization, respectively (see \S\ref{sec:all-special-cases}).}

\section{Theoretical analysis}\label{sec:theoretical-analysis}
\subsection{Convergence}\label{sec:convergence-analysis}

We provide a convergence analysis for the generalized Frank-Wolfe algorithm, and the results for CRF inference special cases will then follow as a consequence. Convergence of vanilla Frank-Wolfe has been well studied in the literature \cite{freund2016new,jaggi2013revisiting,lacoste2015global,lan2013complexity}. For generalized Frank-Wolfe, different analyses exist for the case where both $f$ and $g$ in~\eqref{eq:composite} are convex~\cite{bach2015duality,harchaoui2015conditional,mairal2013optimization,yu2017generalized}. We are particularly interested in the general case where $f$ is \textbf{nonconvex},\footnote{Note that $g$ is still assumed to be convex, so that the subproblem~\eqref{eq:direction-FW-generalized} can be solved to global optimality.} as the CRF energy is often highly so in practice. \citet{mine1981minimization} (and subsequently \citet{bredies2007generalized}) proved the global convergence of the algorithm under mild conditions, though no rate of convergence was given. Recently, \citet{beck2017first} obtained an $\cO(1/\sqrt{k})$ rate of convergence for convex $g$ under adaptive or line-search stepsizes. We extend their analysis with several contributions. First, we include the case where $g$ is strongly convex, which is important as our main variants for inference (\eg, mean field or $\ell_2$-Frank-Wolfe) use strongly-convex regularizers. Second, to also include CCCP~\cite{yuille2002concave} as a special case, we relax their Lipschitz smoothness assumption on $f$ to semi-concavity (which is weaker, as any $L$-smooth function is also $L$-concave). Third, we also consider much weaker stepsize schemes such as \emph{constant} or \emph{non-summable} ones. We show that for either concave $f$ or strongly-convex $g$, a better $\cO(1/k)$ rate of convergence can be achieved, even under the (weak) constant stepsize. It should be noted that our results are new.

All the results in this section are stated under the following assumptions, where $L_f$ and $\sigma_g$ are non-negative constants and $\norm{\cdot}$ denotes the $\ell_2$ norm. Their proofs are given in Appendix~\ref{sec:convergence-analysis-full}. %
\begin{assumption}\label{assumption-f}
	$f$ is differentiable and $L_f$-semi-concave (\ie, $f(\x) - \frac{L_f}{2}\norm{\x}^2$ is concave) on $\dom f$, which is assumed to be open and convex. When $L_f=0$, $f$ is concave.
\end{assumption}
\begin{assumption}\label{assumption-g}
	$g$ is proper, closed, and $\sigma_g$-strongly-convex (\ie, $g(\x) - \frac{\sigma_g}{2}\norm{\x}^2$ is convex), and $\dom g \subseteq \dom f$ is compact. When $\sigma_g > 0$, $g$ is strongly convex.
\end{assumption}
Let $\p_\x$ denote a solution of $\min_{\p} \set{\inner{\nabla f(\x), \p} + g(\p)}$ and let $\p\iter{k} = \p_{\x\iter{k}}$.
The following quantity, called the \emph{conditional gradient norm}~\cite{beck2017first}, will serve as an optimality measure:
\begin{equation}\label{eq:conditional-gradient-norm}
	S(\x) = \inner{\nabla f(\x), \x - \p_\x} + g(\x) - g(\p_\x).
\end{equation}
\begin{lemma}\label{lemma:stationarity}
	$S(\x) \ge \frac{\sigma_g}{2}\norm{\x - \p_\x}^2 \ \forall \x\in\dom f$, and $S(\x) = 0$ iff $\x$ is a stationary point of~\eqref{eq:composite}.
\end{lemma}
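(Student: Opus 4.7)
My plan is to prove the two assertions in turn, with the inequality being the workhorse and the equivalence essentially a by-product.

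For the first claim, the key observation is that the subproblem defining $\p_\x$,
\begin{equation*}
    h_\x(\p) \triangleq \inner{\nabla f(\x), \p} + g(\p),
\end{equation*}
is $\sigma_g$-strongly convex, because $g$ is $\sigma_g$-strongly convex by Assumption~\ref{assumption-g} and adding a linear term preserves the strong-convexity modulus. Since $\p_\x$ is a minimizer of $h_\x$, the standard quadratic growth property of strongly convex functions yields
\begin{equation*}
    h_\x(\x) - h_\x(\p_\x) \;\ge\; \tfrac{\sigma_g}{2}\norm{\x - \p_\x}^2
\end{equation*}
for every $\x \in \dom g$. Expanding $h_\x(\x) - h_\x(\p_\x)$ recovers exactly $\inner{\nabla f(\x), \x-\p_\x} + g(\x) - g(\p_\x) = S(\x)$, which gives the desired bound. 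I would also note that the right-hand side is non-negative even when $\sigma_g = 0$, so $S(\x) \ge 0$ always, a fact I will use implicitly below.

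For the equivalence, the cleanest route is through the first-order optimality condition for the subproblem. By definition, $\p_\x$ is a minimizer of $h_\x$ iff $0 \in \nabla f(\x) + \partial g(\p_\x)$. If $\x$ is a stationary point of~\eqref{eq:composite}, then $-\nabla f(\x) \in \partial g(\x)$, meaning $\x$ itself minimizes $h_\x$; hence $h_\x(\x) = h_\x(\p_\x)$, which rearranges to $S(\x) = 0$. Conversely, if $S(\x) = 0$, the identity $S(\x) = h_\x(\x) - h_\x(\p_\x)$ shows that $\x$ attains the minimum of $h_\x$, so $0 \in \nabla f(\x) + \partial g(\x)$, i.e.\ $\x$ is stationary for~\eqref{eq:composite}.

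There is no real obstacle here; the only subtlety is handling the case $\sigma_g = 0$, where the minimizer $\p_\x$ need not be unique. This does not affect the argument because both directions of the equivalence rely only on the value $h_\x(\p_\x)$, which is well-defined independently of the choice of $\p_\x$, and on the first-order condition, which is valid for any minimizer. I would flag this briefly in the proof.
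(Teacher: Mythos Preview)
Your proposal is correct and follows essentially the same approach as the paper. For the inequality, the paper invokes the subgradient inequality of $g$ at $\p_\x$ with subgradient $-\nabla f(\x)$ (obtained from first-order optimality of the subproblem), which is exactly your quadratic-growth argument in a slightly different guise; for the equivalence, the paper writes $S(\x)=\max_\p\{\inner{\nabla f(\x),\x-\p}+g(\x)-g(\p)\}$ and unpacks $S(\x^*)\le 0$ into the subgradient definition $-\nabla f(\x^*)\in\partial g(\x^*)$, which is the same first-order optimality condition you use, just phrased from the $\max$ side rather than the $\min$ side.
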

The following theorem contains our convergence results for the most common stepsize schemes, including the following \emph{adaptive} and \emph{line-search} stepsizes, respectively:
\begin{equation}
	\alpha_k = \min\left\{1, \frac{1}{L_f + \sigma_g}\left(\frac{S(\x\iter{k})}{\norm{\p\iter{k} - \x\iter{k}}^2} + \frac{\sigma_g}{2}\right) \right\},\qquad
	\alpha_k = \argmin_{\alpha\in [0,1]} F(\x\iter{k} + \alpha(\p\iter{k} - \x\iter{k})).\label{eq:stepsize-adaptive-linesearch}
\end{equation}

\begin{theorem}\label{theorem:convergence}
	Let $F^*$ be the minimum value of $F$, $\Omega$ be the diameter of $\dom g$, $\Delta_k = F(\x\iter{k}) - F^*$, $\omega = \frac{\sigma_g}{L_f+\sigma_g}$, $\rho(\alpha) = \alpha\min\left\{1, 2 - \frac{\alpha}{\omega}\right\}$, $\eta(\alpha) = \frac{1}{2}\left[(L_f+\sigma_g)\alpha - \sigma_g \right]$, and $\mu = \sqrt{2L_f\Delta_0}$. For any $k\ge 0$, we have $\min_{0\le i\le k} S(\x\iter{i}) \le B_k,$
	where the bound $B_k$ is given as follows:
	
	{\normalfont\small
		\resizebox{\textwidth}{!}{
			\begin{tabular}{l|c|c|c|c}
				\hline
				& constant stepsize & constant step length &  non-summable & \multirow{2}{*}{\makecell[c]{adaptive or \\ line search \eqref{eq:stepsize-adaptive-linesearch}}} \\
				& $\alpha_k = \alpha > 0 \ \forall k$ & $\alpha_k = \frac{\alpha}{\norm{\p\iter{k} - \x\iter{k}}} \ \forall k$ & $\sum_{k=0}^{+\infty} \alpha_k = \infty$ &  \\\hline
				convex $g$	& \cellcolor[gray]{0.9} $\frac{\Delta_0}{\alpha(k+1)} + \frac{L_f\Omega^2\alpha}{2}$& \cellcolor[gray]{0.9} $\frac{\Delta_0\Omega}{\alpha(k+1)} + \frac{L_f\Omega\alpha}{2}$ & $\frac{\Delta_0 + \frac{L_f\Omega^2}{2}\sum_{i=0}^k \alpha_i^2}{\sum_{i=0}^k \alpha_i}$ & ${\scriptstyle\max}\big( \frac{2\Delta_0}{k+1} , \frac{\mu\Omega}{\sqrt{k+1}} \big)$ \\\hline
				\multirow{2}{*}{\makecell[l]{strongly\\ convex $g$}}
				& \cellcolor[gray]{0.9} $\frac{\Delta_0}{\alpha(k+1)} + $ $\scriptstyle\eta(\alpha)\Omega^2 \ \forall \alpha\ge 2\omega$
				& \cellcolor[gray]{0.9} & & \\
				& $\frac{\Delta_0}{\rho(\alpha)(k+1)} \quad$ $\scriptstyle\forall \alpha < 2\omega$ & \cellcolor[gray]{0.9} \multirow{-2}{*}{$\Big(\frac{\Delta_0}{\alpha\sqrt{2\sigma_g}(k+1)} + \frac{(L_f + \sigma_g)\alpha}{2\sqrt{2\sigma_g}}\Big)^2$}& \multirow{-2}{*}{$\frac{\Delta_{k(\omega)}}{\sum_{i=k(\omega)}^k \alpha_i}$} & \multirow{-2}{*}{$\frac{\Delta_0}{\omega(k+1)}$} \\ \hline %
				concave $f$ & $\frac{\Delta_0}{\alpha(k+1)}$ & $\frac{\Delta_0\Omega}{\alpha(k+1)}$ & $\frac{\Delta_0}{\sum_{i=0}^k \alpha_i}$ & $\frac{\Delta_0}{k+1}$\\\hline
			\end{tabular}
		}
	}
	
	In the above, $k(\omega) = \min\set{k: \alpha_i < 2\omega \ \forall i\ge k}$, with further assumption that $\lim_{k\to\infty}\alpha_k= 0$ for (jointly) non-concave $f$ and non-summable $\alpha_k$. For the non-highlighted cases, we have $\lim_{k\to\infty}S(\x\iter{k}) = 0$ and any limit point of the sequence $(\x\iter{k})_{k\ge 0}$ is a stationary point of~\eqref{eq:composite}. %
\end{theorem}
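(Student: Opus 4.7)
The overall plan is to derive a single descent inequality and read off every entry of the table from it via case analysis. From Assumption~\ref{assumption-f}, concavity of $f - \frac{L_f}{2}\norm{\cdot}^2$ gives $f(\y) \le f(\x) + \inner{\nabla f(\x), \y - \x} + \frac{L_f}{2}\norm{\y - \x}^2$; from Assumption~\ref{assumption-g}, strong convexity of $g$ yields $g((1-\alpha)\x + \alpha\p) \le (1-\alpha)g(\x) + \alpha g(\p) - \frac{\sigma_g \alpha(1-\alpha)}{2}\norm{\p - \x}^2$. Combining them at $\y = \x\iter{k+1} = \x\iter{k} + \alpha_k(\p\iter{k} - \x\iter{k})$, and using the identity $-S(\x\iter{k}) = \inner{\nabla f(\x\iter{k}), \p\iter{k} - \x\iter{k}} + g(\p\iter{k}) - g(\x\iter{k})$, produces the core inequality
\begin{equation*}
\Delta_{k+1} \;\le\; \Delta_k \;-\; \alpha_k\, S(\x\iter{k}) \;+\; \alpha_k\, \eta(\alpha_k)\,\norm{\p\iter{k} - \x\iter{k}}^2. \tag{$\star$}
\end{equation*}
The concave-$f$ row is then immediate: $L_f = 0$ forces $\eta(\alpha) \le 0$ on $[0,1]$, so the last term drops and telescoping gives $\sum_{i=0}^k \alpha_i S(\x\iter{i}) \le \Delta_0$, which yields each of the four stepsize entries by direct minimization.

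For the convex-$g$ row ($\sigma_g = 0$, $\eta(\alpha) = L_f\alpha/2$), I would upper bound $\norm{\p\iter{k} - \x\iter{k}}^2 \le \Omega^2$ using compactness of $\dom g$. Then ($\star$) gives $\sum_i \alpha_i S_i \le \Delta_0 + \frac{L_f \Omega^2}{2}\sum_i \alpha_i^2$, from which the constant, constant-step-length, and non-summable entries follow by elementary manipulation. The adaptive/line-search entry $\max\{2\Delta_0/(k+1), \mu\Omega/\sqrt{k+1}\}$ comes from picking the unconstrained minimizer $\alpha = \min\{1, S/(L_f\norm{\p-\x}^2)\}$ of the right-hand side of ($\star$), which yields per-step decrease $\Delta_k - \Delta_{k+1} \ge \min\{S_k/2,\, S_k^2/(2L_f\Omega^2)\}$, followed by a standard dichotomy on whether $\min_{i\le k} S_i$ exceeds $L_f\Omega^2$; the adaptive step does at least as well as the line-search step because it is the exact minimizer of the same bound.

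For strongly convex $g$, Lemma~\ref{lemma:stationarity} provides the extra estimate $\norm{\p\iter{k} - \x\iter{k}}^2 \le 2 S(\x\iter{k})/\sigma_g$, which is the key tool. When $\alpha_k < 2\omega$, substituting into ($\star$) yields $\Delta_{k+1} \le \Delta_k - \rho(\alpha_k) S(\x\iter{k})$ after separately handling $\alpha_k \le \omega$ (drop the nonpositive term) and $\omega < \alpha_k < 2\omega$ (substitute the $2S/\sigma_g$ bound); this produces the constant-stepsize entry for $\alpha < 2\omega$, and, combined with the extra assumption $\alpha_k \to 0$, the non-summable entry past the index $k(\omega)$. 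When $\alpha_k \ge 2\omega$, falling back on $\norm{\p-\x}^2 \le \Omega^2$ yields the $\eta(\alpha)\Omega^2$ additive term. The adaptive/line-search rate $\Delta_0/(\omega(k+1))$ is immediate by noting $\alpha = \omega \in [0,1]$ is feasible and makes $\eta(\omega) = 0$, so the line-search/adaptive step does at least as well and gives $\Delta_{k+1} \le \Delta_k - \omega S(\x\iter{k})$. The non-highlighted limit claims follow because $(\Delta_k)$ is monotone and bounded below, forcing summability of the per-step decreases and hence $\liminf_k S(\x\iter{k}) = 0$, which combined with (sub)continuity of $S$ along the iterates upgrades to $\lim_k S(\x\iter{k}) = 0$; cluster points are then stationary by the second half of Lemma~\ref{lemma:stationarity}.

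I expect the trickiest table entry to be the constant-step-length rate for strongly convex $g$, whose squared form $(\Delta_0/(\alpha\sqrt{2\sigma_g}(k+1)) + (L_f + \sigma_g)\alpha/(2\sqrt{2\sigma_g}))^2$ suggests a two-step argument: first apply ($\star$) with $\alpha_k \norm{\p - \x} = \alpha$ and the estimate $\norm{\p - \x}^2 \le 2 S_k/\sigma_g$ to derive a bound linear in $\sqrt{S_k}$, then invert to extract an $S_k$-level bound, being careful about the feasibility restriction $\alpha \le \norm{\p\iter{k} - \x\iter{k}}$ that is required for $\alpha_k \le 1$ and that ties the analysis back to Lemma~\ref{lemma:stationarity}.
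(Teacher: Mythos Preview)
Your proposal is essentially the paper's proof. The core descent inequality $(\star)$ is exactly the paper's ``fundamental lemma'' (their $K(\alpha)$ equals your $\alpha\,\eta(\alpha)$), and the subsequent case analysis---bounding $\norm{\p\iter{k}-\x\iter{k}}^2$ by $\Omega^2$ in the merely-convex case and by $2S_k/\sigma_g$ via Lemma~\ref{lemma:stationarity} in the strongly-convex case, then telescoping---matches the paper cell by cell. Your shortcut for the adaptive/line-search rate under strongly convex $g$ (plug in the feasible choice $\alpha=\omega$, for which $\eta(\omega)=0$) is actually cleaner than the paper's two-case computation with $\alpha_k^*\lessgtr 1$, and gives the same bound.

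One concrete point on the case you flag as trickiest. For constant step length with strongly convex $g$, substituting $r_k\le 2S_k/\sigma_g$ into the first term of
\[
F_k-F_{k+1}\;\ge\;\frac{\alpha S_k}{\sqrt{r_k}}+\frac{\sigma_g\alpha\sqrt{r_k}}{2}-\frac{(L_f+\sigma_g)\alpha^2}{2}
\]
and dropping the second yields only $\tfrac{\alpha}{2}\sqrt{2\sigma_g S_k}$, half of what is needed for the stated constant. The paper instead applies AM--GM to the first two terms together, obtaining $\alpha\sqrt{2\sigma_g S_k}$; this is what produces the exact form $\bigl(\tfrac{\Delta_0}{\alpha\sqrt{2\sigma_g}(k+1)}+\tfrac{(L_f+\sigma_g)\alpha}{2\sqrt{2\sigma_g}}\bigr)^2$. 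So your outline is right but the specific substitution you propose loses a factor of $2$; use AM--GM instead. (Your remark about the feasibility constraint $\alpha_k\le 1$ is well taken---the paper glosses over it.)
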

\new{The table in Theorem~\ref{theorem:convergence} also provides rates of convergence for the algorithm. Prior to our work, the $\cO(1/\sqrt{k})$ rate for the adaptive or line-search stepsizes (top-right cell of the table, due to \citet{beck2017first}) was the best for \emph{nonconvex} objectives.}\footnote{\new{If both $f$ and $g$ are \emph{convex}, a better rate of $\cO(1/k)$ exists \cite{bach2015duality,beck2017first,yu2017generalized}. In addition, if $g$ is the indicator function of a convex set $\cX$ (\ie, vanilla Frank-Wolfe) and either $f$ or $\cX$ is strongly convex, a linear rate can be obtained~\cite{kerdreux2021affine,locatello2017unified,pedregosa2020linearly}. Note that we use quite different machinery from all these analyses, due to the nonconvexity.}} We have improved this rate to $\cO(1/k)$ when $f$ is concave or $g$ is strongly convex, even under weaker stepsize schemes. In particular, convergence is guaranteed for all considered stepsize schemes when $f$ is concave, for which the best bound is obtained when $\alpha_k = 1 \ \forall k$, which explains the default unit stepsize in CCCP~\cite{yuille2002concave} (see \S\ref{sec:special-cases}). Convergence is also guaranteed for the (diminishing) non-summable scheme (which includes common stepsizes such as $\alpha_k = 2/(k+2)$ or $\alpha_k = 1/\sqrt{k}$), but the rate depends on the rate of divergence of $\sum_{i=0}^k\alpha_i$. More detailed results and analyses can be found in Appendix~\ref{sec:convergence-analysis-full}.

\paragraph{Convergence for MAP inference}
 For all the instantiations of regularized Frank-Wolfe presented in \S\ref{sec:instantiations} and \S\ref{sec:special-cases}, it is easy to check that Assumptions~\ref{assumption-f} and~\ref{assumption-g} are satisfied. In addition, the regularizers in most of them (euclidean or entropic variants, including mean field) are strongly convex, thus we would expect a rate of convergence of at least $\cO(1/k)$ in practice for these algorithms under the adaptive, line search, or (suitable) constant stepsizes. \new{Note that the adaptive scheme requires to know $L_f$ and $\sigma_g$, which is possible in our case: a lower bound on $\sigma_g$ is $\lambda$ for both euclidean and entropic variants, while an upper bound on $L_f$ is $\norm{\P}_2$ for the energy~\eqref{eq:energy-x}. In practice, however, these bounds could be too loose to yield good convergence.}

\begin{newtext}
	\paragraph{Convergent mean field} It is well-known that parallel mean field may diverge~\cite{krahenbuhl2013parameter}. Our Entropic Frank-Wolfe can be viewed as an improved variant of mean field that is globally convergent for different stepsize schemes, without resorting to a concave approximation as done by \citet{krahenbuhl2013parameter}. Our above analysis also provides an explanation for a known phenomenon~\cite{baque2016principled}: \emph{damped} mean field (corresponding to Entropic Frank-Wolfe with $\lambda = 1$ and $\alpha_k = \alpha < 1 \ \forall k$) is more likely (than mean field) to guarantee convergence when the energy is not concave. %
\end{newtext}

\subsection{Tightness of the relaxation}\label{sec:tightness-analysis}

We have seen that regularized Frank-Wolfe (Algorithm~\ref{algo:regularized-FW}) minimizes a modified continuous energy. It is thus reasonable to ask whether doing so also minimizes the original discrete energy (which is the main objective). In this section, we partially answer this question by providing some tightness guarantee for this regularized continuous relaxation. Our analysis is quite general and also includes several existing tightness results~\cite{berthod1982definition,lehuu2018continuous,ravikumar2006quadratic} as special cases. All proofs can be found in Appendix~\ref{sec:tightness-full}.

The last step in Algorithm~\ref{algo:regularized-FW} consists in converting $\x$ to a discrete solution. We consider two such rounding schemes. The simplest one is perhaps \textbf{nearest rounding}, which assigns each node $i$ with the label $s_i \in \argmax_{t\in\cS} x_{it}$. 
Intuitively, this sets $\x_i$ to the nearest vertex of the simplex $\set{\x_i\in\RR_+^d: \1^\top\x_i = 1}$. The second scheme, called \textbf{BCD rounding} \cite{lehuu2018continuous,ravikumar2006quadratic}, consists in minimizing $E(\x)$ over $\x_i$ while keeping all $\x_j$ ($j\neq i$) fixed (\ie, block coordinate descent), which amounts to iteratively assigning each node $i$ with label $s_i \in \argmin_{s\in\cS}\Big\{\theta_{i}(s) + \sum_{j\in\cN_i} \sum_{t\in\cS} \theta_{ij}(s,t)x_{jt}\Big\}$.
In practice, we only use nearest rounding because BCD rounding is too expensive for dense graphs. However, an important property of the latter is that it does not increase the energy, which is useful for our theoretical analysis. The following theorem provides an additive bound on the energy.

\begin{theorem}\label{theorem:energy-additive-bound}
	Let $\x_r^*$ be a global minimum of $E_r(\x) = E(\x) + r(\x)$ over $\cX$, $\bar{\x}_r^*$ be the discrete solution rounded from $\x_r^*$, and $E^*$ be the minimum discrete energy. Assume that $r(\x)$ is bounded:\footnote{A sufficient condition is $r$ being continuous, as $\cX$ is compact.} $m\le r(\x)\le M \ \forall \x\in\cX$. We have $E^* \le E(\bar{\x}_r^*) \le E^* + M - m + C$, where $C = \sqrt{n\left(1-\frac{1}{d}\right)}\left(\norm{\u}_2 + \sqrt{n}\norm{\P}_{2}\right)$ for nearest rounding and $C = 0$ for BCD rounding.
\end{theorem}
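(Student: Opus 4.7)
The plan is to establish the two inequalities separately. The lower bound $E^* \le E(\bar{\x}_r^*)$ is immediate because $\bar{\x}_r^* \in \overline{\cX}$ is a feasible discrete assignment, hence its energy cannot be smaller than the global minimum. For the upper bound, the first step is to compare $E(\x_r^*)$ to $E^*$ using optimality of $\x_r^*$ on $\cX$: for any global discrete minimizer $\x^*\in\overline{\cX}\subseteq\cX$,
\begin{equation*}
E(\x_r^*) + r(\x_r^*) = E_r(\x_r^*) \le E_r(\x^*) = E^* + r(\x^*),
\end{equation*}
and together with $m \le r(\x_r^*)$ and $r(\x^*) \le M$ this yields $E(\x_r^*) \le E^* + M - m$.

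Next I would handle the two rounding schemes. For \textbf{BCD rounding}, each coordinate update minimizes $E$ exactly in $\x_i$ with the rest fixed and therefore cannot increase $E$; iterating over all nodes gives $E(\bar{\x}_r^*) \le E(\x_r^*) \le E^* + M - m$, so $C=0$. For \textbf{nearest rounding}, it suffices to prove the universal perturbation bound $E(\bar{\x}) - E(\x) \le C$ for every $\x\in\cX$ and its nearest rounding $\bar{\x}$, since then chaining with the previous step closes the argument. Using the identity $E(\bar{\x}) - E(\x) = \u^\top(\bar{\x} - \x) + \tfrac{1}{2}(\bar{\x}-\x)^\top \P(\bar{\x}+\x)$ and Cauchy--Schwarz,
\begin{equation*}
E(\bar{\x}) - E(\x) \le \norm{\bar{\x}-\x}_2\,\norm{\u}_2 + \tfrac{1}{2}\norm{\bar{\x}-\x}_2\,\norm{\P}_2\,\bigl(\norm{\bar{\x}}_2 + \norm{\x}_2\bigr).
\end{equation*}
The bounds $\norm{\bar{\x}}_2 = \sqrt{n}$ and $\norm{\x}_2 \le \sqrt{n}$ are immediate from the simplex structure ($\norm{\x_i}_2 \le \norm{\x_i}_1 = 1$ per node), so the crux reduces to proving $\norm{\bar{\x}-\x}_2^2 \le n(1-1/d)$.

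The main obstacle, and the step I expect to need the most care, is this last inequality; the idea is to work per node. For each $i$, let $s_i^\star \in \argmax_{t\in\cS} x_{it}$, so $\bar{\x}_i = \e_{s_i^\star}$ and $x_{is_i^\star} \ge 1/d$. Expanding,
\begin{equation*}
\norm{\bar{\x}_i - \x_i}_2^2 = (1 - x_{is_i^\star})^2 + \sum_{t\ne s_i^\star} x_{it}^2.
\end{equation*}
The key observation is that $x_{it} \le x_{is_i^\star}$ for $t\ne s_i^\star$, so $\sum_{t\ne s_i^\star} x_{it}^2 \le x_{is_i^\star}\sum_{t\ne s_i^\star} x_{it} = x_{is_i^\star}(1 - x_{is_i^\star})$. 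Substituting and simplifying yields $\norm{\bar{\x}_i - \x_i}_2^2 \le 1 - x_{is_i^\star} \le 1 - 1/d$, and summing over $i\in\cV$ gives the target bound. Plugging back in leaves $E(\bar{\x})-E(\x) \le \sqrt{n(1-1/d)}(\norm{\u}_2 + \sqrt{n}\norm{\P}_2) = C$, which combined with the first step concludes the upper bound $E(\bar{\x}_r^*) \le E^* + M - m + C$. As a sanity check, the special case $r\equiv 0$ (so $M=m$) recovers the known tightness of the vanilla quadratic relaxation, matching the results of \cite{berthod1982definition,lehuu2018continuous,ravikumar2006quadratic}.
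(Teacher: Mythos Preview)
Your proof is correct and follows essentially the same approach as the paper: the optimality argument for $E(\x_r^*) \le E^* + M - m$, the monotonicity of BCD rounding, and the Cauchy--Schwarz bound on $E(\bar{\x})-E(\x)$ combined with the per-node estimate $\norm{\bar{\x}_i-\x_i}_2^2 \le 1-1/d$ all mirror the paper's argument. Your derivation of the per-node bound (via $\sum_{t\ne s^\star} x_{it}^2 \le x_{is^\star}(1-x_{is^\star})$ directly) is in fact slightly more streamlined than the paper's version, which introduces an auxiliary variable $S = 1 - x_{is^\star}$ and manipulates the inequalities $x_{it}+S\le 1$; both routes reach the same intermediate conclusion $\norm{\bar{\x}_i - \x_i}_2^2 \le 1 - x_{is^\star}$.
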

Let us derive the energy BCD bound for some particular cases (see \S\ref{sec:tightness-full} for details). Obviously with no regularization ($r=0$), we have $M = m = 0$ and thus $E(\bar{\x}_r^*) \le E^* \le E(\bar{\x}_r^*)$, yielding $E(\bar{\x}_r^*) = E^*$, \ie, the relaxation is tight. We have thus recovered a previously known result \cite{berthod1982definition,lehuu2018continuous,ravikumar2006quadratic}. For $r(\x) = -\c^\top\x + \x^\top\diag(\c)\x$ with $\c\ge\0$, we have $M = 0$ and $m=-\frac{1}{4}\1^\top\c$, which recovers exactly the additive bound given by \citet{ravikumar2006quadratic} for the convex QP relaxation. For the $\ell_2$ regularizer $r(\x) = \frac{\lambda}{2}\norm{\x}_2^2$, we have $M = \frac{\lambda n}{2}$ and $m = \frac{\lambda n}{2d}$, thus we obtain a bound of $\frac{\lambda n}{2}\left(1 - \frac{1}{d}\right)$. For the entropy regularizer $r(\x) = -\lambda H(\x)$, we have $M = 0$ and $m = -\lambda n\log d$, thus the bound is $\lambda n\log d$, which is worse than the $\ell_2$ bound for any $d\ge 5$.

Note that the bound provided by Theorem~\ref{theorem:energy-additive-bound} is achieved from a \emph{global} minimum of the regularized relaxation. This can be attained in some cases, \eg, when the energy is submodular or when the (convex) regularizer is large enough to make the objective convex. In the general case, however, the algorithm is only guaranteed to reach a stationary point, and the (theoretical) quality of such point remains unknown. \new{It would be interesting to investigate whether the algorithm can provide an approximation guarantee for some classes of energies (\eg, supermodular ones), similar to some existing algorithms~\cite{bian2019optimal}. These open questions are left for future work.}

\section{Experiments}\label{sec:experiments}

We compare regularized Frank-Wolfe with existing methods on the semantic segmentation task, in terms of both \emph{inference} and \emph{learning} performance. Two variants, namely Euclidean Frank-Wolfe (\textbf{\lfw}) and Entropic Frank-Wolfe (\textbf{\efw}) (\S\ref{sec:instantiations}), will be compared to the following methods: Mean field (\textbf{MF})~\cite{krahenbuhl2011efficient,krahenbuhl2013parameter} (which is our baseline), nonconvex vanilla Frank-Wolfe (\textbf{FW})~\cite{lehuu2018continuous} (\S\ref{sec:frank-wolfe-for-map-inference}), projected gradient descent (\textbf{PGD})~\cite{larsson2017projected,lehuu2018continuous}, fast proximal gradient method (\textbf{PGM})~\cite{beck2009fista,lions1979splitting}, and alternating direction method of multipliers (\textbf{ADMM})~\cite{lehuu2017alternating,lehuu2018continuous}. 
\new{Convex vanilla Frank-Wolfe~\cite{desmaison2016efficient} and (entropic) mirror descent~\cite{beck2003mirror,nemirovskij1983problem} were found to perform poorly in our experiments, and thus excluded from the presentation}. Other methods based on CCCP~\cite{desmaison2016efficient} or LP relaxation~\cite{ajanthan2017efficient,desmaison2016efficient} are also excluded due to their sophisticated implementations. 
For all methods, we set the initial solution to $\x\iter{0} = \softmax(-\u)$, following previous work \cite{krahenbuhl2011efficient}. \new{Further details on implementation, running time, and memory footprint can be found in Appendices~\ref{sec:derivation-peer-methods}--\ref{sec:experimental-setup-detailed}.}

\subsection{Experimental setup} \label{sec:experimental-setup}

Our segmentation model is a standard combination of a CNN and a CRF \cite{krahenbuhl2013parameter,zheng2015conditional} (Appendix~\ref{sec:model-details}). For the CNN part, we consider two strong architectures: DeepLabv3 with ResNet101 backbone \cite{chen2017rethinking}, and DeepLabv3+ with Xception65 backbone \cite{chen2018encoder}. The CRF part is a fully-connected one~\cite{krahenbuhl2011efficient} in which any pair of pixels $(i,j)$ is an edge with potential $\theta_{ij}(s,t) = \mu(s,t)k(\f_i,\f_j)\ \forall s,t\in\cS$, where $\mu:\cS\times\cS\to\RR$ is called label compatibility function, and $k$ is a Gaussian kernel over image features based on pixel coordinates and colors. The setup of our models are similar to \citet{zheng2015conditional}. We use the Potts compatibility function: $\mu(s,t) = w\mathbbm{1}_{[s\neq t]}$ with $w=1$ for the inference experiments in \S\ref{sec:exp:inference}, and also for CRF initialization in the learning experiments in \S\ref{sec:learning-performance}. For all experiments, a fully-trained CNN is needed. We follow closely the published recipes~\cite{chen2017rethinking,chen2018encoder} for this task. We first pretrain DeepLabv3 and DeepLabv3+ on the COCO dataset~\cite{lin2014microsoft} and then finetune them on \voc (\emph{trainaug}) and Cityscapes (\emph{train}) to obtain similar results to previous work \cite{chen2017rethinking,chen2018encoder} (Table~\ref{tab:miou-potts}, CNN column). Finally, we perform experiments on two popular datasets: (augmented) PASCAL VOC~\cite{everingham2010pascal} and Cityscapes~\cite{cordts2016cityscapes}. Further details are given in Appendix~\ref{sec:experimental-setup-detailed}. %

\subsection{Inference performance}\label{sec:exp:inference}

In this section, we compare the performance of regularized Frank-Wolfe against the competing methods in terms of inference. We consider a Potts CRF on top of a CNN, which is the typical setup for using dense CRF in post-processing. Figure~\ref{fig:energy:comparison} shows the \emph{discrete} energy per inference iteration for each method, averaged over the \num{1449} \emph{val} images of PASCAL VOC, using DeepLabv3+. One can observe that Frank-Wolfe variants completely outperform the other methods. In addition, regularized Frank-Wolfe outperforms all the other methods for a large range of $\lambda$, as shown in Figure~\ref{fig:energy:relative}.

	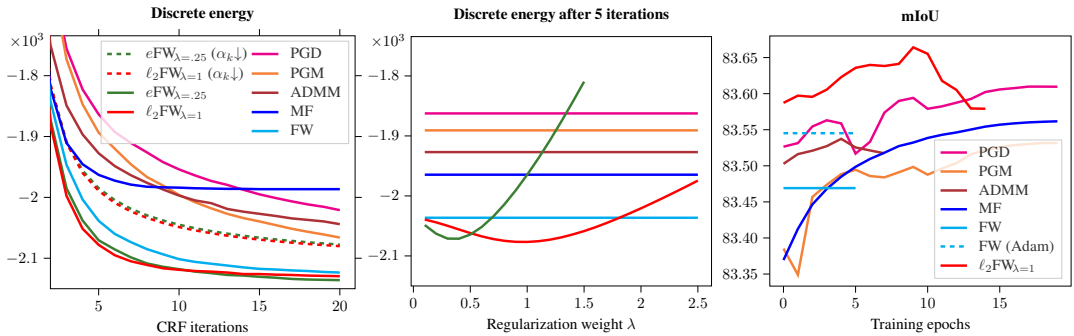
\begin{figure}[!tb]
		\addtolength{\leftskip}{-0.15cm}
		\addtolength{\rightskip}{-0.15cm}
	\begin{subfigure}[b]{0.33\linewidth}%
		\resizebox{1.05\linewidth}{!}{
			\begin{tikzpicture} %

\begin{axis}[
legend cell align={left},
legend style={fill opacity=0.8, draw opacity=1, text opacity=1, draw=white!80!black},
tick align=outside,
tick pos=both,
title={\textbf{Discrete energy}},
x grid style={white!69.0196078431373!black},
xlabel={CRF iterations},
xmin=2, xmax=20.95,
xtick style={color=black},
y grid style={white!69.0196078431373!black},
ymin=-2150.075, ymax=-1734.425,
scaled y ticks={real:1000},
ytick scale label code/.code={$\times 10^3$},
every y tick scale label/.style={at={(yticklabel cs:0.86)},yshift=12.pt,xshift=2pt,anchor=south west},
ytick style={color=black},
legend columns=5,
transpose legend
]

\addplot [ultra thick, efwcolor, dashed]
table {%
	0 0
	1 -1533.00983436853
	2 -1814.93081435473
	3 -1908.11180124224
	4 -1955.88388543823
	5 -1984.89820565908
	6 -2004.38405797101
	7 -2018.63354037267
	8 -2029.44979296066
	9 -2037.93823326432
	10 -2044.92701863354
	11 -2050.65804002761
	12 -2055.43167701863
	13 -2059.54158040028
	14 -2063.09644582471
	15 -2066.24654934438
	16 -2069.02432712215
	17 -2071.51293995859
	18 -2073.72550034507
	19 -2075.70755693582
	20 -2077.52933057281
};
\addlegendentry{\efw[\lambda=.25] ($\alpha_k$$\downarrow$)}

\addplot [ultra thick, lfwcolor, dashed]
table {%
	0 0
	1 -1533.00845410628
	2 -1813.994478951
	3 -1910.52708764665
	4 -1959.86438923395
	5 -1989.2380952381
	6 -2008.96652864044
	7 -2023.09213250518
	8 -2033.70203588682
	9 -2042.04606625259
	10 -2048.81504485852
	11 -2054.27795031056
	12 -2058.89527260179
	13 -2062.92028985507
	14 -2066.38388543823
	15 -2069.39130434783
	16 -2072.05503795721
	17 -2074.41597653554
	18 -2076.53088336784
	19 -2078.44151138716
	20 -2080.12491373361
};
\addlegendentry{\lfw[\lambda=1] ($\alpha_k$$\downarrow$)}

\addplot [ultra thick, efwcolor]
table {%
	1 -1534.5
	2 -1871.5
	3 -1985
	4 -2038
	5 -2070.5
	6 -2085.5
	7 -2098.5
	8 -2108.5
	9 -2114
	10 -2118
	11 -2122
	12 -2124.5
	13 -2127.5
	14 -2128.5
	15 -2130.5
	16 -2132
	17 -2134
	18 -2135
	19 -2135.5
	20 -2136
};
\addlegendentry{\efw[\lambda=.25]}

\addplot [ultra thick, lfwcolor]
table {%
	1 -1534.5
	2 -1880
	3 -1997.5
	4 -2051.5
	5 -2078
	6 -2095
	7 -2105
	8 -2111
	9 -2116.5
	10 -2119
	11 -2121
	12 -2122
	13 -2123.5
	14 -2126
	15 -2126
	16 -2127
	17 -2127.5
	18 -2128.5
	19 -2129
	20 -2129.5
};
\addlegendentry{\lfw[\lambda=1]}

\addlegendimage{empty legend}
\addlegendentry{}

\addplot [ultra thick, pgdcolor]
table {%
	1 -1311.5
	2 -1628
	3 -1755.5
	4 -1822.5
	5 -1863.5
	6 -1892.5
	7 -1910.5
	8 -1926
	9 -1942
	10 -1954
	11 -1964
	12 -1973
	13 -1979.5
	14 -1987.5
	15 -1995
	16 -2001
	17 -2006.5
	18 -2011.5
	19 -2015.5
	20 -2021
};
\addlegendentry{PGD}

\addplot [ultra thick, pgmcolor]
table {%
1 -1311.5
2 -1628
3 -1773
4 -1847
5 -1893.5
6 -1920
7 -1945.5
8 -1966.5
9 -1980.5
10 -1996
11 -2008
12 -2018
13 -2027
14 -2034
15 -2039.5
16 -2045.5
17 -2051.5
18 -2056.5
19 -2061
20 -2066
};
\addlegendentry{PGM}

\addplot [ultra thick, admmcolor]
table {%
	1 -1495
	2 -1749.5
	3 -1849
	4 -1896.5
	5 -1927.5
	6 -1948.5
	7 -1963.5
	8 -1976
	9 -1987
	10 -1997
	11 -2005
	12 -2009.5
	13 -2019
	14 -2023.5
	15 -2027
	16 -2030.5
	17 -2034.5
	18 -2037
	19 -2039.5
	20 -2044
};
\addlegendentry{ADMM}

\addplot [ultra thick, mfcolor]
table {%
1 -1534.5
2 -1820.5
3 -1910.5
4 -1945.5
5 -1963
6 -1972.5
7 -1979
8 -1982.5
9 -1983.5
10 -1984
11 -1985
12 -1985.5
13 -1986
14 -1986
15 -1986.5
16 -1986.5
17 -1986.5
18 -1986.5
19 -1986.5
20 -1986.5
};
\addlegendentry{MF}

\addplot [ultra thick, fwcolor]
table {%
	1 -1534.5
	2 -1841.5
	3 -1946.5
	4 -2003.5
	5 -2038.5
	6 -2060
	7 -2073.5
	8 -2085
	9 -2093.5
	10 -2101.5
	11 -2106
	12 -2109
	13 -2112
	14 -2115
	15 -2117.5
	16 -2118.5
	17 -2120
	18 -2121
	19 -2122.5
	20 -2123.5
};
\addlegendentry{FW}

\end{axis}

\end{tikzpicture}
		}
		\caption{\label{fig:energy:comparison}Discrete energy comparison.}
	\end{subfigure} %
	\begin{subfigure}[b]{0.33\linewidth}
		\resizebox{1.02\linewidth}{!}{
			\begin{tikzpicture}

\begin{axis}[
legend cell align={left},
legend style={fill opacity=0.8, draw opacity=1, text opacity=1, draw=white!80!black},
tick align=outside,
tick pos=left,
title={\textbf{Discrete energy after 5 iterations}},
x grid style={white!69.0196078431373!black},
xlabel={Regularization weight \(\displaystyle \lambda\)},
xmin=-0.02, xmax=2.62,
xtick style={color=black},
ymin=-2150.075, ymax=-1734.425,
scaled y ticks={real:1000},
ytick scale label code/.code={$\times 10^3$},
every y tick scale label/.style={at={(yticklabel cs:0.86)},yshift=12.pt,xshift=2pt,anchor=south west},
ytick style={color=black}
]

\addplot [ultra thick, pgmcolor]
table {%
0.1 -1890.73947550035
0.2 -1890.73947550035
0.3 -1890.73947550035
0.4 -1890.73947550035
0.5 -1890.73947550035
0.6 -1890.73947550035
0.7 -1890.73947550035
0.8 -1890.73947550035
0.9 -1890.73947550035
1 -1890.73947550035
1.1 -1890.73947550035
1.2 -1890.73947550035
1.3 -1890.73947550035
1.4 -1890.73947550035
1.5 -1890.73947550035
1.6 -1890.73947550035
1.7 -1890.73947550035
1.8 -1890.73947550035
1.9 -1890.73947550035
2 -1890.73947550035
2.1 -1890.73947550035
2.2 -1890.73947550035
2.3 -1890.73947550035
2.4 -1890.73947550035
2.5 -1890.73947550035
};

\addplot [ultra thick, mfcolor]
table {%
0.1 -1964.62525879917
0.2 -1964.62525879917
0.3 -1964.62525879917
0.4 -1964.62525879917
0.5 -1964.62525879917
0.6 -1964.62525879917
0.7 -1964.62525879917
0.8 -1964.62525879917
0.9 -1964.62525879917
1 -1964.62525879917
1.1 -1964.62525879917
1.2 -1964.62525879917
1.3 -1964.62525879917
1.4 -1964.62525879917
1.5 -1964.62525879917
1.6 -1964.62525879917
1.7 -1964.62525879917
1.8 -1964.62525879917
1.9 -1964.62525879917
2 -1964.62525879917
2.1 -1964.62525879917
2.2 -1964.62525879917
2.3 -1964.62525879917
2.4 -1964.62525879917
2.5 -1964.62525879917
};

\addplot [ultra thick, admmcolor]
table {%
0.1 -1926.95358868185
0.2 -1926.95358868185
0.3 -1926.95358868185
0.4 -1926.95358868185
0.5 -1926.95358868185
0.6 -1926.95358868185
0.7 -1926.95358868185
0.8 -1926.95358868185
0.9 -1926.95358868185
1 -1926.95358868185
1.1 -1926.95358868185
1.2 -1926.95358868185
1.3 -1926.95358868185
1.4 -1926.95358868185
1.5 -1926.95358868185
1.6 -1926.95358868185
1.7 -1926.95358868185
1.8 -1926.95358868185
1.9 -1926.95358868185
2 -1926.95358868185
2.1 -1926.95358868185
2.2 -1926.95358868185
2.3 -1926.95358868185
2.4 -1926.95358868185
2.5 -1926.95358868185
};

\addplot [ultra thick, pgdcolor]
table {%
0.1 -1862.48723257419
0.2 -1862.48723257419
0.3 -1862.48723257419
0.4 -1862.48723257419
0.5 -1862.48723257419
0.6 -1862.48723257419
0.7 -1862.48723257419
0.8 -1862.48723257419
0.9 -1862.48723257419
1 -1862.48723257419
1.1 -1862.48723257419
1.2 -1862.48723257419
1.3 -1862.48723257419
1.4 -1862.48723257419
1.5 -1862.48723257419
1.6 -1862.48723257419
1.7 -1862.48723257419
1.8 -1862.48723257419
1.9 -1862.48723257419
2 -1862.48723257419
2.1 -1862.48723257419
2.2 -1862.48723257419
2.3 -1862.48723257419
2.4 -1862.48723257419
2.5 -1862.48723257419
};

\addplot [ultra thick, fwcolor]
table {%
0.1 -2036.38008971705
0.2 -2036.38008971705
0.3 -2036.38008971705
0.4 -2036.38008971705
0.5 -2036.38008971705
0.6 -2036.38008971705
0.7 -2036.38008971705
0.8 -2036.38008971705
0.9 -2036.38008971705
1 -2036.38008971705
1.1 -2036.38008971705
1.2 -2036.38008971705
1.3 -2036.38008971705
1.4 -2036.38008971705
1.5 -2036.38008971705
1.6 -2036.38008971705
1.7 -2036.38008971705
1.8 -2036.38008971705
1.9 -2036.38008971705
2 -2036.38008971705
2.1 -2036.38008971705
2.2 -2036.38008971705
2.3 -2036.38008971705
2.4 -2036.38008971705
2.5 -2036.38008971705
};

\addplot [ultra thick, lfwcolor]
table {%
	0.1 -2039.2922705314
	0.2 -2044.06763285024
	0.3 -2049.78071083506
	0.4 -2055.88440303658
	0.5 -2061.90648723257
	0.6 -2067.36870255349
	0.7 -2071.81193926846
	0.8 -2074.8630089717
	0.9 -2076.52777777778
	1 -2076.6856452726
	1.1 -2075.47860593513
	1.2 -2072.9218426501
	1.3 -2069.11473429952
	1.4 -2064.23964803313
	1.5 -2058.61093857833
	1.6 -2052.14406487233
	1.7 -2044.97998619738
	1.8 -2037.25138026225
	1.9 -2029.11059351277
	2 -2020.68668046929
	2.1 -2011.8897515528
	2.2 -2002.85058661146
	2.3 -1993.56452726018
	2.4 -1984.13854382333
	2.5 -1974.4641131815
};

\addplot [ultra thick, efwcolor]
table {%
	0.1 -2048.60334713596
	0.2 -2062.93995859213
	0.3 -2071.13940648723
	0.4 -2071.32125603865
	0.5 -2064.39734299517
	0.6 -2051.82522429262
	0.7 -2034.71825396825
	0.8 -2014.00189786059
	0.9 -1990.58229813665
	1 -1964.6231884058
	1.1 -1936.92908902692
	1.2 -1907.31331953071
	1.3 -1875.9135610766
	1.4 -1843.2472394755
	1.5 -1809.56142167012
};
\end{axis}

\end{tikzpicture}
		}
		\caption{\label{fig:energy:relative} Effect of regularization weight.}
	\end{subfigure}%
	\begin{subfigure}[b]{0.33\linewidth}
		\resizebox{1.03\linewidth}{!}{
			\begin{tikzpicture}

\begin{axis}[
legend cell align={left},
legend style={fill opacity=0.8, draw opacity=1, text opacity=1, draw=white!80!black, , at={(0.97,0.03)}, anchor=south east},
tick align=outside,
tick pos=left,
title={\textbf{mIoU}},
x grid style={white!69.0196078431373!black},
xlabel={Training epochs},
xmin=-0.95, xmax=19.95,
xtick style={color=black},
y grid style={white!69.0196078431373!black},
ymin=83.3332800865173, ymax=83.6799359321594,
ytick style={color=black},
ytick={83.3,83.35,83.4,83.45,83.5,83.55,83.6,83.65,83.7},
yticklabels={$83.30$,$83.35$,$83.40$,$83.45$,$83.50$,$83.55$,$83.60$,$83.65$,$83.70$}
]

\addplot [ultra thick, pgdcolor]
table {%
0 83.526474237442
1 83.5314452648163
2 83.5544407367706
3 83.5630893707275
4 83.5588037967682
5 83.5169017314911
6 83.5331857204437
7 83.5736811161041
8 83.5898995399475
9 83.5941135883331
10 83.5790276527405
11 83.5824251174927
12 83.587509393692
13 83.5926413536072
14 83.6022615432739
15 83.605819940567
16 83.6076676845551
17 83.6097478866577
18 83.6097359657288
19 83.6096107959747
};
\addlegendentry{PGD}

\addplot [ultra thick, pgmcolor]
table {%
	0 83.3853244781494
	1 83.3490371704102
	2 83.4567785263062
	3 83.4737002849579
	4 83.4884464740753
	5 83.4945321083069
	6 83.4857702255249
	7 83.4839701652527
	8 83.4910094738007
	9 83.4984540939331
	10 83.4877550601959
	11 83.4961652755737
	12 83.5038244724274
	13 83.5159122943878
	14 83.5219740867615
	15 83.5258424282074
	16 83.5286557674408
	17 83.5301756858826
	18 83.5315704345703
	19 83.5314691066742
};
\addlegendentry{PGM}

\addplot [ultra thick, admmcolor]
table {%
	0 83.5027456283569
	1 83.5162043571472
	2 83.5209369659424
	3 83.5272371768951
	4 83.5373759269714
	5 83.525425195694
	6 83.5213184356689
	7 83.5173368453979
};
\addlegendentry{ADMM}

\addplot [ultra thick, mfcolor]
table {%
0 83.3693563938141
1 83.4126234054565
2 83.4463596343994
3 83.4679543972015
4 83.4848701953888
5 83.498603105545
6 83.5095286369324
7 83.5181295871735
8 83.5273206233978
9 83.5322439670563
10 83.538556098938
11 83.5429787635803
12 83.5462391376495
13 83.5502803325653
14 83.5543155670166
15 83.5569858551025
16 83.5588097572327
17 83.5601806640625
18 83.5610330104828
19 83.5616827011108
};
\addlegendentry{MF}

\addplot [ultra thick, fwcolor]
table {%
	0 83.4690988063812
	1 83.4690988063812
	2 83.4690988063812
	3 83.4690988063812
	4 83.4690988063812
	5 83.4690988063812
};
\addlegendentry{FW}

\addplot [ultra thick, dashed, fwcolor]
table {%
	0 83.5451781749725
	1 83.5451781749725
	2 83.5451781749725
	3 83.5451781749725
	4 83.5451781749725
	5 83.5451781749725
};
\addlegendentry{FW (Adam)}

\addplot [ultra thick, lfwcolor]
table {%
0 83.587247133255
1 83.5972189903259
2 83.5955679416656
3 83.6056053638458
4 83.6228549480438
5 83.6357653141022
6 83.6398005485535
7 83.6383104324341
8 83.6412370204926
9 83.6641788482666
10 83.655196428299
11 83.6177051067352
12 83.6051225662231
13 83.5794627666473
14 83.5790932178497
};
\addlegendentry{\lfw[\lambda=1]}

\end{axis}

\end{tikzpicture}
		}
		\caption{\label{fig:mIoU-val} Learning failure of vanilla FW.}
	\end{subfigure}%
	\caption{\label{fig:energy}\new{\textbf{Results on PASCAL VOC validation set} using DeepLabv3+ and Potts dense CRF. \textbf{(a)} Comparison between CRF solvers ($\alpha_k$$\downarrow$ means $\alpha_k = k/(k+2) \ \forall k$) shows that Frank-Wolfe variants clearly outperform the other methods in terms of energy minimization. \textbf{(b)} Performance of regularized Frank-Wolfe can be greatly affected by $\lambda$, but it can still achieve lower energies than the other methods for a large range of $\lambda$. \textbf{(c)} Vanilla Frank-Wolfe completely fails to learn because of the zero-gradient issue.}}
\end{figure}

Table~\ref{tab:miou-potts} shows the performance on the validation sets of \voc and Cityscapes, for a Potts CRF with both DeepLabv3 and DeepLabv3+ as backbone. In this experiment, we run all the methods for \num{10} iterations. One can observe that \lfw achieved the best performance, followed by \efw. 

\begin{wraptable}{r}{0.6\textwidth}
%	\vspace{-2.pt}
	\resizebox{\linewidth}{!}{
		\begin{tabular}{llcccccccccccc}
			\toprule
			& &	CNN & PGD & PGM & ADMM & MF & FW & \efw[.7] & \efw[.3] & \lfw\\ \toprule
			\parbox[t]{2mm}{\multirow{2}{*}{\rotatebox[origin=c]{90}{VOC}}}
			& DL3 & $81.83$ & $82.23$& $82.23$ & $82.22$ & $82.21$ & $82.27$ & $82.26$ & $\mathbf{82.29}$ & $\mathbf{82.29}$ \\\cmidrule{2-11}
			& DL3+ & $82.89$ & $83.36$ & $83.37$ & $83.38$ & $83.45$ &$83.43$ & $83.45$ & $83.48$  & $\mathbf{83.50}$\\\toprule
			\parbox[t]{2mm}{\multirow{2}{*}{\rotatebox[origin=c]{90}{CITY}}} & DL3 & $76.73$& $76.88$ & $76.86$ & $76.95$ & $76.97$ & $76.86$ & $76.99$ & $76.99$ & $\mathbf{77.03}$\\\cmidrule{2-11}
			& DL3+ & $79.55$& $79.64$ & $79.63$ & $\mathbf{79.66}$ & $79.63$ & $79.64$ & $79.65$ & $\mathbf{79.66}$ &$\mathbf{79.66}$\\
			\bottomrule
		\end{tabular}
	}
	\caption{\label{tab:miou-potts}\textbf{Validation mIoU using a Potts CRF} on top of the pre-trained CNN models. DL means DeepLab. %
	} \vspace{-0.2cm}
\end{wraptable}

We should note some inconsistency compared to the energy results previously presented in Figure~\ref{fig:energy:comparison}. For example, \efw achieved much lower energy than MF, yet the mIoU gap is marginal; also, FW accuracy is slightly worse than MF while the energy is much better (lower). This can be explained by the fact that the Potts model is not a perfect representation (\ie, lower energy in this model does not necessarily translate to higher accuracy). In the next section, we will see how the methods perform when the CRF parameters are learned from data.

\subsection{Learning performance}\label{sec:learning-performance}

In this section, we evaluate the performance of the methods for joint CNN-CRF end-to-end training. The CNN is initialized with its fully-trained weights on the corresponding dataset, and the CRF is initialized with the Potts model with random noise added. We train the model for $20$ epochs with $5$ CRF iterations,\footnote{\new{While we use the same number of iterations at \emph{test time} to simplify the evaluation protocol, it should be noted that using more iterations could be beneficial. See Appendix~\ref{sec:results-fine-grained} for some results.}} using the same poly schedule as before. As the CNN has been already fully-trained, we set its learning rate to a small value of $0.0001$. For the CRF, we tried \num{4} different values of initial learning rates $\eta_0\in\set{1.0, 0.1, 0.01, 0.001}$ and found that \num{1.0} is too high (training diverges quickly) while \num{0.001} is too low (slow progress) for all methods. For the remaining candidates $\set{0.1, 0.01}$, we perform $4$ additional trainings for each method (\ie, a total of $5$ runs for each configuration).

Let us summarize our findings. First, we observe that (vanilla) FW fails to learn.
This is illustrated in Figure~\ref{fig:mIoU-val}, where we show the validation accuracy per epoch on PASCAL VOC for each method: FW did not make any progress. We tried a different optimizer (Adam~\cite{kingma2014adam}) and obtained similar results. This is expected as the gradient in vanilla FW is zero almost everywhere, as previously discussed in \S\ref{sec:smoothing-perspective} (see also \S\ref{sec:non-diff-vanilla-FW}). Our second observation is that training is quite unstable for PGD, PGM, ADMM, \efw[.3], and \lfw. In particular, $\eta_0 = 0.1$ is still too high for these methods, and even with $\eta_0 = 0.01$, some of the runs produced bad results. By contrast, MF and \efw[.7] are stable for both learning rates, with $0.1$ being slightly better. A possible explanation is that PGD, PGM, ADMM, and \lfw all employ a simplex projection step that is not fully differentiable (but only so almost everywhere). For \efw[.3] (which is fully differentiable), we hypothesize that the low regularization makes the problem less ``smooth'', which may also harm gradient-based training. Finally, with the above training scheme, we observe that none of the CRF methods could improve over the CNN (but rather the opposite) on Cityscapes. We have seen that the Potts CRF was able to achieve some marginal improvements (Table~\ref{tab:miou-potts}), thus it is reasonable to expect even better performance with end-to-end training.

In view of the above observations, we present a simple trick to make CRF training more stable. The idea is to replace the CRF output $\x^*$ with $\frac{1}{2} (\x^* + \x\iter{0})$, where we recall that the initialization $\x\iter{0}$ is the $\softmax$ of the CNN logits. Intuitively, this adds a skip connection from the CNN to the CRF output in the computation graph, which makes the gradient of the loss propagate directly to the CNN. We found that this trick also slightly improves \efw[.7], but has a negative effect on MF. Therefore, it is applied to all methods except MF. Finally, as Cityscapes requires a very high number of epochs, we set this value to $100$. Also because training on Cityscapes requires a lot more computing resources, we only perform a single run on DeepLabv3+. The results are presented in Table~\ref{tab:miou-joint-learning}.

\begin{wraptable}{r}{0.6\textwidth}
	\resizebox{\linewidth}{!}{
		\begin{tabular}{llcccccccccc}
			\toprule
			& &	CNN & PGD & PGM & ADMM & MF & \efw[.7] & \efw[.3] & \lfw\\ \toprule
			\multirow{2}{*}{VOC} & DL3 & $81.83$ & \meanstd{83.69}{0.20} & \meanstd{\mathbf{83.75}}{0.23} & \meanstd{83.68}{0.06} & \meanstd{83.69}{0.10} & \meanstd{83.50}{0.10} & \meanstd{83.25}{0.20} & \meanstd{\mathbf{83.75}}{0.13} \\\cmidrule{2-10}
			& DL3+ & $82.89$ & \meanstd{84.82}{0.23} & \meanstd{84.79}{0.20} & \meanstd{84.83}{0.06} & \meanstd{84.87}{0.17} & \meanstd{84.64}{0.23} & \meanstd{84.50}{0.16} & \meanstd{\mathbf{85.14}}{0.09}\\\toprule
			CITY & DL3+ & $79.55$ & $79.80$ & $79.62$ & $79.62$ & $79.74$ & $79.70$ & $79.58$ & $\mathbf{79.95}$\\
			\bottomrule\\
		\end{tabular}
	} \vspace{-12pt}
	\caption{\label{tab:miou-joint-learning}\textbf{Validation mIoU under joint training}. For PASCAL VOC, we report the mean and standard deviation from $5$ runs. %
	}
\end{wraptable}

Again, \lfw consistently achieves the best results. Interestingly, while \efw[.3] achieved similar performance to \lfw in terms of energy minimization (Figure~\ref{fig:energy:comparison} and Table~\ref{tab:miou-potts}), its performance is worse in joint training. Compared to Table~\ref{tab:miou-potts}, we see that joint training produced much larger improvements over the CNNs, up to $2.25\%$ on PASCAL VOC and $0.4\%$ on Cityscapes. %

\begin{wraptable}{r}{0.4\textwidth}
	\centering
	\resizebox{1.0\linewidth}{!}{
		\begin{tabular}{l|c|c}
			\toprule
			Model & VOC & CITY\\ \midrule
			DeepLabv3+ \cite{chen2018encoder} & $87.8$ & $82.1$\\
			DeepLabv3+ (this work) & $87.6$ & $83.5$\\
			DeepLabv3+ with \lfw CRF & $\mathbf{88.0}$  & $\mathbf{83.6}$\\
			\bottomrule
		\end{tabular}
	}
	\caption{\label{tab:miou-test-sets}Performance on the \emph{test} sets. Submission URLs are given in Appendix~\ref{sec:submission-urls}.
	}
\end{wraptable}

\paragraph{Performance on the test sets} We select the best performing method (DeepLabv3+ with \lfw CRF) for evaluation on the test sets. For PASCAL VOC, we further train our model on the union of the \emph{train} and \emph{val} subsets for \num{50} epochs. For Cityscapes, we further train $200$ epochs on \emph{train} and \emph{train\_extra}, using the high-quality annotations provided by \citet{tao2020hierarchical} (for \emph{train\_extra}). At the $150\textsuperscript{th}$ epoch, we replace \emph{train\_extra} with \emph{val}. For this fine-tuning step, learning rates were set to $0.001$ for CNN and $0.1$ for CRF. For prediction, we apply test time augmentation including left-right flipping and multi-scales. For reference, we train DeepLabv3+ alone using the same recipes. Table~\ref{tab:miou-test-sets} shows that we were able to closely match the performance reported by~\citet{chen2018encoder}. Adding the \lfw CRF yields an improvement of $0.4$ points on PASCAL VOC. Unfortunately we only observe a marginal improvement on Cityscapes. \vspace{-3pt}

\begin{newtext}
	\subsection{Ablation studies}
	\paragraph{Trainable $\alpha_k$ and $\lambda$} It is possible to learn $\alpha_k$ and $\lambda$ from data by simply setting them to be \emph{trainable}. We carried out such an experiment with \lfw and \efw but did not observe significant improvements, though we should note that a more sophisticated training recipe (\eg, using custom learning rates for these variables) might lead to better results. Details are provided in Appendix~\ref{sec:results-trainable-alpha}. %
	
	\paragraph{Fine-grained analysis} We observe that CRF improved over CNN on most of the semantic classes. In particular, on \texttt{bicycle} (known to be the most challenging class of PASCAL VOC~\cite{chen2017rethinking}), \lfw and \efw achieved improvements of over $10\%$ absolute in mIoU. See Appendix~\ref{sec:results-fine-grained} for the details.
\end{newtext}

\section{Discussion \& conclusion}\label{sec:conclusion}

\paragraph{Why does it work?} Theoretically, all the methods in \S\ref{sec:experiments} should reach a stationary point, so how can one be better than another? In fact, Figure~\ref{fig:energy:comparison} only shows that Frank-Wolfe variants work better than the other methods \emph{in the first few iterations}, but not necessarily in a later stage. Indeed, the same conclusion no longer holds after $100$ iterations (see \S\ref{sec:inference-results-full}), \new{but this long regime is not practical because it would lead to vanishing/exploding gradients~\cite{zheng2015conditional} and to potentially prohibitive memory consumption.} %
As to why Frank-Wolfe achieves lower energy in the early stage, we hypothesize that this could be due to the discreteness of its iterates \eqref{eq:vanilla-Frank-Wolfe}. With small $\lambda$, the solution by regularized Frank-Wolfe should be close to the vanilla one, and thus also benefits from this property. It is important to note that the benefit of regularized Frank-Wolfe does not lie in the extra (sometimes small) energy improvement over vanilla Frank-Wolfe, but in its ability to seamlessly solve the zero-gradient issue. %

\paragraph{How to tune $\lambda$?} We found that similar curves to Figure~\ref{fig:energy:relative} can be obtained using a small random subset (\eg, $10$ samples) of the data, which suggests a quick way of tuning $\lambda$ by random subsampling. In practice, this step takes only a few seconds, which is negligible in most training scenarios. %

\paragraph{Limitations} While one variant of regularized Frank-Wolfe (\lfw) consistently achieves the best results, the difference compared to the other methods is sometimes small. In addition, the improvement of dense CRFs over CNNs is marginal on the Cityscapes test set. Nevertheless, we hope the encouraging results on PASCAL VOC could attract interest from the community in CRF research, potentially leading to creative ways of overcoming these limitations.

\begin{newtext}
	\paragraph{Societal impact} Semantic segmentation models can be used in surveillance systems, which might raise potential privacy concerns. Furthermore, the datasets that our models were trained on are known to present strong built-in bias~\cite{torralba2011unbiased}, thus they should be used with caution.
\end{newtext}

\begin{newtext}
\begin{ack}
	This work was supported in part by the ANR grant AVENUE (ANR-18-CE23-0011), and was partly done when the first author was affiliated with Manifold Perception (\href{https://mption.com}{\texttt{mption.com}}). The experiments were performed using HPC resources from GENCI-IDRIS (Grants 2020-AD011011321 and 2020-AD011011881). The authors thank the anonymous reviewers and meta-reviewer for their constructive feedback that helped improve the manuscript.
\end{ack}
\end{newtext}

{
	\small
	\bibliographystyle{plainnat}
	\bibliography{Attention,CRFs,MRFs,Optimization,DeepLearning}
	
}

\newpage
\appendix

\doparttoc %
\faketableofcontents %

\part{Appendices} %

\parttoc

\renewcommand\thesection{\Alph{section}}
\renewcommand\thesubsection{\thesection.\arabic{subsection}}
\setcounter{section}{0}%

\section{Details on special cases of regularized Frank-Wolfe inference}\label{sec:all-special-cases}

We have seen in \S\ref{sec:regularized-frank-wolfe} multiple instantiations of regularized Frank-Wolfe, leading to new algorithms for MAP inference, as well as recovering many existing ones. In this section we provide further details on this matter.

Recall the notation $n=\abs{\cV}, m = \abs{\cE},d=\abs{\cS}$, where $\cV,\cE$ and $\cS$ are the sets of nodes, edges, and labels, respectively.

\subsection{Algorithms based on QP relaxation with vanilla Frank-Wolfe}

\paragraph{Nonconvex vanilla Frank-Wolfe} This algorithm, previously studied by~\citet{lehuu2018continuous}, was already presented in~\S\ref{sec:frank-wolfe-for-map-inference}. It consists in applying vanilla Frank-Wolfe directly to the energy~\eqref{eq:energy-x}.

\paragraph{Convex vanilla Frank-Wolfe} This involves the \emph{convex} QP relaxation of MAP inference introduced by~\citet{ravikumar2006quadratic}. The idea is to add a sufficiently large vector $\c$ to the diagonal of $\P$ to make it positive semidefinite. If $\x\in\set{0,1}^{nd}$ then it is easy to check that $\x^\top\diag(\c)\x = \c^\top\x$ for any $\c\in\RR^{nd}$. Therefore, the (discrete) energy can be written as
\begin{equation}\label{eq:energy-convex}
	E(\x) = \frac{1}{2}\x^\top(2\diag(\c) + \P)\x + (\u-\c)^\top\x.
\end{equation}
It can be shown that the above function is convex if $\c$ is chosen as follows:
\begin{equation}\label{eq:c-convex-QP}
	c_{is} = \frac{1}{2}\sum_{j\in\cN_i}\sum_{t\in\cS}\theta_{ij}(s,t)\quad\forall i\in\cV,s\in\cS,
\end{equation}
where $\cN_i$ denotes the set of neighbors of node $i$. Applying vanilla Frank-Wolfe to minimizing the above convex energy over $\cX$ yields the algorithm presented in section 4 of~\citet{desmaison2016efficient}.

\subsection{Algorithms based on  LP relaxation with vanilla Frank-Wolfe}
Let us first present the LP relaxation of MAP inference. We use the same notation leading to the energy formulation~\eqref{eq:energy-xi}, namely the indicator variables $x_{is}\in\set{0,1}$, the indicator vectors $\x_i\in\set{0,1}^d$, and the potential vectors $\thetab_{i}\in\RR^d$ for all nodes $i\in\cV$ and labels $s\in\cS$. In addition, define for all edges $ij\in\cE$ and pairs of labels $(s,t)\in\cS^2$:
\begin{itemize}
	\item New pairwise indicator variables $x_{ijst} = x_{is}x_{jt}\in\set{0,1}$.
	\item New pairwise indicator vectors $\x_{ij} = (x_{ijst})_{s\in\cS,t\in\cS}\in\set{0,1}^{d^2}$.
	\item New pairwise potential vectors $\thetab_{ij} = (\theta_{ij}(s,t))_{s\in\cS,t\in\cS} \in\RR^{d^2}$, which can be viewed as the flatten version of the potential matrices $\Thetab_{ij}$ in~\eqref{eq:energy-xi}.
\end{itemize}
Then, the energy~\eqref{eq:energy-xi} can be rewritten as a linear function:
\begin{equation}\label{eq:energy-xi-lp}
	E_\lp(\x;\thetab) = \sum_{i\in\cV}\thetab_i^\top\x_i +  \sum_{ij\in\cE} \thetab_{ij}^\top\x_{ij},
\end{equation}
where by slight abuse of notation, we let $\x$ and $\thetab$ again denote the vectors of all variables and parameters, respectively. Note that $\x$ and $\thetab$ are now $(nd+md^2)$-dimensional vectors and not $nd$-dimensional as in~\eqref{eq:energy-xi}. The LP relaxation consists in minimizing $E_\lp$ over the following \emph{local polytope}~\cite{wainwright2005map}:
\begin{equation}\label{eq:set-X-lp}
	\cX_\lp = \set{\x \in\RR^{nd+md^2} \ \middle| \, 
		\begin{aligned}
			\x &\ge\0,\\
			\1^\top\x_i &= 1\ \forall i\in\cV,\\
			\sum_{t\in\cS}x_{ijst} &= x_{is} \ \forall ij\in\cE,\forall s\in\cS,\\
			\sum_{s\in\cS}x_{ijst} &= x_{jt} \ \forall ij\in\cE,\forall t\in\cS.
		\end{aligned}
	}.
\end{equation}
The last two constraints in the above (called \emph{local consistency}) can be written as $\A\x = \0$ for some $(2md)\times (nd+md^2)$ matrix $\A$. We can thus rewrite the LP relaxation compactly as:
\begin{equation}\label{eq:map-LP}
	\min E_\lp(\x;\thetab) \triangleq \thetab^\top\x, \quad \mbox{s.t.}\quad \x\in \cX_\lp \triangleq \set{\x \in\RR_+^{nd+md^2}: \1^\top\x_i = 1\ \forall i\in\cV, \A\x = \0}.
\end{equation}
As presented in \S\ref{sec:special-cases}, \citet{sontag2007new,meshi2015smooth,tang2016bethe} apply vanilla Frank-Wolfe to minimize a regularized LP energy:
\begin{equation}
	\min_{\x\in\cX_\lp} E_\lp(\x;\thetab) + r(\x)
\end{equation}
for some regularizer $r$. These works differ in the choice of $r$.
\paragraph{Local-consistency regularization} Choosing $r(\x) = \frac{\lambda}{2}\norm{\A\x}_2^2$ we obtain the algorithm presented by~\citet{meshi2015smooth} (which corresponds to the primal algorithm in the top-right cell of their Table 1).

\paragraph{Bethe and TRW entropic regularization} \citet{sontag2007new} also apply vanilla Frank-Wolfe to a regularized LP energy (corresponding to Step 3 in their Algorithm 1; note that we consider only the first outer iteration of their algorithm). They consider regularizers of the form
\begin{equation}
	r(\x) = -\tilde{H}(\x),
\end{equation}
where $\tilde{H}(\x)$ is some approximation to the entropy $H(\x)$ of the distribution over $\x$. 

Define the singleton entropy
\begin{equation}
	H(\x_i) = -\sum_{s\in\cS} x_{is}\log x_{is}\quad\forall i\in\cV,
\end{equation}
and the pairwise mutual information
\begin{equation}
	I(\x_{ij}) = \sum_{s\in\cS}\sum_{t\in\cS} x_{ijst}\log\frac{x_{ijst}}{x_{is}x_{jt}} = -H(\x_{ij}) + H(\x_i) + H(\x_j)\quad \forall ij\in\cE.
\end{equation}
The so-called Bethe approximation is defined as:
\begin{equation}
	\tilde{H}_{\mathrm{Bethe}}(\x) = \sum_{i\in\cV}H(\x_i) - \sum_{ij\in\cE}I(\x_{ij}).
\end{equation}
The second approximation considered by~\cite{sontag2007new} is called tree-reweighted (TRW) approximation. To achieve this, we decompose the the graph into a convex combination of spanning trees according to some distribution (over the trees), and let $\rho_{ij}$ be the so-called \emph{edge appearance probability}, which is computed as the number of spanning trees containing the edge $ij$ in the current decomposition, divided by the total number of all possible spanning trees containing $ij$ (in the entire distribution). The TRW approximation is then given by
\begin{equation}
	\tilde{H}_{\mathrm{TRW}}(\x) = \sum_{i\in\cV}H(\x_i) - \sum_{ij\in\cE}\rho_{ij}I(\x_{ij}).
\end{equation}

\paragraph{$\rhob$-reweighted entropic regularization} \citet{tang2016bethe} consider a more general term than the previous ones, based on the following approximation to $z \log z$ for $z\in[0,1]$, parameterized by $\eta\in [0,1]$:
\begin{equation}
	H_{\eta}(z) = \begin{cases}
		-z\log z&\mbox{if } z\in[\eta,1],\\
		-\eta\log \eta - (1+\log\eta)(z-\eta) - \frac{(z-\eta)^2}{2\eta}&\mbox{if } z\in[0,\eta].
	\end{cases}
\end{equation}
Define a similar version for vectors:
\begin{equation}
	H_{\eta}(\z) = \sum_{i=1}^p H_\eta(z_i)\quad \forall \z\in\RR^p.
\end{equation}
Their $\rhob$-reweighted approximation to the entropy $H(\x)$ is given by:
\begin{equation}
	\tilde{H}_{\eta}^{\rho}(\x) = \sum_{i\in\cV}  H_{\eta}(\x_i) - \sum_{ij\in\cE}\rho_{ij}\left[-H_{\eta}(\x_{ij}) + H_{\eta}(\x_i) + H_{\eta}(\x_j)\right]
\end{equation}
\citet{tang2016bethe} apply vanilla Frank-Wolfe to $E_\lp + r$ where $r = -\tilde{H}_{\eta}^{\rho}$. Note that their work consists in learning parameters of graphical models through maximum likelihood estimation. Here we only consider the inference part presented in their Section 3.2, which is used as a subroutine for learning.

\subsection{Algorithms based on the concave-convex procedure}
In the dense CRF model proposed by~\citet{krahenbuhl2011efficient}, the pairwise potentials consist of weighted sums of Gaussian kernels:
\begin{equation}
	\theta_{ij}(s,t) = \sum_{c=1}^C \mu^{(c)}(s,t)k^{(c)}(\f_i,\f_j)\quad \forall i,j\in\cV,\forall s,t\in\cS,
\end{equation}
where $C$ is the number of components, $\mu^{(c)}:\cS\times\cS \to \RR$ are the so-called label compatibility functions, and $k^{(c)}$ are Gaussian kernels over some image features $(\f_i,\f_j)$ (\S\ref{sec:model-details} presents a concrete example implemented for our experiments).

Define kernel matrices $\K^{(c)}\in\RR^{n\times n}$ with elements $K^{(c)}_{ij} = k(\f_i,\f_j)$ and compatibility matrices $\M^{(c)}\in\RR^{d\times d}$ with elements $M^{(c)}_{st} = \mu^{(c)}(s,t)$. Let
\begin{equation}
	\M = \sum_{c=1}^C\M^{(c)}.
\end{equation}
If we assume that $\K^{(c)}\in\RR^{n\times n}$ has unit diagonal: $K^{(c)}_{ii} = 1 \ \forall i, \forall c$, then our pairwise potential matrix $\P$ can be written as
\begin{equation}
	\P = \sum_{c=1}^{C}\left(\K^{(c)} - \I_n\right)\otimes \M^{(c)} = - \I_n \otimes \M + \sum_{c=1}^{C}\K^{(c)}\otimes \M^{(c)},
\end{equation}
where $\otimes$ denotes the Kronecker product, and $\I_n$ is the $n\times n$ identity matrix.

In the following, the concave-convex procedure (CCCP)~\cite{yuille2002concave} is applied to minimizing $f(\x) + g(\x)$ where $f$ is concave and $g$ is convex. (We integrate the constraint set $\cX$ into $g$ using its indicator function $\delta_\cX$, for consistency with our presentation of regularized Frank-Wolfe.)

\paragraph{Convergent mean field} \citet{krahenbuhl2013parameter} proposed (in their section 3.1) to minimizing a regularized energy $E(\x) + \x^\top\log\x$ (entropic regularizer) by applying CCCP to:
\begin{align}
	f(\x) &= \frac{1}{2}\x^\top (\P + \I_n\otimes\M) \x + \u^\top\x,\\
	g(\x)  &= -\frac{1}{2}\x^\top(\I_n\otimes \M)\x + \x^\top\log\x + \delta_\cX(\x).
\end{align}

\paragraph{Convergent mean field using concave approximation} \citet{krahenbuhl2013parameter} proposed (in their section 3.2) a more efficient algorithm using:
\begin{align}
	f(\x) &= \frac{1}{2}\x^\top(\P + \I_n\otimes\M)\x + \u^\top\x,\\
	g(\x)  &= \x^\top\log\x + \delta_\cX(\x)
\end{align}

\paragraph{CCCP for QP relaxation 1} \citet{desmaison2016efficient} proposed (in their section 5.1) the following application of CCCP:
\begin{align}
	f(\x) &= -\x^\top\diag(c)\x,\\
	g(\x) &= \frac{1}{2}\x^\top(2\diag(\c) + \P)\x + \u^\top\x + \delta_\cX(\x),
\end{align}
where $\c$ is defined by~\eqref{eq:c-convex-QP}.

\paragraph{CCCP for QP relaxation 2} Inspired by \citet{krahenbuhl2013parameter}, \citet{desmaison2016efficient} also proposed (in their section 5.2) another more efficient variant:
\begin{align}
	f(\x) &= \frac{1}{2}\x^\top (\P + \I_n\otimes\M) \x,\\
	g(\x)  &= -\frac{1}{2}\x^\top(\I_n\otimes \M)\x + \u^\top\x + \delta_\cX(\x).
\end{align}

\subsection{Summary of special cases}
We provide in Table~\ref{tab:summary-special-cases} a summary of special cases discussed in this section as well as in \S\ref{sec:instantiations} and \S\ref{sec:special-cases}. There we show how these algorithms can be obtained from regularized Frank-Wolfe by suitably choosing $f,g$ and $r$ in Algorithm~\ref{algo:regularized-FW}. Recall that $f + g = E + r +\delta_\cX$.

\begin{table*}[!htb]
	\centering
		\resizebox{\linewidth}{!}{
	\begin{tabular}{p{50mm}lr}
		\toprule
		\textbf{Algorithm} & $f(\x)$ & $g(\x) - \delta_\cX(\x)$  \\\midrule\midrule
		Parallel mean field \newline \citet{krahenbuhl2011efficient} & $E(\x)$ & $\x^\top\log\x$   \\\midrule
		Convergent mean field 1\newline
		\S3.1 in \citet{krahenbuhl2013parameter} & $E(\x) + \frac{1}{2}\x^\top\left(\I_n\otimes \M\right)\x$ & $- \frac{1}{2}\x^\top\left(\I_n\otimes \M\right)\x + \x^\top\log\x$   \\\midrule
		Convergent mean field 2\\
		\S3.2 in \citet{krahenbuhl2013parameter} & $E(\x) + \frac{1}{2}\x^\top\left(\I_n\otimes \M\right)\x$ & $\x^\top\log\x $   \\\midrule
		Nonconvex \fw\newline
		\citet{lehuu2018continuous} & $E(\x)$ & 0 \\\midrule
		Convex \fw \newline
		\S4 in \citet{desmaison2016efficient} & $E(\x) - \c^\top\x + \x^\top\diag(\c)\x$ & 0  \\\midrule
		CCCP for QP relaxation 1\newline
		\S5.1 in \citet{desmaison2016efficient} & $-\x^\top\diag(\c)\x$ & $E(\x) + \x^\top\diag(\c)\x$ \\\midrule
		CCCP for QP relaxation 2 \newline
		\S5.2 in \citet{desmaison2016efficient} & $E(\x) + \frac{1}{2}\x^\top\left(\I_n\otimes \M\right)\x - \u^\top\x$ & $-\frac{1}{2}\x^\top(\I_n\otimes \M)\x + \u^\top\x$ \\\midrule
		LP local-consistency Frank-Wolfe\newline
		\citet{meshi2015smooth} & $E_\lp(\x) + \frac{\lambda}{2}\norm{\A\x}_2^2$ & 0\\\midrule
		LP Bethe Frank-Wolfe\newline
		\citet{sontag2007new} & $E_\lp(\x) + \tilde{H}_{\mathrm{Bethe}}(\x)$ & 0\\\midrule
		LP TRW Frank-Wolfe\newline
		\citet{sontag2007new} & $E_\lp(\x) + \tilde{H}_{\mathrm{TRW}}(\x)$ & 0\\\midrule
		LP $\rhob$-reweighted Frank-Wolfe\newline
		\citet{tang2016bethe} & $E_\lp(\x) + \tilde{H}_{\eta}^{\rho}(\x)$ & 0\\\midrule
		Euclidean Frank-Wolfe\newline
		(This work) & $E(\x)$ & $\frac{\lambda}{2}\norm{\x}_2^2$\\\midrule
		Entropic Frank-Wolfe\newline
		(This work) & $E(\x)$ & $\lambda\x^\top \log\x$\\\midrule
		Lasso Frank-Wolfe\newline
		(This work, not implemented)& $E(\x)$ & $\lambda \norm{\x}_1$ \\
		\bottomrule
	\end{tabular}
	}
	\caption{\label{tab:summary-special-cases}Summary of special cases of regularized Frank-Wolfe.}
\end{table*}

\section{Detailed convergence analysis}\label{sec:convergence-analysis-full}
In this section, let $\norm{\cdot}$ denote the $\ell_2$ norm. The following lemma is useful for the proofs.

\begin{lemma}
	If $f$ and $g$ satisfy Assumption~\ref{assumption-f} and~\ref{assumption-g}, then
	\begin{align}
		f(\y) &\le f(\x) + \inner{\nabla f(\x), \y-\x} + \frac{L_f}{2}\norm{\y-\x}^2 \quad\forall \x,\y\in\dom f,\label{eq:f-inequality}\\
		g(\y) &\ge g(\x) + \inner{\d, \y-\x} + \frac{\sigma_g}{2}\norm{\y-\x}^2 \quad\forall \x,\y\in\dom g, \forall \d\in \partial g(\x).\label{eq:g-inequality}
	\end{align}
\end{lemma}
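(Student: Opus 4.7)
The plan is to reduce both inequalities to the standard first-order characterizations of concave and convex functions, by peeling off the quadratic terms implicit in the definitions of $L_f$-semi-concavity and $\sigma_g$-strong convexity. In both cases the algebra at the end is just the polarization identity $\tfrac{1}{2}\norm{\y}^2 - \tfrac{1}{2}\norm{\x}^2 - \inner{\x,\y-\x} = \tfrac{1}{2}\norm{\y-\x}^2$, so I will not grind through those calculations.

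For \eqref{eq:f-inequality}, I would introduce $\tilde f(\x) \triangleq f(\x) - \tfrac{L_f}{2}\norm{\x}^2$. By Assumption~\ref{assumption-f}, $\tilde f$ is differentiable and concave on the open convex set $\dom f$, with $\nabla \tilde f(\x) = \nabla f(\x) - L_f\x$. The textbook gradient inequality for differentiable concave functions then gives $\tilde f(\y) \le \tilde f(\x) + \inner{\nabla \tilde f(\x), \y - \x}$ for all $\x,\y \in \dom f$. Substituting the definitions of $\tilde f$ and $\nabla\tilde f$, the quadratic remainders $\tfrac{L_f}{2}(\norm{\y}^2 - \norm{\x}^2) - L_f\inner{\x, \y-\x}$ collapse into $\tfrac{L_f}{2}\norm{\y-\x}^2$ by polarization, yielding \eqref{eq:f-inequality}.

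For \eqref{eq:g-inequality}, I would mirror the argument with $\tilde g(\x) \triangleq g(\x) - \tfrac{\sigma_g}{2}\norm{\x}^2$, which by Assumption~\ref{assumption-g} is proper, closed, and convex. The standard subgradient inequality for convex functions yields $\tilde g(\y) \ge \tilde g(\x) + \inner{\tilde\d, \y-\x}$ for every $\tilde\d \in \partial \tilde g(\x)$. The one subtle step is translating a given $\d \in \partial g(\x)$ into a subgradient of $\tilde g$: since $g = \tilde g + \tfrac{\sigma_g}{2}\norm{\cdot}^2$ is the sum of a convex function and a finite-valued everywhere-differentiable convex function, the subdifferential sum rule applies without any constraint qualification and gives $\partial g(\x) = \partial \tilde g(\x) + \sigma_g\x$. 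Therefore $\d - \sigma_g\x \in \partial \tilde g(\x)$, and substituting this into the subgradient inequality for $\tilde g$ and again applying polarization produces \eqref{eq:g-inequality}.

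There is no real obstacle in this proof; it is a one-line reduction once the right auxiliary functions $\tilde f$ and $\tilde g$ are identified. The only point that warrants care is the subdifferential sum rule invoked for $\tilde g$, and here it is automatic because the quadratic part is smooth and real-valued. If desired, the concave gradient inequality used for $\tilde f$ can itself be justified in one line from the convexity characterization applied to $-\tilde f$, so the whole argument ultimately rests only on the standard first-order characterization of convex functions.
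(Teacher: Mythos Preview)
Your proof is correct and essentially identical to the paper's own argument: both define the auxiliary functions $f(\x)-\tfrac{L_f}{2}\norm{\x}^2$ and $g(\x)-\tfrac{\sigma_g}{2}\norm{\x}^2$, apply the first-order characterization of convexity (or concavity), and invoke the subdifferential sum rule $\partial g(\x)=\partial\tilde g(\x)+\sigma_g\x$ for the second inequality. The only cosmetic difference is that the paper phrases the $f$-step via the convex function $-\tilde f$ rather than the concave $\tilde f$, which you already note is equivalent.
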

\begin{proof}
	For a convex function $h$, we have
	\begin{equation}
		h(\y) \ge h(\x) + \inner{\d, \y-\x} \quad\forall \x,\y,\forall \d\in \partial h(\x).
	\end{equation}
	Applying the above inequality with, respectively, $h(\x) = -f(\x) + \frac{L_f}{2}\norm{\x}_2^2$ and $h(\x) = g(\x) - \frac{\sigma_g}{2}\norm{\x}_2^2$, we obtain \eqref{eq:f-inequality} and \eqref{eq:g-inequality}. (Note that for the second case, $h$ is convex and thus $\partial g(\x) = \partial h(\x) + \sigma_g \x$.)
\end{proof}

\subsection{Proof of Lemma~\ref{lemma:stationarity}}
First we show that
\begin{equation}\label{eq:s-inequality}
	S(\x) \ge \frac{\sigma_g}{2}\norm{\x - \p_\x}^2 \ \forall \x\in\dom f.
\end{equation}
Notice that
\begin{equation}
	\p_\x \in	\argmin_{\p} \set{ \inner{\nabla f(\x), \p} + g(\p) } \iff -\nabla f(\x) \in \partial g(\p_\x).
\end{equation}
Hence, applying \eqref{eq:g-inequality} we obtain
\begin{equation}
	g(\x) \ge g(\p_\x) + \inner{-\nabla f(\x), \x-\p_\x} + \frac{\sigma_g}{2}\norm{\x-\p_\x}^2,
\end{equation}
which is precisely \eqref{eq:s-inequality}.

To complete the proof, we need to show that $S(\x^*) = 0$ if and only if $\x^*$ is a stationary point of \eqref{eq:composite}, \ie, $-\nabla f(\x^*) \in \partial g(\x^*)$. The following is due to \citet{beck2017first}. Notice that
\begin{equation}
	S(\x) = \max_{\p}\set{\inner{\nabla f(\x), \x-\p} + g(\x) - g(\p)},
\end{equation}
we have
\begin{align}
	S(\x^*) = 0
	\iff S(\x^*) \le 0
	&\iff \inner{\nabla f(\x^*), \x^*-\p} + g(\x^*) - g(\p) \le 0 \ \forall \p\\
	&\iff g(\p) \ge g(\x^*) + \inner{-\nabla f(\x^*), \p-\x^*} \ \forall \p\\
	&\iff -\nabla f(\x^*) \in \partial g(\x^*).
\end{align}
The proof is completed.

\subsection{Proof of Theorem~\ref{theorem:convergence}}

We need an additional lemma.
\begin{lemma}\label{lemma:fundamental-lemma}
	For any $\x\in\dom f$ and any $\alpha\in [0,1]$ we have
	\begin{equation}\label{eq:fundamental-lemma}
		F(\x + \alpha(\p_\x - \x)) -F(\x) \le -\alpha S(\x) + K(\alpha)\norm{\p_\x - \x}^2,
	\end{equation}
	where $K(\alpha) = \frac{1}{2}\left[(L_f+\sigma_g)\alpha^2 - \sigma_g \alpha\right]$.
\end{lemma}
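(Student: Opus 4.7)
The plan is to bound the change $F(\x_\alpha) - F(\x)$ where $\x_\alpha = \x + \alpha(\p_\x - \x) = (1-\alpha)\x + \alpha \p_\x$ by handling $f$ and $g$ separately and then combining the estimates so that the cross term collapses to $-\alpha S(\x)$.

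First, I would apply the upper bound \eqref{eq:f-inequality} on $f$ at $\x_\alpha$ (using $\y = \x_\alpha$, $\x = \x$), noting that $\x_\alpha - \x = \alpha(\p_\x - \x)$. This yields
\begin{equation*}
f(\x_\alpha) - f(\x) \;\le\; \alpha\,\inner{\nabla f(\x),\,\p_\x - \x} + \tfrac{L_f \alpha^2}{2}\norm{\p_\x - \x}^2.
\end{equation*}

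Next, for $g$ I would exploit $\sigma_g$-strong convexity along the segment $[\x, \p_\x]$. Since $\x_\alpha$ is a convex combination of $\x$ and $\p_\x$, the standard strong-convexity inequality for convex combinations gives
\begin{equation*}
g(\x_\alpha) \;\le\; (1-\alpha) g(\x) + \alpha\, g(\p_\x) - \tfrac{\sigma_g}{2}\alpha(1-\alpha)\norm{\p_\x - \x}^2,
\end{equation*}
hence $g(\x_\alpha) - g(\x) \le \alpha\bigl(g(\p_\x) - g(\x)\bigr) - \tfrac{\sigma_g}{2}\alpha(1-\alpha)\norm{\p_\x - \x}^2$.

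Adding the two inequalities, the linear-in-$\alpha$ part becomes exactly $\alpha\bigl[\inner{\nabla f(\x),\,\p_\x - \x} + g(\p_\x) - g(\x)\bigr] = -\alpha S(\x)$ by the definition of $S(\x)$ in \eqref{eq:conditional-gradient-norm}. Collecting the squared-norm coefficients gives $\tfrac{1}{2}[L_f\alpha^2 - \sigma_g\alpha(1-\alpha)]\norm{\p_\x - \x}^2 = K(\alpha)\norm{\p_\x - \x}^2$, which is the stated bound.

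No real obstacle is expected: the only mild subtlety is that the strong convexity of $g$ must be used in its ``convex combination'' form (rather than the subgradient form \eqref{eq:g-inequality} at $\x$, which would not produce the needed $\sigma_g \alpha^2$ term with the correct sign); using \eqref{eq:g-inequality} at $\x_\alpha$ along the two directions toward $\x$ and $\p_\x$ and taking the convex combination is an equivalent route. Also, one should note that $\x_\alpha \in \dom g \subseteq \dom f$ for $\alpha\in[0,1]$ by convexity of $\dom g$ (Assumption~\ref{assumption-g}), so both inequalities are valid at $\x_\alpha$.
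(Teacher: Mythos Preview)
Your proposal is correct and matches the paper's proof essentially step for step: the paper also applies \eqref{eq:f-inequality} to bound $f(\x_\alpha)-f(\x)$, uses the convex-combination form of $\sigma_g$-strong convexity to bound $g(\x_\alpha)$, and sums the two to obtain \eqref{eq:fundamental-lemma}. Your remark about needing the convex-combination form of strong convexity (rather than \eqref{eq:g-inequality} at $\x$) is exactly the right observation.
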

\begin{proof}
	On one hand, from \eqref{eq:f-inequality} we have
	\begin{equation}
		f(\x + \alpha(\p_\x - \x)) \le  f(\x) + \alpha\inner{\nabla f(\x), \p_\x - \x} + \frac{L_f\alpha^2}{2}\norm{\p_\x - \x}^2.
	\end{equation}	
	On the other hand, from the $\sigma_g$-strong-convexity of $g$:
	\begin{equation}
		g(\x + \alpha(\p_\x - \x)) \le (1-\alpha)g(\x) + \alpha g(\p_\x) - \frac{\sigma_g \alpha(1-\alpha)}{2} \norm{\p_\x - \x}^2.
	\end{equation}	
	Summing up the above two inequalities, we obtain \eqref{eq:fundamental-lemma}.
\end{proof}
Let $S_k = S(\x\iter{k})$, $r_k = \norm{\p\iter{k} - \x\iter{k}}^2$, and $F_k = F(\x\iter{k})$. Applying \eqref{eq:fundamental-lemma} we have
\begin{equation}\label{eq:decrease-raw}
	\boxed{F_{k} - F_{k+1} \ge \alpha_k S_k - K(\alpha_k)r_k.}
\end{equation}
Therefore,
\begin{equation}
	\Delta_0 = F_0 - F^* \ge F_0 - F_{k+1} = \sum_{i=0}^k (F_i - F_{i+1}) \ge S\sum_{i=0}^k \alpha_i - \sum_{i=0}^k r_iK(\alpha_i),
\end{equation}
which implies
\begin{equation}\label{eq:bound-on-S}
	S \le \frac{\Delta_0 + \sum_{i=0}^k r_i K(\alpha_i)}{\sum_{i=0}^k \alpha_i}.
\end{equation}
This is an important inequality that will help us to obtain the convergence results for the weak stepsize schemes such as constant and non-summable ones.

For the adaptive (and line-search) stepsizes, the following observations will be useful. Notice that the RHS of \eqref{eq:fundamental-lemma} can be written as $\frac{1}{2}at^2 - bt$ where
\begin{equation*}
	a = (L_f + \sigma_g)\norm{\p_\x - \x}^2,\quad	b = S(\x) + \frac{\sigma_g}{2}\norm{\p_\x - \x}^2.
\end{equation*}
Therefore:
\begin{itemize}
	\item If $L_f = \sigma_g = 0$ then the RHS is just $-tS(\x)$.
	\item If $L_f = 0$ then the RHS is $-tS(\x) - \frac{\sigma_g}{2}t(1-t)\norm{\p_\x - \x}^2 \le -tS(\x)$.
	\item If $L_f + \sigma_g > 0$ then the minimum of the RHS is $$-\frac{b^2}{2a} = -\frac{\norm{\p_\x - \x}^2}{2(L_f + \sigma_g)}\left(\frac{S(\x)}{\norm{\p_\x - \x}^2} + \frac{\sigma_g}{2}\right)^2,$$ achieved at $$t^* = \frac{b}{a} = \frac{1}{L_f + \sigma_g}\left(\frac{S(\x)}{\norm{\p_\x - \x}^2} + \frac{\sigma_g}{2}\right).$$
\end{itemize}
The adaptive stepsize \eqref{eq:stepsize-adaptive-linesearch} are defined for the case $L_f + \sigma_g > 0$ is thus: 
\begin{equation}
	\alpha_k = \min\left\{1, \alpha_k^* \right\}, \quad \mbox{where } \alpha_k^* = \frac{1}{L_f + \sigma_g}\left(\frac{S_k}{r_k} + \frac{\sigma_g}{2}\right).\label{eq:stepsize-adaptive-short}
\end{equation}
Notice that the RHS of~\eqref{eq:decrease-raw} is a quadratic function of $\alpha_k$ with critical point $\alpha_k^*$. If $\alpha_k^* \le 1$, or equivalently $\frac{\sigma_g}{2} + L_f \ge \frac{S_k}{r_k}$, then $\alpha_k = \alpha_k^*$ and thus \eqref{eq:decrease-raw} becomes
\begin{equation}
	F_{k} - F_{k+1} \ge \alpha_k^* S_k - K(\alpha_k^*)r_k
	= \frac{r_k}{2(L_f + \sigma_g)}\left(\frac{S_k}{r_k} + \frac{\sigma_g}{2}\right)^2.\label{eq:toto}
\end{equation}
If $\alpha_k^* > 1$, or equivalently $\frac{\sigma_g}{2} + L_f < \frac{S_k}{r_k}$,  then $\alpha_k = 1$ and thus \eqref{eq:decrease-raw} becomes
\begin{equation}\label{eq:tata}
	F_{k} - F_{k+1} \ge S_k - K(1)r_k = S_k - \frac{L_f}{2}r_k.
\end{equation}

For simplicity and clarity, we will consider separately the two cases: $g$ is strongly convex ($\sigma_g > 0$) or simply convex ($\sigma_g = 0$). Recall that $\Omega$ is the (finite) diameter of $\dom g$, and thus we have $r_k\le \Omega^2 \ \forall k$, a fact that we will be using repeatedly in the sequel. Let $S = \min_{0\le i\le k} S_i$.

\subsubsection{Convex $g$}
In this section we consider the case where $g$ is convex but not strongly convex, \ie, $\sigma_g = 0$. Inequality~\eqref{eq:bound-on-S} becomes
\begin{equation}\label{eq:bound-on-S-convex}
	S \le \frac{\Delta_0 + \frac{L_f}{2}\sum_{i=0}^k r_i \alpha_i^2}{\sum_{i=0}^k \alpha_i}.
\end{equation}

\paragraph{Constant stepsize} Consider $\alpha_k = \alpha > 0 \ \forall k$. From $r_k\le \Omega^2$ and~\eqref{eq:bound-on-S-convex} we obtain
\begin{equation}
	S \le \frac{\Delta_0}{(k+1)\alpha} + \frac{L_f\Omega^2\alpha}{2}.
\end{equation}
The right-hand side converges to $\frac{1}{2}L_f\Omega^2\alpha$ as $k\to\infty$, \ie, the lower-bound $S$ on the conditional gradient norm converges to within $\frac{1}{2}L_f\Omega^2\alpha$. It is easy to deduce from the last inequality that $S\le L_f\Omega^2\alpha$ within $k \le \frac{2\Delta_0}{L_f\Omega^2\alpha^2}$ steps. We conclude that the algorithm converges to an approximate stationary point for the constant stepsize.

\paragraph{Constant step length} For a constant step length: $\norm{\x\iter{k+1} - \x\iter{k}} = \alpha$. Recall that $\x\iter{k+1} = \x\iter{k} + \alpha_k(\p\iter{k} - \x\iter{k})$, the corresponding stepsize is thus $\alpha_k = \frac{\alpha}{\norm{\p\iter{k} - \x\iter{k}}} = \frac{\alpha}{\sqrt{r_k}}$. 
Inequality~\eqref{eq:bound-on-S-convex} becomes
\begin{equation}\label{eq:bound-on-S-convex-constant-step-length}
	S \le \frac{\Delta_0 + \frac{L_f}{2} (k+1)\alpha^2}{\alpha\sum_{i=0}^k \frac{1}{\sqrt{r_k}}} \le \frac{\Delta_0 + \frac{L_f}{2} (k+1)\alpha^2}{\alpha \frac{k+1}{\Omega}} = \frac{\Delta_0\Omega}{(k+1)\alpha} + \frac{L_f\Omega\alpha}{2}.
\end{equation}
Therefore, $S$ converges to within $\frac{L_f\Omega\alpha}{2}$, and $S\le L_f\Omega \alpha$ within $k \le \frac{2\Delta_0}{L_f\alpha^2}$ steps. We conclude that the algorithm converges to an approximate stationary point for the stepsizes with constant step length.

\paragraph{Non-summable but square-summable stepsizes} Assume that the stepsizes $\alpha_k$ satisfy
\begin{equation}
	\sum_{k=0}^{+\infty} \alpha_k = \infty,\qquad \sum_{k=0}^{+\infty} \alpha_k^2 < \infty.
\end{equation}
A typical example is $\alpha_k = \frac{\alpha}{k + \beta}$, where $\alpha > 0$ and $\beta \ge 0$. This includes the common Frank-Wolfe stepsize $\alpha_k = \frac{2}{k+2}$. From $r_k\le \Omega^2$ and~\eqref{eq:bound-on-S-convex} we obtain
\begin{equation}\label{eq:bound-on-S-Delta}
	S \le \frac{\Delta_0 + \frac{L_f\Omega^2}{2}\sum_{i=0}^k \alpha_i^2}{\sum_{i=0}^k \alpha_i},
\end{equation}
which clearly converges to $0$ as $k\to\infty$. Therefore, the algorithm is guaranteed to converge to a stationary point in this case.

\paragraph{Diminishing (and non-summable) stepsizes} Assume that the stepsizes $\alpha_k$ satisfy
\begin{equation}
	\sum_{k=0}^{+\infty} \alpha_k = \infty,\qquad \lim_{k\to\infty} \alpha_k = 0.
\end{equation}
A typical example is $\alpha_k = \frac{\alpha}{\sqrt{k}}$, where $\alpha > 0$. Notice that for any $\epsilon > 0$, we have $\alpha_i^2 < \epsilon\alpha_i$ with $i$ large enough, it is straightforward to show that $\frac{\sum_{i=0}^k \alpha_i^2}{\sum_{i=0}^k \alpha_i} \to 0$ as $k\to\infty$, and thus \eqref{eq:bound-on-S-Delta} implies that $S\to 0$ as well. We conclude that the algorithm is guaranteed to converge to a stationary point.

\paragraph{Adaptive stepsizes} This result was obtained previously by \citet{beck2017first}. These stepsizes are computed according to~\eqref{eq:stepsize-adaptive-short}, which can be simplified as the following for $\sigma_g = 0$:
\begin{equation}
	\alpha_k = \min\left\{1, \alpha_k^* \right\}, \quad \mbox{where } \alpha_k^* = \frac{S_k}{L_fr_k}.\label{eq:stepsize-adaptive-short-convex}
\end{equation}
Then, if $\alpha_k^* \le 1$, \eqref{eq:toto} yields
\begin{align}
	F_{k} - F_{k+1} \ge \frac{S_k^2}{2L_f r_k} \ge \frac{S_k^2}{2L_f \Omega^2}.\label{eq:toto-convex}
\end{align}
If $\alpha_k^* > 1$ then \eqref{eq:tata} and $L_f r_k < S_k$ yield
\begin{equation}\label{eq:tata-convex}
	F_{k} - F_{k+1} \ge S_k - \frac{L_f}{2}r_k \ge \frac{S_k}{2}.
\end{equation}
Combining the two cases, we obtain 
\begin{equation}
	F_{k} - F_{k+1} \ge \frac{S_k}{2}\min\left\{1, \frac{S_k}{L_f\Omega^2}\right\} \ge \frac{S}{2}\min\left\{1, \frac{S}{L_f\Omega^2}\right\}.
\end{equation}
Therefore,
\begin{equation}
	\Delta_0 = F_0 - F^* \ge F_0 - F_{k+1} = \sum_{i=0}^k (F_i - F_{i+1}) \ge (k+1) \frac{S}{2}\min\left\{1, \frac{S}{L_f\Omega^2}\right\},
\end{equation}
which yields 
\begin{equation}
	S \le \max\left\{ \frac{2\Delta_0}{k+1} , \frac{\sqrt{2L_f\Omega^2\Delta_0}}{\sqrt{k+1}} \right\}.
\end{equation}
Therefore, the algorithm is guaranteed to converge to a stationary point, and the rate of convergence convergence is at least $\cO(1/\sqrt{k})$.

\subsubsection{Strongly-convex $g$} In this section we consider the case where $g$ is strongly convex with parameter $\sigma_g > 0$. 

Recall from \eqref{eq:decrease-raw} and Lemma~\ref{lemma:fundamental-lemma} that
\begin{equation}\label{eq:decrease-strongly-convex}
	F_{k} - F_{k+1} \ge \alpha_k S_k - K(\alpha_k)r_k \ \forall k \ge 0,\quad \mbox{where } K(\alpha) = \frac{1}{2}\alpha\left[(L_f+\sigma_g)\alpha - \sigma_g \right].
\end{equation}
Thus if $\alpha_k \le \frac{\sigma_g}{L_f + \sigma_g}$ we have $K(\alpha_k) \le 0$ and \eqref{eq:decrease-strongly-convex} yields
\begin{equation}\label{eq:decrease-strongly-convex-small}
	F_{k} - F_{k+1} \ge \alpha_k S_k.
\end{equation}
Consider now the case $\alpha_k > \frac{\sigma_g}{L_f + \sigma_g}$ for which $K(\alpha_k) > 0$. From~\eqref{eq:s-inequality} we have $r_k\le \frac{2S_k}{\sigma_g}$, and thus \eqref{eq:decrease-strongly-convex} yields
\begin{equation}\label{eq:decrease-strongly-convex-large}
	F_{k} - F_{k+1} \ge \alpha_k S_k - K(\alpha_k)\frac{2S_k}{\sigma_g} = \left(\alpha_k - \frac{2K(\alpha_k)}{\sigma_g}\right)S_k = \alpha_k \left(2 - \frac{L_f + \sigma_g}{\sigma_g}\alpha_k\right)S_k.
\end{equation}
Combining the two cases, we obtain
\begin{equation}\label{eq:decrease-strongly-convex-combined}
	F_{k} - F_{k+1} \ge \alpha_k\min\left(1, 2 - \frac{L_f + \sigma_g}{\sigma_g}\alpha_k\right) S_k.
\end{equation}
If $\alpha_k \ge \frac{2\sigma_g}{L_f + \sigma_g}$ then the RHS of \eqref{eq:decrease-strongly-convex-combined} is non-positive, thus this inequality is not helpful. In this case, we can obtain another inequality from \eqref{eq:decrease-strongly-convex}, noticing that $r_k\le\Omega^2 \ \forall k$ and $K(\alpha_k) > 0$:
\begin{equation}\label{eq:decrease-strongly-convex-large2}
	F_{k} - F_{k+1} \ge \alpha_k S_k - K(\alpha_k)\Omega^2
\end{equation}

\paragraph{Constant stepsize} Assume that $\alpha_k = \alpha \ \forall k \ge 0$. If $0 < \alpha < \frac{2\sigma_g}{L_f + \sigma_g}$ then \eqref{eq:decrease-strongly-convex-combined} yields
\begin{equation}
	F_{k} - F_{k+1} \ge \alpha\min\left(1, 2 - \frac{L_f + \sigma_g}{\sigma_g}\alpha\right) S \ \forall k \ge 0.  
\end{equation}
Hence
\begin{equation}
	\Delta_0 \ge F_0 - F_{k+1} = \sum_{i=0}^{k} (F_i - F_{i+1}) \ge (k+1)\alpha\min\left(1, 2 - \frac{L_f + \sigma_g}{\sigma_g}\alpha\right)S.
\end{equation}
We obtain
\begin{equation}\label{eq:bound-S-constant}
	S \le \frac{\Delta_0}{\alpha\min \left(1, 2 - \frac{L_f + \sigma_g}{\sigma_g}\alpha\right)(k+1)} \quad\forall \alpha < \frac{2\sigma_g}{L_f + \sigma_g}.
\end{equation}
We conclude that the algorithm is guaranteed to converge to a stationary point with rate of convergence of (at least) $\cO(1/k)$ for any $0 < \alpha < \frac{2\sigma_g}{L_f + \sigma_g}$.

For the remaining case $\alpha \ge \frac{2\sigma_g}{L_f + \sigma_g}$, we will derive an upper bound for $S$. Applying \eqref{eq:decrease-strongly-convex-large2} we obtain
\begin{equation}
	\Delta_0 \ge \sum_{i=0}^{k} (F_i - F_{i+1}) \ge \alpha \sum_{i=0}^k S_i - (k+1)K(\alpha)\Omega^2 \ge (k+1)\alpha S - (k+1)K(\alpha)\Omega^2,
\end{equation}
which yields
\begin{equation}
	S \le \frac{\Delta_0}{\alpha(k+1)} + \frac{K(\alpha)\Omega^2}{\alpha} = \frac{\Delta_0}{\alpha(k+1)} +\frac{1}{2}\left[(L_f+\sigma_g)\alpha - \sigma_g \right]\Omega^2.
\end{equation}
Therefore, for $\alpha \ge \frac{2\sigma_g}{L_f + \sigma_g}$ the algorithm converges to an approximate stationary point at which the conditional gradient norm is bounded above by $\frac{1}{2}\left[(L_f+\sigma_g)\alpha - \sigma_g \right]\Omega^2$.

\paragraph{Constant step length} Consider the stepsize $\alpha_k = \frac{\alpha}{\norm{\p\iter{k} - \x\iter{k}}} = \frac{\alpha}{\sqrt{r_k}}$ for which $\norm{\x\iter{k+1} - \x\iter{k}} = \alpha$. Inequality \eqref{eq:decrease-strongly-convex} becomes
\begin{align} %
	F_{k} - F_{k+1} &\ge \frac{\alpha}{\sqrt{r_k}} S_k - K\left(\frac{\alpha}{\sqrt{r_k}}\right)r_k \\
	&= \frac{\alpha}{\sqrt{r_k}} S_k - \frac{1}{2} \left[(L_f + \sigma_g)\frac{\alpha^2}{r_k} - \sigma_g\frac{\alpha}{\sqrt{r_k}}\right]r_k \\
	&= \frac{\alpha}{\sqrt{r_k}} S_k + \frac{\sigma_g\alpha\sqrt{r_k}}{2} - \frac{1}{2}(L_f +\sigma_g)\alpha^2 \\
	&\ge 2\sqrt{\frac{\alpha}{\sqrt{r_k}} S_k \frac{\sigma_g\alpha\sqrt{r_k}}{2}} - \frac{1}{2}(L_f +\sigma_g)\alpha^2\\
	&=\alpha\sqrt{2\sigma_g S_k} - \frac{1}{2}(L_f +\sigma_g)\alpha^2.
\end{align}
It follows that
\begin{align}
	\Delta_0 &\ge (k+1)\alpha\sqrt{2\sigma_g S} - \frac{k+1}{2}(L_f +\sigma_g)\alpha^2\\
	\implies \sqrt{S} &\le \frac{\Delta_0}{\alpha\sqrt{2\sigma_g}(k+1)} + \frac{(L_f + \sigma_g)\alpha}{2\sqrt{2\sigma_g}}.
\end{align}
For this stepsize scheme, the algorithm converges to an approximate stationary point, within $\frac{(L_f+\sigma_g)^2\alpha^2}{8\sigma_g}$. One can observe that, even though the strong convexity of $g$ still cannot guarantee convergence to a stationary point, it helps improve the bound as well as the rate of convergence from~\eqref{eq:bound-on-S-convex-constant-step-length}.

\paragraph{Diminishing (and non-summable) stepsizes} Assume that the stepsizes $\alpha_k$ satisfy
\begin{equation}
	\sum_{k=0}^{+\infty} \alpha_k = \infty,\qquad \lim_{k\to\infty} \alpha_k = 0.
\end{equation}
This scheme also includes the non-summable but square-summable one. Since $\lim_{k\to\infty} \alpha_k = 0$, there exists an integer $k(\omega)$ such that $\alpha_k \le \omega = \frac{\sigma_g}{L_f + \sigma_g} \ \forall k \ge k(\omega)$. Now applying \eqref{eq:decrease-strongly-convex-small} 
\begin{equation}
	\Delta_{k(\omega)} \ge F_{k(\omega)} - F_{k+1} = \sum_{i=k(\omega)}^{k} (F_i - F_{i+1}) \ge \sum_{i=k(\omega)}^{k} \alpha_i S_i \ge \left(\sum_{i=k(\omega)}^{k} \alpha_i\right)S,
\end{equation}
which yields
\begin{equation}
	S \le \frac{\Delta_{k(\omega)}}{\sum_{i=k(\omega)}^{k} \alpha_i}.
\end{equation}
Since $(\alpha_k)$ is non-summable, the algorithm converges to a stationary point. Compared to the non-strongly-convex case, we observe that the assumption that $\dom g$ is compact can be relaxed (its diameter $\Omega$ is not used in the proof).

\paragraph{Adaptive stepsizes} Recall that the stepsizes in this scheme are given by \eqref{eq:stepsize-adaptive-short} as
\begin{equation}
	\alpha_k = \min\left\{1, \alpha_k^* \right\}, \quad \mbox{where } \alpha_k^* = \frac{1}{L_f + \sigma_g}\left(\frac{S_k}{r_k} + \frac{\sigma_g}{2}\right).
\end{equation}
If $\alpha_k^* \le 1$, from \eqref{eq:toto} and the inequality $(a+b)^2\ge 4ab$, we obtain
 \begin{equation}
 	F_{k} - F_{k+1}  \ge  \frac{r_k}{2(L_f + \sigma_g)}4\frac{S_k}{r_k} \frac{\sigma_g}{2} = \frac{\sigma_gS_k}{L_f + \sigma_g}.
 \end{equation}
 If $\alpha_k^* > 1$, which is $r_k < \frac{S_k}{\frac{\sigma_g}{2} + L_f}$, then \eqref{eq:tata} yields
 \begin{equation}
 	F_{k} - F_{k+1} \ge S_k - \frac{L_f}{2} \frac{S_k}{\frac{\sigma_g}{2} + L_f}
 	= \frac{\sigma_g + L_f}{\sigma_g + 2L_f}S_k
 	\ge \frac{\sigma_g}{\sigma_g + L_f}S_k.
 \end{equation}
Therefore, we always have $F_{k} - F_{k+1} \ge \omega S_k$ where $\omega = \frac{\sigma_g}{\sigma_g + L_f}$. It then follows that
\begin{equation}
	\Delta_0 \ge \sum_{i=0}^{k} (F_i - F_{i+1}) \ge \sum_{i=0}^{k} \omega S_k \ge (k+1)\omega S \implies S \le \frac{\Delta_0}{\omega(k+1)}.
\end{equation}
Finally, the line search scheme is guaranteed to achieve the best decrease in the objective, thus the inequality $F_{k} - F_{k+1} \ge \omega S_k$ also holds and we obtain the same results for this scheme.

\subsubsection{Concave $f$} The results for this case can be obtained in a straightforward manner by setting $L_f = 0$ in the ``convex $g$'' case. In particular, for the adaptive stepsizes, \eqref{eq:stepsize-adaptive-short-convex} yields $\alpha_k = 1$ and thus it becomes the constant stepsize scheme.

\subsubsection{Summary of convergence results}
We summarize the results in Table~\ref{tab:convergence-summary}.

\begin{table*}[!htb]
	\centering
		\begin{tabular}{l|c|c|c}
& stepsize & decrease lower bound $\delta_k$ & optimality upper bound $B_k$\\
& & ($F_k -F_{k+1} \ge \delta_k$) & ($\min_{0\le i \le k} S_i \le B_k$)\\\hline\hline
\multirow{4}{*}{convex $g$}	& $\alpha_k=\alpha > 0$ & $\alpha S_k - \frac{L_f\Omega^2\alpha^2}{2}$ & $\frac{\Delta_0}{(k+1)\alpha} + \frac{L_f\Omega^2\alpha}{2}$ \\\cline{2-4}
& $\alpha_k = \frac{\alpha}{\norm{\p\iter{k} - \x\iter{k}}}$ & $\frac{\alpha}{\Omega} S_k - \frac{L_f\alpha^2}{2}$ & $\frac{\Delta_0\Omega}{(k+1)\alpha} + \frac{L_f\Omega\alpha}{2}$  \\\cline{2-4}
& \makecell[c]{$\sum_{k=0}^{+\infty} \alpha_k = \infty$ \\ $\lim_{k\to\infty}\alpha_k = 0$ }  & $\alpha_k S_k - \frac{L_f\Omega^2\alpha_k^2}{2}$ & $\frac{\Delta_0}{\sum_{i=0}^k \alpha_i} + \frac{L_f\Omega^2}{2}\frac{\sum_{i=0}^k \alpha_i^2}{\sum_{i=0}^k \alpha_i}$  \\\cline{2-4}
& \makecell[c]{adaptive or \\ line search \eqref{eq:stepsize-adaptive-linesearch}} & $\frac{1}{2}\min\left( S_k, \frac{S_k^2}{L_f\Omega^2} \right)$ & $\max\left( \frac{2\Delta_0}{k+1} , \frac{\sqrt{2L_f\Omega^2\Delta_0}}{\sqrt{k+1}} \right)$  \\\hline\hline
\multirow{5}{*}{strongly-convex $g$} & $\alpha_k=\alpha < 2\omega$ & $\alpha\min\left(1, 2 - \frac{\alpha}{\omega}\right)S_k$ & $\frac{\Delta_0}{\alpha\min\left(1, 2 - \frac{\alpha}{\omega}\right)(k+1)}$  \\\cline{2-4}
& $\alpha_k = \alpha \ge 2\omega$ & $\alpha S_k - K(\alpha)\Omega^2$ & $\frac{\Delta_0}{\alpha(k+1)} +\frac{K(\alpha)}{\alpha}\Omega^2$  \\\cline{2-4}
& $\alpha_k = \frac{\alpha}{\norm{\p\iter{k} - \x\iter{k}}}$ & $\alpha\sqrt{2\sigma_g S_k} - \frac{1}{2}(L_f +\sigma_g)\alpha^2$ & $\Big(\frac{\Delta_0}{\alpha\sqrt{2\sigma_g}(k+1)} + \frac{(L_f + \sigma_g)\alpha}{2\sqrt{2\sigma_g}}\Big)^2$  \\\cline{2-4}
& \makecell[c]{$\sum_{k=0}^{+\infty} \alpha_k = \infty$ \\ $\lim_{k\to\infty}\alpha_k = 0$ } & $\alpha_k\min\left(1, 2 - \frac{\alpha_k}{\omega}\right)S_k$ & $\frac{\Delta_{k(\omega)}}{\sum_{i=k(\omega)}^{k} \alpha_i}$ \\\cline{2-4}
 & \makecell[c]{adaptive or \\ line search \eqref{eq:stepsize-adaptive-linesearch}}  & $\omega S_k$ & $\frac{\Delta_0}{\omega(k+1)}$  \\\hline\hline
\multirow{4}{*}{concave $f$} & $\alpha_k=\alpha > 0$ & $\alpha S_k$ & $\frac{\Delta_0}{(k+1)\alpha}$\\\cline{2-4}
 & $\alpha_k = \frac{\alpha}{\norm{\p\iter{k} - \x\iter{k}}}$& $\frac{\alpha}{\Omega} S_k$ & $\frac{\Delta_0\Omega}{(k+1)\alpha}$\\\cline{2-4}
 & $\sum_{k=0}^{+\infty} \alpha_k = \infty$ & $\alpha_k S_k$ & $\frac{\Delta_0}{\sum_{i=0}^k \alpha_i}$ \\\cline{2-4}
 & \makecell[c]{adaptive or \\ line search \eqref{eq:stepsize-adaptive-linesearch}}  & $S_k$ & $\frac{\Delta_0}{k+1}$ \\\bottomrule
	\end{tabular}
\caption{\label{tab:convergence-summary}Summary of convergence analysis of the generalized Frank-Wolfe algorithm. Recall that $\omega = \frac{\sigma_g}{L_f + \sigma_g}$. Whenever a result does not involve $\Omega$, the assumption that $\dom g$ being compact can be relaxed.}
\end{table*}

\subsubsection{Convergence of $S(\x\iter{k})$} To complete the proof of Theorem~\ref{theorem:convergence}, we need to show that for the non-highlighted cases of its table (page~\pageref{theorem:convergence}), we have $\lim_{k\to\infty}S(\x\iter{k}) = 0$ and any limit point of the sequence $(\x\iter{k})_{k\ge 0}$ is a stationary point of~\eqref{eq:composite}. Indeed, for these cases, $(F_k)_{k\ge 0}$ is a decreasing sequence because $F_k - F_{k+1}$ is bounded below by a non-negative quantity $\delta_k$ (according to Table~\ref{tab:convergence-summary} presented in the previous section). Therefore, $F_k$ is convergent as it is bounded below by $F^*$. Consequently $F_k - F_{k+1} \to 0$, which implies $\delta_k \to 0$ and thus $S_k \to 0$ as well for the considered cases (see Table~\ref{tab:convergence-summary}). The results follow in a straightforward manner.

\section{Proofs of other theoretical results}\label{sec:proofs}

\subsection{Vanilla Frank-Wolfe fails to learn: the zero-gradient issue }\label{sec:non-diff-vanilla-FW}

We claimed in \S\ref{sec:smoothing-perspective} that vanilla Frank-Wolfe \eqref{eq:vanilla-Frank-Wolfe} is problematic for learning with SGD because its iterates are piecewise-constant and thus their gradients are zero almost everywhere (more precisely the gradient is undefined on the boundaries while being zero everywhere else). In this section, we present a theoretical justification for this claim.

It suffices to show that $\p^* = \argmin_{\p\in\cX} \inner{\c, \p}$ is piecewise-constant with respect to $\c$. Let $\Delta_d$ denote the simplex $\set{\z\in\RR^d \mid \1^\top \z = 1,  \z\ge \0}$. Clearly, the set $\cX$ (defined by \eqref{eq:set-X}) can be written as $\set{\x\in\RR^{nd} \mid \x_i\in\Delta_d \ \forall i\in\cV}$, and thus the above minimization problem can be reduced to solving the following problem for each $i\in\cV$ independently:
\begin{equation}
	\p_i^* \in \argmin_{\p_i\in\Delta_d} \inner{\c_i, \p_i}.
\end{equation}
For notational convenience, consider the following problem with a constant vector $\b = (b_1,b_2,\dots,b_d)\in\RR^d$:
\begin{equation}\label{eq:argmin}
	\z^* \in \argmin_{\z\in\Delta_d} \inner{\b, \z}.
\end{equation}
Let $s^*$ be the index of the minimum element of $\b$, \ie,  $s^* = \argmin_{s} b_s$. Let $\e_s\in\set{0,1}^d$ denote the one-hot vector where the $s\textsuperscript{th}$ element is one. We have:
\begin{equation}
	\inner{\b,\z} = \sum_{s=1}^{d} b_s z_s \ge \sum_{s=1}^{d} b_{s^*}z_s = b_{s^*} \sum_{s=1}^{d} z_s = b_{s^*} = \inner{\b,\e_{s^*}} \quad \forall \z\in\Delta_d.
\end{equation}
Therefore, $\e_{s^*}$ is an optimal solution to \eqref{eq:argmin}. It is straightforward that the index of the minimum element of a vector is piecewise constant, thus $\e_{s^*}$ is also piecewise constant (as a function of $\b$). Therefore, $\e_{s^*}$ is not continuous (thus non-differentiable) on the boundaries, while in the constant regions, its gradient is zero. 

\begin{remark}
	We can deduce that the iteration complexity of vanilla Frank-Wolfe is $\cO(nd)$.
\end{remark}

\subsection{Proof of the relaxation tightness (Theorem~\ref{theorem:energy-additive-bound})}\label{sec:tightness-full}

We give a proof of Theorem~\ref{theorem:energy-additive-bound} in \S\ref{sec:tightness-analysis}. Recall that we have to prove $E^* \le E(\bar{\x}_r^*) \le E^* + M - m + C$, where 
\begin{equation}C =
	\begin{cases}
		\sqrt{n\left(1-\frac{1}{d}\right)}\left(\norm{\u}_2 + \sqrt{n}\norm{\P}_{2}\right) &\mbox{for nearest rounding}\\
		0&\mbox{for BCD rounding}.
	\end{cases}
\end{equation}

Let $\x^*$ be such that $E(\x^*) = E^*$ and consider first the BCD rounding scheme. As this scheme is guaranteed to not increase the energy, we have
	\begin{equation*}
		E(\bar{\x}_r^*) \le E(\x_r^*) = E_r(\x_r^*) - r(\x_r^*) \le E_r(\x^*) - r(\x_r^*)
		= E(\x^*) + r(\x^*) - r(\x_r^*) \le E^* + M - m.
	\end{equation*}

It remains to prove the result for the nearest rounding scheme. In this scheme, the discrete energy may increase (or decrease), but it can be shown that the variation is bounded by the given constant: $\abs{E(\bar{\x}_r^*) - E(\x_r^*)} \le C$. Then, the rest of the proof is similar to the BCD case. This bounding inequality is proved as follows.

 Suppose that we obtain a discrete solution $\y\in\cX\cap\set{0,1}^{nd}$ from some $\x\in\cX$ using nearest rounding. We will prove that
\begin{equation}\label{eq:rounding-bound-C}
	\abs{E(\x) - E(\y)} \le C,
\end{equation}
where
\begin{equation}
	C = \sqrt{n\left(1-\frac{1}{d}\right)}\left(\norm{\u}_2 + \sqrt{n}\norm{\P}_{2}\right).
\end{equation}

\begin{lemma}
	For any $\z\in \Delta_d$ (see \S\ref{sec:non-diff-vanilla-FW} for notation) and its rounded vector $\v \in\Delta_d\cap\set{0,1}^d$, \ie, $v_i = 1$ if $i=\argmax_{1\le j\le d} v_j$ and $v_j = 0 \ \forall j\neq i$, we have
	\begin{equation}\label{eq:rounding-bound-d}
		\norm{\z-\v}_2^2 \le 1-\frac{1}{d}.
	\end{equation}
\end{lemma}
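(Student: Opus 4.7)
The plan is to exploit the explicit formula for $\|\z-\v\|_2^2$ in terms of a single ``winning'' index, then use the defining property of nearest rounding (that the chosen coordinate is the largest) to bound the remaining sum of squares.

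First I would write out the norm explicitly. Let $i = \argmax_{1\le j \le d} z_j$, so $v_i = 1$ and $v_j = 0$ for $j\neq i$. Then
\begin{equation*}
	\norm{\z - \v}_2^2 = (1 - z_i)^2 + \sum_{j\neq i} z_j^2.
\end{equation*}
Two simplex facts will be used: (a) $\sum_{j\neq i} z_j = 1 - z_i$ since $\z\in\Delta_d$, and (b) $z_i \ge 1/d$, because $z_i$ is the maximum of $d$ nonnegative numbers summing to one.

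The key step is to control $\sum_{j\neq i} z_j^2$. Since $z_i$ is the maximum, $z_j \le z_i$ for every $j\neq i$, which lets us pull out one factor of $z_i$ and collapse the sum:
\begin{equation*}
	\sum_{j\neq i} z_j^2 \le z_i \sum_{j\neq i} z_j = z_i(1 - z_i).
\end{equation*}
Substituting back and factoring yields $\norm{\z - \v}_2^2 \le (1-z_i)^2 + z_i(1-z_i) = 1 - z_i$, and applying $z_i \ge 1/d$ gives the desired bound $1 - 1/d$.

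There is essentially no obstacle here once the inequality $z_j^2 \le z_i z_j$ is recognized; the only subtlety worth noting is tightness, since equality holds when $\z = (1/d,\dots,1/d)$ (the centroid), which confirms the bound cannot be improved. The lemma can then be used in the main proof of Theorem~\ref{theorem:energy-additive-bound} by summing $\|\x_i - \y_i\|_2^2 \le 1 - 1/d$ over $i\in\cV$ to obtain $\norm{\x - \y}_2 \le \sqrt{n(1-1/d)}$, from which $|E(\x) - E(\y)| \le C$ follows via Cauchy--Schwarz and the operator-norm inequality applied separately to the unary and pairwise terms of $E$.
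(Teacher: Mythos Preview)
Your proof is correct and follows essentially the same approach as the paper's: both reduce to showing $\|\z-\v\|_2^2 \le 1 - z_{\max}$ via the bound $z_j^2 \le z_{\max}\, z_j$ for $j\neq i$, then invoke $z_{\max}\ge 1/d$. The paper phrases the key inequality as ``$z_j + S \le 1$ multiplied by $z_j$ and summed'' (where $S = 1 - z_{\max}$), which is algebraically the same as your $z_j^2 \le z_i z_j$; your presentation is slightly more direct but the underlying argument is identical.
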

\begin{proof}
	Without loss of generality, assume that $z_1$ is the maximum element of $\z$. Then, we have $v_1 = 1$ and $v_j = 0 \ \forall j > 1$. 
	\begin{equation}
		\norm{\z-\v}_2^2 = \sum_{i=1}^{d}(z_i - v_i)^2 
		= (z_1-1)^2 + \sum_{i=1}^{d} z_i^2 
		= S^2 + z_2^2+\dots+z_d^2,\label{eq:titi}
	\end{equation}
	where $S = z_2+\dots+z_d$. We will make use of the following trivial inequality:
	\begin{equation}\label{eq:tete}
		z_i + S \le 1 \quad \forall i\ge 2.
	\end{equation}
	On one hand, summing the $d-1$ inequalities \eqref{eq:tete} (for $i=2,\dots,d$) we obtain
	\begin{equation}\label{eq:tutu}
		S + (d-1)S \le d-1 \implies S\le 1 - \frac{1}{d}.
	\end{equation}
	On the other hand, multiplying \eqref{eq:tete} with $z_i$ and summing up the obtained $d-1$ inequalities we get
	\begin{equation}\label{eq:tutu2}
		\sum_{i=2}^{d} z_i^2 + S^2 \le S
	\end{equation}
	Finally, from \eqref{eq:titi}, \eqref{eq:tutu}, and \eqref{eq:tutu2} we get \eqref{eq:rounding-bound-d}.
\end{proof}
Back to \eqref{eq:rounding-bound-C}. Applying \eqref{eq:rounding-bound-d} we have
\begin{equation}\label{eq:rounding-bound}
	\norm{\x-\y}_2^2 = \sum_{i=1}^n \norm{\x_i - \y_i}_2^2 \le n\left(1-\frac{1}{d}\right).
\end{equation}
On the other hand
\begin{equation}
	\norm{\x+\y}_2^2 = \sum_{i=1}^n \norm{\x_i + \y_i}_2^2 \le \sum_{i=1}^n \left[\1^\top(\x_i + \y_i)\right]^2 = 4n.
\end{equation}
Applying the two above inequalities, together with the triangle and Cauchy-Schwarz inequalities we have:
\begin{align*}
	\abs{E(\x) - E(\y)}
	&= \abs{\u^\top(\x-\y) + \frac{1}{2}\x^\top\P\x - \frac{1}{2}\y^\top\P\y}\\
	&\le \abs{\u^\top(\x-\y)} + \frac{1}{2}\abs{\x^\top\P\x - \y^\top\P\y}\\
	&= \abs{\u^\top(\x-\y)} + \frac{1}{2}\abs{(\x-\y)^\top\P(\x+\y)}\\
	&\le \norm{\u}_2\norm{\x-\y}_2 + \frac{1}{2}\norm{\x-\y}_2\norm{\P}_2\norm{\x+\y}_2\\
	&\le \norm{\u}_2\sqrt{n\left(1-\frac{1}{d}\right)} + \frac{1}{2}\sqrt{n\left(1-\frac{1}{d}\right)} \norm{\P}_2 2\sqrt{n}\\
	&= C.
\end{align*}
This completes the proof.

\section{Implementation details of all methods}\label{sec:derivation-peer-methods}

We present the implementation details for all the methods presented in the experiments (\S\ref{sec:experiments}). Recall that our problem of interest is
$$\min_{\x\in\cX} E(\x) \triangleq \frac{1}{2}\x^\top\P\x + \u^\top\x,$$ 
and that the same initialization $\x\iter{0} = \softmax(-\u)$ is used for all methods.

\paragraph{High-dimensional filtering for gradient computation} For all methods, we need to compute the energy gradient $\nabla E(\x) = \P\x + \u$ at each iteration, where the evaluation of $\P\x$ is an expensive operation because the graph is fully-connected (\ie, $\P$ is dense). Fortunately, since the pairwise potentials are Gaussian, this multiplication can be performed efficiently (and approximately) in $\cO(nd)$ time using high-dimensional filtering, which is the key idea behind the original dense CRFs paper \cite{krahenbuhl2011efficient}. We refer to this reference for more details. Our code is based on the efficient GPU implementation of \citet{monteiro2018conditional}.\footnote{\url{https://github.com/MiguelMonteiro/permutohedral_lattice}}

\subsection{Euclidean Frank-Wolfe (\lfw)} \label{sec:lfw-details}
We give the details for the main update \eqref{eq:update-l2-FW} of Euclidean Frank-Wolfe as presented in \S\ref{sec:instantiations}. This step follows from
\begin{align}
	\forall k\ge 0:\quad	\p\iter{k}  &= \argmin_{\p\in\cX} \set{\inner{\P\x\iter{k} + \u, \p} + \frac{\lambda}{2}\norm{\p}_2^2} \\
	&=\argmin_{\p\in\cX} \set{\frac{\lambda}{2}\left\|\p + \frac{1}{\lambda}(\P\x\iter{k} + \u)\right\|^2}\\
	&= \Pi_\cX\left(-\frac{1}{\lambda}(\P\x\iter{k} + \u)\right).
\end{align}
Recall that $\Pi_\cX(\v)$ denotes the projection of a vector $\v$ onto the set $\cX$. Recall also from~\eqref{eq:set-X} that $\cX = \set{\x \in\RR^{nd}: \x\ge\0, \1^\top\x_i = 1\ \forall i\in\cV}$, thus the projection on $\cX$ clearly reduces to $n$ \emph{independent} projections onto the probability simplex $\Delta_d = \set{\z\in\RR^d: \1^\top \z = 1,  \z\ge \0}$ for each $i\in\cV$. Projection onto the simplex is a rather well studied problem in the literature~\cite{condat2016fast}, and we present below the solution (that also shows how we implemented this operation).

\begin{lemma}
	For a given vector $\c\in\RR^d$, the optimal solution $\z^*$ to
	\begin{equation}
		\min_{\1^\top \z = 1, \z \ge \0} \|\z - \c\|^2
	\end{equation}
	is given as follows. Sort $\c$ in decreasing order to obtain a vector $\a = (a_1,a_2,\dots,a_d)$ (\ie, $a_1\ge a_2\ge\dots\ge a_d$) and let
	\begin{equation}
		\gamma_k = \frac{1}{k}(a_1+a_2+\dots+a_k - 1), \quad k=1,2,\dots,d.
	\end{equation}
	Let $k^*$ be the largest $k$ such that $a_k > \gamma_k$, then the optimal solution is given by
	\begin{equation}
		\z^* = \max(\c - \gamma_{k^*}, 0).
	\end{equation}
\end{lemma}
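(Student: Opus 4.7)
The plan is to tackle this by KKT conditions, exploiting the fact that the problem is a strictly convex quadratic program with a non-empty polyhedral feasible set, so a unique minimizer exists and is characterized by KKT. First I would write the Lagrangian
\begin{equation*}
	\cL(\z, \mu, \lambdab) = \tfrac{1}{2}\norm{\z-\c}^2 + \mu(\1^\top\z - 1) - \lambdab^\top\z,
\end{equation*}
with $\lambdab\ge\0$. Stationarity gives $z_i^* = c_i - \mu + \lambda_i$, and complementary slackness $\lambda_i z_i^* = 0$ combined with $z_i^*\ge 0$ and $\lambda_i\ge 0$ collapses to the compact form $z_i^* = \max(c_i - \mu, 0)$. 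Hence the whole problem is reduced to finding the unique scalar $\mu$ (dual variable for $\1^\top\z = 1$) such that $h(\mu) \triangleq \sum_{i=1}^d \max(c_i - \mu, 0) = 1$. Note $h$ is continuous, piecewise affine, non-increasing in $\mu$, with $h(\mu)=0$ for $\mu\ge \max_i c_i$ and $h(\mu)\to+\infty$ as $\mu\to-\infty$, so such a $\mu$ exists and is unique.

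Next I would sort $\c$ into $\a$ with $a_1\ge\cdots\ge a_d$ and observe that the ``active set'' $\cI(\mu) = \set{i : a_i > \mu}$ is an initial segment $\set{1,\dots,k}$ for some $k\in\set{1,\dots,d}$ (using that $a_1\ge \max_i c_i \ge \mu^*$, so the set is non-empty). On the interval where $\cI(\mu) = \set{1,\dots,k}$, the equation $h(\mu) = 1$ reads $\sum_{i=1}^k (a_i - \mu) = 1$, which gives exactly $\mu = \gamma_k$. So the solution reduces to identifying the correct $k^*$, namely the value of $k$ for which $\gamma_k$ lies in the range $(\mu \text{ consistent with active set } \set{1,\dots,k})$, \ie, $a_{k+1} \le \gamma_k < a_k$ (with the convention $a_{d+1} = -\infty$).

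The main obstacle, and the step that needs care, is showing that $k^*$ defined as \emph{the largest $k$ with $a_k > \gamma_k$} coincides with this correct threshold, \ie, that it additionally satisfies $a_{k^*+1}\le \gamma_{k^*}$. For this, I would first prove the monotonicity identity
\begin{equation*}
	\gamma_{k+1} - \gamma_k = \frac{a_{k+1} - \gamma_k}{k+1} \cdot \frac{k}{k}
	\quad\Longleftrightarrow\quad
	(k+1)\gamma_{k+1} = k\gamma_k + a_{k+1} - 1 + 1,
\end{equation*}
which after simplification yields that $\gamma_{k+1} > \gamma_k$ iff $a_{k+1} > \gamma_k$. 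This monotonicity implies that the set $\set{k : a_k > \gamma_k}$ is exactly an initial segment $\set{1,\dots,k^*}$: once $a_{k+1}\le \gamma_k$, we have $\gamma_{k+1}\le \gamma_k$, and by induction $a_{k+2}\le a_{k+1}\le \gamma_k \le $ (handled carefully) $\gamma_{k+1}$, etc. Consequently $a_{k^*+1}\le \gamma_{k^*}$, so $\mu^* = \gamma_{k^*}$ is consistent with the active set $\set{1,\dots,k^*}$, and by uniqueness it is the correct dual variable. Plugging back gives $\z^* = \max(\c - \gamma_{k^*}, 0)$, finishing the proof.
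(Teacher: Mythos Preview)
The paper does not actually prove this lemma: it is stated in Appendix~D.1 as a well-known result on simplex projection, with a pointer to the literature (Condat, 2016), and is followed only by a complexity remark. So there is no paper proof to compare against.

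Your KKT-based argument is the standard one and is correct in outline: reduce to $z_i^* = \max(c_i-\mu,0)$, then determine $\mu$ from $\sum_i \max(c_i-\mu,0)=1$, then identify the active set after sorting. Two small things to tighten. First, your displayed identity is garbled; the clean statement is
\[
(k+1)\gamma_{k+1} = k\gamma_k + a_{k+1}
\quad\Longrightarrow\quad
\gamma_{k+1}-\gamma_k = \frac{a_{k+1}-\gamma_k}{k+1},
\]
and hence $a_{k+1}-\gamma_{k+1} = \tfrac{k}{k+1}(a_{k+1}-\gamma_k)$, so $a_{k+1}>\gamma_{k+1}$ iff $a_{k+1}>\gamma_k$. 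Second, your ``(handled carefully)'' line has a sign slip; the clean induction is: if $a_k\le\gamma_k$ then $a_{k+1}\le a_k\le\gamma_k$, whence by the equivalence just stated $a_{k+1}\le\gamma_{k+1}$; thus the set $\{k:a_k>\gamma_k\}$ is an initial segment. This immediately gives $a_{k^*+1}\le\gamma_{k^*}$ from the definition of $k^*$ together with the same equivalence applied at $k=k^*$, completing the argument.
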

In the above, the ``$\max$'' and ``$-$'' operations are understood to be element-wise.
\begin{remark}
	If we use an $\cO(d\log d)$ sorting algorithm, then we see that the per-iteration complexity of Euclidean Frank-Wolfe is $\cO(nd\log d)$. It should be noted, however, that highly-efficient simplex-projection algorithms exist and have $\cO(d)$ complexity in practice \cite[Table 1]{condat2016fast}, yielding $\cO(nd)$ complexity, which is the same as in vanilla Frank-Wolfe (see \S\ref{sec:non-diff-vanilla-FW}).
\end{remark}

\subsection{Entropic Frank-Wolfe (\efw)} \label{sec:efw-details}
We give the details for the main update \eqref{eq:update-entropy-FW} of Entropic Frank-Wolfe as presented in \S\ref{sec:instantiations}. We need to show that
\begin{equation}
	\p\iter{k}  = \argmin_{\p\in\cX} \set{\inner{\P\x\iter{k} + \u, \p} -\lambda H(\p) }
	= \softmax\left(-\frac{1}{\lambda}(\P\x\iter{k} + \u)\right) \quad\forall k\ge 0,
\end{equation}
where $H(\x) = -\sum_{i\in\cV} \sum_{s\in\cS} x_{is}\log x_{is}$. Again, the above reduces to $n$ independent subproblems over each $i\in\cV$ to which the solutions are given by the following lemma.
\begin{lemma}\label{lem:entropy-conjugate}
	For a given vector $\c\in\RR^d$, the optimal solution $\z^*$ to
	\begin{equation}
		\min_{\1^\top \z = 1, \z \ge \0} \set{\inner{\c,\z} + \sum_{s=1}^{d}z_s\log z_s}
	\end{equation}
	is $\z^* = \softmax(-\c)$.
\end{lemma}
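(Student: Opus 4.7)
The plan is to solve the problem via KKT conditions, exploiting strict convexity and the barrier-like behavior of the entropy at the boundary of the simplex.

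First I would observe that the objective $\phi(\z) = \inner{\c, \z} + \sum_{s} z_s \log z_s$ (with the convention $0\log 0 = 0$) is continuous on the compact simplex and strictly convex on its relative interior (since $z\mapsto z\log z$ is strictly convex), so a unique minimizer exists. Next I would argue that the minimizer lies in the relative interior, \ie all $z_s^* > 0$. This follows from the fact that $\frac{d}{dz}(z\log z) = 1 + \log z \to -\infty$ as $z\to 0^+$: along any feasible direction moving a coordinate away from $0$, the objective decreases, so the non-negativity constraints are inactive at the optimum and can be ignored.

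Consequently, the problem reduces to minimizing $\phi$ subject only to the equality constraint $\1^\top \z = 1$. I would form the Lagrangian $\cL(\z, \mu) = \inner{\c,\z} + \sum_s z_s \log z_s - \mu(\1^\top\z - 1)$ and set the gradient to zero, obtaining the stationarity condition $c_s + \log z_s + 1 - \mu = 0$ for each $s$, which gives $z_s^* = \exp(\mu - 1 - c_s)$.

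Finally I would enforce the normalization $\sum_s z_s^* = 1$ to solve for the multiplier: $\exp(\mu - 1) = 1/\sum_t \exp(-c_t)$, whence
\begin{equation*}
    z_s^* = \frac{\exp(-c_s)}{\sum_{t=1}^d \exp(-c_t)},
\end{equation*}
which is exactly $\softmax(-\c)$. There is no real obstacle here; the only subtlety worth stating carefully is the interior argument that justifies dropping the inequality constraints, after which the derivation is a direct Lagrangian computation.
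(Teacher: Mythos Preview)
Your proof is correct and follows essentially the same Lagrangian/KKT route as the paper. The only cosmetic difference is that you dispose of the non-negativity constraints up front via the barrier argument (the derivative of $z\log z$ blows up at $0$), whereas the paper keeps the inequality multipliers in the KKT system and eliminates them through complementary slackness after observing that stationarity forces $z_s>0$; both lead to the same stationarity equation and normalization.
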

\begin{proof}
	The Lagrangian of the above problem is given by
	\begin{align}
		L(\z, \mub, \nu) &= \inner{\c,\z} + \sum_{s=1}^{d}z_s\log z_s + \mub^\top(-\z) + \nu(\1^\top \z - 1) \\
		&= -\nu + \sum_{s=1}^{d}\left(c_sz_s + z_s\log z_s - \mu_s z_s + \nu z_s\right),
	\end{align}
	where $\mub = (\mu_1,\mu_2,\dots,\mu_d)\ge \0$ and $\nu\in\RR$ are the Lagrange multipliers.
	
	Observe that the given problem is convex and the corresponding Slater's constraint qualification holds (\ie, there exists $\z\in\RR^d$ such that $\1^\top \z = 1$ and $\z > \0$), it suffices to solve the following	Karush–Kuhn–Tucker (KKT) system to obtain the optimal solution:
	\begin{align}
		\frac{\partial L(\z, \mub, \nu)}{\partial z_s} = c_s + \log z_s + 1 - \mu_s + \nu &= 0 \quad \forall 1\le s\le d,\\
		\1^\top \z &= 1,\\
		\z &\ge \0,\\
		\mub &\ge \0,\\
		\mu_s z_s &= 0 \quad \forall 1\le s\le d.
	\end{align}
	The first equation implies $z_s > 0 \ \forall s$, and thus in combination with the last, we obtain $\mu_s = 0 \ \forall s$. Therefore, the first equation becomes
	\begin{equation}\label{eq:z-in-nu-and-c}
		z_s = \exp(-1-\nu)\exp(-c_s) \ \forall s.
	\end{equation}
	 Summing up this result for all $s$, and taking into account the second equation, we obtain
	 \begin{equation}\label{eq:nu-in-c}
	 	\exp(-1-\nu) = \frac{1}{\sum_{s=1}^{d}\exp(-c_s)}.
	 \end{equation}
	Combining \eqref{eq:z-in-nu-and-c} and \eqref{eq:nu-in-c} we obtain
	\begin{equation}
		z_s = \frac{\exp(-c_s)}{\sum_{t=1}^{d}\exp(-c_t)} \ \forall 1\le s\le d.
	\end{equation}
	In other words, $\z = \softmax(-\c)$.
\end{proof}
\begin{remark}
	It is clear that the per-iteration complexity of Entropic Frank-Wolfe is $\cO(nd)$, which is the same as in vanilla Frank-Wolfe.
\end{remark}

\subsection{Projected gradient descent (PGD)}
This algorithm consists in the following updates:
\begin{equation}\label{eq:pgd}
	\p\iter{k} = \Pi_\cX(\x\iter{k} - \nabla E(\x)),\qquad \x\iter{k+1} = \x\iter{k} + \alpha_k(\p\iter{k} - \x\iter{k}),
\end{equation}
where the stepsize $\alpha_k$ follows one of the schemes presented in \S\ref{sec:convergence-analysis}. It is worth noting that this variant of PGD is eligible to exact line search \eqref{eq:stepsize-adaptive-linesearch}. We observe in our experiments that using the line search scheme produces the same results as setting $\alpha_k = 1$. Thus we used this constant scheme for both training and prediction. The same applies to the Frank-Wolfe variants.

\subsection{Fast proximal gradient method (PGM)}
The original PGM~\cite{lions1979splitting} consists in updating
\begin{equation}\label{eq:pgm}
	\x\iter{k+1} = \argmin_{\x\in\cX} \left\{\inner{\nabla E(\x\iter{k}), \x} + \frac{1}{2\alpha_k}\norm{\x-\x\iter{k}}_2^2\right\},
\end{equation}
which can be re-written as
\begin{equation}
	\x\iter{k+1} =\Pi_\cX(\x\iter{k} - \alpha_k\nabla E(\x\iter{k})).
\end{equation}
The above is precisely another variant of PGD (which is not eligible to exact line search \eqref{eq:stepsize-adaptive-linesearch}). While this algorithm is also supported by our implementation, the results presented in \S\ref{sec:experiments} are obtained using another variant called the \emph{fast} PGM, also known as FISTA~\cite{beck2009fista}. This algorithm consists in the following updates, where $\y\iter{0} = \x\iter{0} = \softmax(-\u)$ and $t_0 = 1$:
\begin{align}
	\x\iter{k+1} &= \Pi_\cX(\y\iter{k} - \alpha_k\nabla E(\y\iter{k})),\\
	t_{k+1} &= \frac{1+\sqrt{1+4t_k^2}}{2},\\
	\y\iter{k+1} &=\x\iter{k+1} + \frac{t_k -1}{t_{k+1}}(\x\iter{k+1} - \x\iter{k}).
\end{align}
While the optimal value of $\alpha_k$ can be determined using \emph{backtracking} \cite{beck2017first}, this process is very expensive as it requires evaluating the energy many times. Therefore, in practice, we use the constant scheme $\alpha_k = \alpha \in [0, 1]$. Doing a grid search on a random subset of $10$ validation images, we found (again) that $\alpha_k = 1$ is the best, and thus it is used for all the experiments.

\subsection{Entropic mirror descent (EMD)}
Mirror descent (MD)~\cite{beck2003mirror,nemirovskij1983problem} is a generalization of PGM to a more general distance function. Each iteration of MD takes the following form: 
\begin{equation}\label{eq:md}
	\x\iter{k+1} = \argmin_{\x\in\cX} \left\{\inner{\nabla E(\x\iter{k}), \x}+ \frac{1}{\alpha_k}B_{\phi}(\x,\x\iter{k})\right\},
\end{equation}
where $\phi:\cX\to\RR$ is a convex and continuously differentiable function on the interior of $\cX$, and $B_{\phi}:\cX\times\cX\to\RR$ is its associated Bregman divergence, defined by
\begin{equation}\label{eq:bregman}
B_{\phi}(\x,\y) = \phi(\x) - \phi(\y) - \inner{\nabla \phi(\y), \x - \y}.	
\end{equation}
Clearly, for $\phi(\x) = \frac{1}{2}\norm{\x}_2^2$ we recover the PGM update~\eqref{eq:pgm}. We provide an implementation for the so-called \emph{entropic} variant of mirror descent~\cite{beck2003mirror}, corresponding to choosing $\phi$ to be the negative entropy:
\begin{equation}
	\phi(\x) = \sum_{i=1}^n\sum_{s=1}^d x_{is}\log x_{is}.
\end{equation}
With this choice of $\phi$, it is easy to check that the Bregman divergence~\eqref{eq:bregman} becomes the following so-called Kullback-Leibler divergence:
\begin{equation}\label{eq:kullback-leibler}
	B_{\mathrm{KL}}(\x,\y) = \sum_{i=1}^n\sum_{s=1}^d x_{is}\log \frac{x_{is}}{y_{is}}.
\end{equation}
The MD update \eqref{eq:md} thus becomes
\begin{equation}
	\x\iter{k+1} = \argmin_{\x\in\cX} \left\{\inner{\alpha_k\nabla E(\x\iter{k}) - \log \x\iter{k}, \x} + \sum_{i=1}^n\sum_{s=1}^d x_{is}\log x_{is} \right\},
\end{equation}
where the $\log$ operation is taken element-wise. According to Lemma~\ref{lem:entropy-conjugate} (\S\ref{sec:efw-details}),we obtain
\begin{equation}
	\x\iter{k+1} = \softmax\left(\log \x\iter{k} - \alpha_k\nabla E(\x\iter{k})\right).
\end{equation}
Let $\g\iter{k}$ denote the gradient $\nabla E(\x\iter{k})$, the above reads
\begin{equation}\label{eq:emd-unstable}
	x_{is}\iter{k+1} = \frac{x_{is}\iter{k}\exp(-\alpha_k g_{is}\iter{k})}{\sum_{t=1}^{d} x_{it}\iter{k}\exp(-\alpha_k g_{it}\iter{k})} \quad \forall i\in\cV,\forall s\in\cS.
\end{equation}
\paragraph{Numerically stable EMD} In practice, the above expression of $\x\iter{k+1}$ may lead to numerical underflow or overflow. We overcome this by using the following modified iterate:
\begin{equation}\label{eq:emd-stable}
	x_{is}\iter{k+1} = \frac{(x_{is}\iter{k} + \epsilon)\exp(-\alpha_k g_{is}\iter{k} + m_i\iter{k} )}{\sum_{t=1}^{d} (x_{it}\iter{k} + \epsilon)\exp(-\alpha_k g_{it}\iter{k} + m_i\iter{k} )} \quad \forall i\in\cV,\forall s\in\cS,
\end{equation}
where $\epsilon = 10^{-10}$ and $m_i\iter{k} = \alpha_k\min_{1\le s\le d} g_{is}\iter{k} \ \forall i\in\cV$.

\subsection{Alternating direction method of multipliers (ADMM)}
The nonconvex ADMM for MAP inference \cite{lehuu2018continuous} consists in the following updates, where $\z\iter{0} = \softmax(-\u)$ and $\y\iter{0} = \0$:
\begin{align}
	\x\iter{k+1} &= \Pi_\cX\left(\z\iter{k} - \frac{1}{\rho}(\y\iter{k} + \frac{1}{2}\P\z\iter{k} + \u)\right),\label{eq:admm-x}\\
	\z\iter{k+1} &= \Pi_\cX\left(\x\iter{k+1} - \frac{1}{\rho}(-\y\iter{k} + \frac{1}{2}\P\x\iter{k+1})\right),\label{eq:admm-z}\\
	\y\iter{k+1} &= \y\iter{k} + \rho(\x\iter{k+1}  - \z\iter{k+1}).
\end{align}
We refer to the original paper \cite{lehuu2018continuous} for more details. In our experiments, we set $\rho = 1$ for simplicity. Since the expensive computation $\P\x$ are done two times in each ADMM iteration (one in \eqref{eq:admm-x}, another in \eqref{eq:admm-z}), this algorithm is roughly two times slower than the others. For a fair comparison, in our implementation we view \eqref{eq:admm-x} and \eqref{eq:admm-z} as two separate iterations (note that both $\x\iter{k+1}$ and $\z\iter{k+1}$ are feasible points).

Finally, we should note that the adaptive scheme for the penalty parameter $\rho$ proposed by \citet{lehuu2018continuous} is not applicable to our case, as we use only $5$ iterations in our experiments (which is equivalent to only $2.5$ regular iterations due to our above iteration separation).

\section{Detailed experimental setup and environment}\label{sec:experimental-setup-detailed}

\subsection{CNN-CRF architectures}\label{sec:model-details}
\paragraph{CNN-CRF}
Our segmentation model  is a standard combination of a CNN and a CRF~\cite{zheng2015conditional}. Given an input image $\Z\in\RR^{H\times W\times 3}$, the CNN produces an output $\Y\in\RR^{H\times W\times K}$ (where $K$ is the number of object classes) called the \emph{logits}, which is then fed into the CRF to produce a final output $\X\in\RR^{H\times W\times K}$:
\begin{equation}\label{eq:cnn-crf}
	\Y = \mathrm{CNN}(\Z;\thetab^u),\quad \X = \mathrm{CRF}(\Y; \thetab^p),
\end{equation}
where $\thetab^u$ and $\thetab^p$ are (typically trainable) parameters. The prediction is then obtained by taking the $\argmax$ along the last dimension of $\X$. For the CNN part, we consider two strong architectures: DeepLabv3 with ResNet101 backbone~\cite{chen2017rethinking}, and DeepLabv3+ with Xception65 backbone~\cite{chen2018encoder}. The reader is referred to the corresponding references for further details.

\paragraph{Dense CRF}
The CRF is defined over the input image such that each pixel is a node, and its labels are the object classes. Thus, using the notation defined in \S\ref{sec:inference-in-CRFs}, we have $n = HW$, $d=K$, and $\cS=\set{1,2,\dots,K}$. The CRF produces $\X$ in~\eqref{eq:cnn-crf} by minimizing the energy~\eqref{eq:energy-x} with appropriately constructed potentials, and then simply reshaping the solution $\x\in\RR^{HWK}$ into $H\times W\times K$. During training we skip the rounding step in CRF inference, so that the returned $\x$ is real-valued, which is more suitable for learning with the standard cross-entropy loss function. The unary potentials $\u$ is defined by to be the additive inverse of the logits $\Y$, reshaped correctly: $\u = -\vect (\Y)$, where $\vect$ denotes the flattening operator. We use the fully-connected model introduced by~\citet{krahenbuhl2011efficient} in which any pair of pixels $(i,j)$ is an edge with a pairwise potential of the form $\theta_{ij}(s,t) = \mu(s,t)k(\f_i,\f_j)\ \forall s,t\in\cS$, where $\mu:\cS\times\cS\to\RR$ is the so-called label compatibility function, and $k$ is a Gaussian kernel over some image features $(\f_i,\f_j)$. For a pixel $i$, we use its position $\p_i\in\NN^2$ and its color $\c_i\in[0,255]^3$ as features, and define the kernel as
\begin{equation}
	k(\f_i,\f_j) = w_1\exp\left(-\frac{\norm{\p_i - \p_j}_2^2}{2\alpha^2} - \frac{\norm{\c_i-\c_j}_2^2}{2\beta^2}\right) + w_2\exp\left(-\frac{\norm{\p_i-\p_j}_2^2}{2\gamma^2}\right)  \quad\forall i,j\in\cV,
\end{equation}
where $w_1,w_2$ are learnable kernel weights, and $\alpha,\beta,\gamma$ are hyperparameters. Following \citet{zheng2015conditional}, we use class-dependent kernel weights to increase the number of trainable parameters. Unlike \citet{zheng2015conditional}, for simplicity we use the default values $\alpha = 80, \beta = 13, \gamma = 3$ set by~\citet{krahenbuhl2011efficient} in all experiments, instead of doing a cross validation to find the best values. Finally, we use the Potts compatibility function: $\mu(s,t) = w\mathbbm{1}_{[s\neq t]}$ with $w=1$ for the inference experiments in \S\ref{sec:exp:inference}, and also for CRF initialization in the learning experiments in \S\ref{sec:learning-performance}.

\subsection{Datasets}

We provide further details on the datasets. PASCAL VOC~\cite{everingham2010pascal} contains \num{4369} images of \num{21} classes, split into \num{1464} (\emph{train}), \num{1449} (\emph{val}), and \num{1456} (\emph{test}) image subsets. As a standard practice, we augment the dataset with images from~\citet{hariharan2014simultaneous}, resulting in \num{10582} training images (\emph{trainaug}). Cityscapes~\cite{cordts2016cityscapes} contains \num{5000} images of \num{19} classes, split into \num{2975} (\emph{train}), \num{500} (\emph{val}), and \num{1525} (\emph{test}) image subsets. In addition, it also provides \num{19998} coarsely annotated images (\emph{train\_extra}). We report the performance in terms of mIoU across the semantic classes (\num{21} for \voc and \num{19} for Cityscapes).

\subsection{CNN training recipes} To fully train DeepLabv3 and DeepLabv3+, we follow closely the published recipes~\cite{chen2017rethinking,chen2018encoder} for this task. Below we present the most important information, and refer to the references for further details.

We first pretrain DeepLabv3 and DeepLabv3+ on the COCO~\cite{lin2014microsoft} dataset (by selecting only the images that contain the classes defined in \voc), and then finetune them on \voc (\emph{trainaug}) and Cityscapes (\emph{train}). During training, we apply data augmentation by (randomly) left-right flipping, scaling the input images (from $0.5$ to $2.0$), and cropping (with crop size of $513\times 513$ for \voc and $769\times 769$ for Cityscapes). We employ a \emph{poly} learning rate schedule: $\eta_m = \eta_0\left(1 - \frac{m}{M}\right)^{p}$,
where $\eta_0$ is the initial learning rate, $m$ is the step counter, and $M$ is the total number of training steps. For all trainings, we set $p=0.9$ and $\eta_0 = 0.001$ (except $\eta_0 = 0.01$ for pretraining on COCO), and a batch size of $16$ images. The value of $M$ is calculated from the number of training epochs, which we set to be \num{50} for COCO pretraining, \num{50} for finetuning on \voc, and \num{500} for finetuning on Cityscapes. We should note some differences compared to the original papers \cite{chen2017rethinking,chen2018encoder}. In particular, they did not specify the learning rate and number of steps for COCO pretraining. Furthermore, they used $\eta_0=0.0001$, which did not yield better results than $\eta_0=0.001$ in our implementation. Finally, in terms of the number of epochs, we used similar values to theirs. Indeed, they set \num{30000} and \num{90000} training steps for the \num{10582} and \num{2975} training images of \voc and Cityscapes, respectively. With a batch size of \num{16}, these are equivalent to \num{45} and \num{484} epochs. Table~\ref{tab:miou-val-reproduced} shows that our obtained results are similar to previous work \cite{chen2017rethinking,chen2018encoder}.

\begin{table}[!htb]
	\centering
	\begin{tabular}{llcc}
		\toprule
		Dataset & Model & Published \cite{chen2017rethinking,chen2018encoder} & Reproduced \\ \midrule
		\multirow{2}{*}{VOC} & DeepLabv3 & $78.51$ & $81.83$ \\ 
		& DeepLabv3+ & $82.45$ & $82.89$ \\ \midrule
		\multirow{2}{*}{Cityscapes} & DeepLabv3 & $77.82$ & $76.73$\\ 
		& DeepLabv3+ & $79.14$ & $79.55$\\
		\bottomrule\\
	\end{tabular}
	\caption{\label{tab:miou-val-reproduced}Performance of our reproduced DeepLab models compared to the original papers~\cite{chen2017rethinking,chen2018encoder}. The mIoU scores are obtained on the \emph{val} sets, without test time augmentation.}
\end{table}

\subsection{Training time and memory footprint} Our experiments are performed on a Linux server of 4 Nvidia V100 GPUs, using PyTorch 1.7. With a batch size of 16 on 4 GPUs (\ie, 4 images per GPU), both DeepLabv3 and DeepLabv3+ take ${\sim}7$min/epoch on PASCAL VOC (with $513\times 513$ crops). 
Thanks to our efficient GPU implementation (which will be made publicly available), plugging in the (5-step) CRF only increases that to ${\sim}9$ minutes ($1.2$--$1.3\times$ slower) for all inference methods (here we should note that CRF's running time is dominated by computing $\P\x$ at each step, which is why the running is similar across the methods). In terms of memory usage, DeepLabv3+ takes ${\sim}27.7$GB (per GPU for $4$ images) while DeepLabv3 takes ${\sim}15.6$GB. We found that the additional memory usage of the CRF (which has only $1323$ trainable parameters) are negligible for the Frank-Wolfe variants as well as for PGD, while PGM and ADMM require an extra amount of ${\sim}300$MB (probably due to the additional storage of the variable $\y$ at each iteration, see \S\ref{sec:derivation-peer-methods}).

\section{Additional results}

\subsection{Detailed results on the test sets}\label{sec:submission-urls}

The detailed results on the test sets can be found on the corresponding submission websites whose URLs are given in Table~\ref{tab:miou-test-sets-full}.

\begin{table}[!htb]
	\centering
	\begin{tabular}{l|l|l}
		\toprule
		Model & PASCAL VOC & Cityscapes\\ \midrule
		DeepLabv3+ \cite{chen2018encoder} & 87.8 & 82.1\\
		DeepLabv3+ (this work) & 87.6\textsuperscript{1} & $83.5\textsuperscript{3}$\\
		DeepLabv3+ with \lfw CRF & \textbf{88.0}\textsuperscript{2}  & 83.6\textsuperscript{4}\\
		\bottomrule
	\end{tabular}
	\begin{flushleft}
		$\textsuperscript{1}${\tiny \url{http://host.robots.ox.ac.uk:8080/anonymous/BUXULK.html}} \\
		$\textsuperscript{2}${\tiny \url{http://host.robots.ox.ac.uk:8080/anonymous/YFJJLW.html}} \\
		$\textsuperscript{3}${\tiny \url{https://www.cityscapes-dataset.com/anonymous-results/?id=845bd062fddae249ec0f4987d30f2f9be6e6716654513e7e6733d3f56e976532}} \\
		$\textsuperscript{4}${\tiny \url{https://www.cityscapes-dataset.com/anonymous-results/?id=84e788da7c55eeeb4840b70407ed665006494c99e5d34e0bd8704d66d9c8b864}}
	\end{flushleft}
	\caption{\label{tab:miou-test-sets-full}Performance on the \emph{test} sets.
	}
\end{table}
	
\subsection{Results for trainable $\alpha_k$ and $\lambda$}\label{sec:results-trainable-alpha}
We carried out an experiment with \lfw and \efw ($\lambda = 0.7$) in which we allow the stepsize $\alpha_k$ at each CRF iteration to be learnable (initialized at $0.5$). We observe the stepsizes at all the steps behave very similarly (\ie, increasing or decreasing together). In addition, for \efw they tend to increase during training, while for \lfw they tend to decrease. In addition, we also tried setting the regularization weight $\lambda$ to trainable. We initialized it at $1.0$ for \lfw and at $0.7$ for \efw. For both solvers, we found that $\lambda$ increased during training. Regarding accuracy, we did not observe significant differences compared to fixed $\alpha_k$ and fixed $\lambda$, although tuning the learning rates specifically for these variables could potentially lead to improved performance. See Table~\ref{tab:trainable-alpha} for the details.
	
	\begin{table}[!htb]
		\centering
		\begin{tabular}{llll}
			\toprule
			Regularizer & $\lambda$        & Stepsize      & mIoU         \\\toprule
			\multirow{4}{*}{$\ell_2$}          & 1.0 fixed     & 1.0 fixed     & 0.8490489721 \\
			 & 1.0 fixed     & 0.5 fixed     & 0.849458456  \\
			 & 1.0 fixed     & 0.5 learnable & 0.849185586  \\
		     & 1.0 learnable & 0.5 learnable & 0.8492224813 \\\midrule
			\multirow{4}{*}{Entropy}      & 0.7 fixed     & 1.0 fixed     & 0.8495011926 \\
			     & 0.7 fixed     & 0.5 fixed     & 0.8493972421 \\
			     & 0.7 fixed     & 0.5 learnable & 0.849845171  \\
			     & 0.7 learnable & 0.5 learnable & 0.8491678238\\
			\bottomrule\\
		\end{tabular}
	\caption{\label{tab:trainable-alpha}Comparison between trainable and fixed $\lambda$ and $\alpha_k$}
	\end{table}

\subsection{Results for fined-grained analysis}\label{sec:results-fine-grained}

We randomly picked a trained checkpoint among the different runs for DeepLabv3+ with \lfw ($\lambda=1.0$) and DeepLabv3+ with \efw ($\lambda=0.7$), and evaluated them using 5, 10, and 25 CRF iterations on the PASCAL VOC validation set. The results are shown in Table~\ref{tab:fine-grained-results}.

\begin{table}[!htb]
	\resizebox{\textwidth}{!}{
	\begin{tabular}{llllllllllllllllllllllll}
		\toprule
		Method     & Steps & mIoU  & background & aeroplane & bicycle & bird  & boat  & bottle & bus   & car   & cat   & chair & cow   & diningtable & dog   & horse & motorbike & person & pottedplant & sheep & sofa  & train & tvmonitor \\\toprule
		CNN &       & 82.89 & 95.79      & 91.80     & 44.89   & 89.92 & 71.49 & 83.54  & 94.68 & 91.54 & 95.42 & 52.36 & 95.51 & 70.25       & 93.63 & 93.08 & 88.27     & 90.20  & 68.03       & 92.62 & 66.95 & 92.33 & 78.45     \\\midrule
		\multirow{3}{*}{\lfw}     & 5     & 85.51 & 96.66      & 93.56     & 60.56   & 90.47 & 80.23 & 83.51  & 96.94 & 91.68 & 95.38 & 54.92 & 95.87 & 76.11       & 94.01 & 93.43 & 89.45     & 91.46  & 69.95       & 93.59 & 71.16 & 95.69 & 81.00     \\
		      & 10    & 85.52 & 96.67      & 93.56     & 60.49   & 90.47 & 80.23 & 83.53  & 96.92 & 91.68 & 95.38 & 55.11 & 95.87 & 76.16       & 94.00 & 93.41 & 89.43     & 91.45  & 69.98       & 93.57 & 71.39 & 95.67 & 80.95     \\
		      & 25    & 85.56 & 96.67      & 93.57     & 60.37   & 90.47 & 80.88 & 83.50  & 96.90 & 91.66 & 95.38 & 55.32 & 95.86 & 76.25       & 93.98 & 93.38 & 89.41     & 91.44  & 69.99       & 93.53 & 71.55 & 95.66 & 80.92     \\\midrule
		\multirow{3}{*}{\efw} & 5     & 84.55 & 96.33      & 94.88     & 55.66   & 90.74 & 75.54 & 83.63  & 95.58 & 89.60 & 94.71 & 54.33 & 95.93 & 75.78       & 93.84 & 93.14 & 91.21     & 91.02  & 69.56       & 92.96 & 69.87 & 90.60 & 80.65     \\
		 & 10    & 84.60 & 96.34      & 94.88     & 55.66   & 90.73 & 76.53 & 83.63  & 95.58 & 89.58 & 94.70 & 54.33 & 95.97 & 75.81       & 93.84 & 93.14 & 91.22     & 91.02  & 69.54       & 93.02 & 69.89 & 90.60 & 80.66     \\
		 & 25    & 84.55 & 96.33      & 94.88     & 55.66   & 90.73 & 75.47 & 83.63  & 95.58 & 89.60 & 94.70 & 54.33 & 95.97 & 75.81       & 93.84 & 93.14 & 91.22     & 91.02  & 69.54       & 93.02 & 69.89 & 90.60 & 80.66    \\
		\bottomrule\\
	\end{tabular}
}
\caption{\label{tab:fine-grained-results}Fined-grained results on PASCAL VOC validation set.}
\end{table}

\subsection{Additional inference results}\label{sec:inference-results-full}

\subsubsection{Results for longer inference regime}

We show in Figure~\ref{fig:energy-100steps} a comparison of the discrete energy across the methods on a subset of \num{10} \emph{val} images of PASCAL VOC for $100$ inference iterations, using DeepLabv3+ and Potts dense CRF. 

\begin{figure*}[!htb]
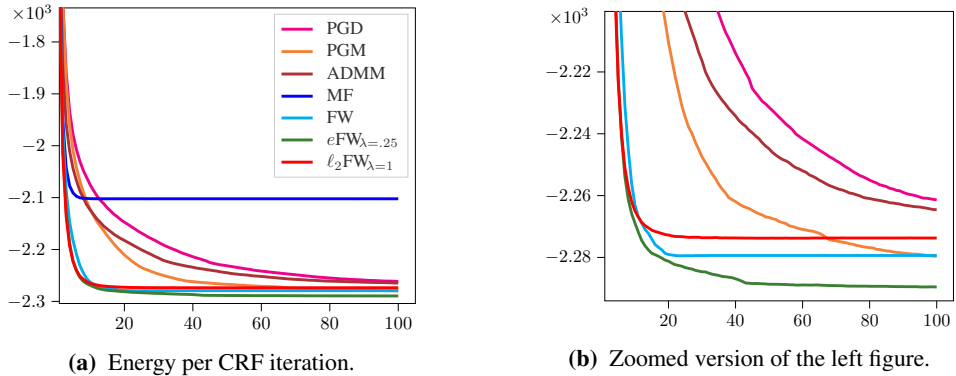

	\centering
	\begin{subfigure}{0.5\textwidth}
		\centering
		\resizebox{0.8\linewidth}{!}{
			\input{figures/energy_discrete_100steps_10samples.tex}
		}
		\caption{\label{fig:energy:comparison-100steps} Energy per CRF iteration.}
	\end{subfigure}%
	\begin{subfigure}{0.5\textwidth}
		\centering
		\resizebox{0.8\linewidth}{!}{
			\input{figures/energy_discrete_100steps_10samples_zoomed.tex}
		}
		\caption{\label{fig:energy:comparison-100steps-zoom} Zoomed version of the left figure.}
	\end{subfigure}%
	\caption{\label{fig:energy-100steps} \textbf{CRF energy} averaged over a subset of \num{10} \emph{val} images of \textbf{PASCAL VOC} using DeepLabv3+ and Potts dense CRF.
	}
\end{figure*}

From these results, we observe that:
\begin{enumerate}
	\item Vanilla FW and \lfw already converge after around $20$ iterations. \lfw does better than vanilla FW only in the early iterations.
	\item PGM surpasses \lfw at after $70$ iterations, and surpasses vanilla FW after $100$ iterations.
	\item PGD and ADMM are likely to surpass \lfw and vanilla, too, if given sufficient number of iterations, as these do not show any sign of convergence yet.
\end{enumerate}

The main observation here is that, the relative performance of the methods are different between the early (typically first $10$ iterations) and the later stage. In \S\ref{sec:conclusion} we gave some hypotheses on why the proposed regularized Frank-Wolfe may work better than the others. Our main argument is that vanilla Frank-Wolfe is already much better than the other methods (in the first few iterations), and what we do is to equip it with the ability of effectively learning with SGD (potential improvements in terms of energy are rather a byproduct and not the main objective, as the improvements are sometimes small). Let us summarize this situation as follows:

\begin{enumerate}
	\item Vanilla Frank-Wolfe outperforms other first-order methods such as PGD, PGM, and ADMM \textbf{during the first few iterations} (and may be surpassed at a later stage, as already shown).
	\item For SGD learning, in which only a small number of iterations (due to the vanishing/exploding gradient problems, as already observed in previous work~\cite{zheng2015conditional}), this behavior (reaching quickly a very low energy) of vanilla Frank-Wolfe is highly desirable.
	\item Unfortunately, vanilla Frank-Wolfe iterates are piecewise constant and thus the resulting gradients are zero almost everywhere, which makes learning through backpropagation impossible.
	\item Our regularized Frank-Wolfe is designed to precisely solve this zero-gradient issue.
\end{enumerate}

\subsubsection{Additional inference results}

Figures \ref{fig:energy-2} and \ref{fig:energy:cityscapes} show more results for the inference experiments.

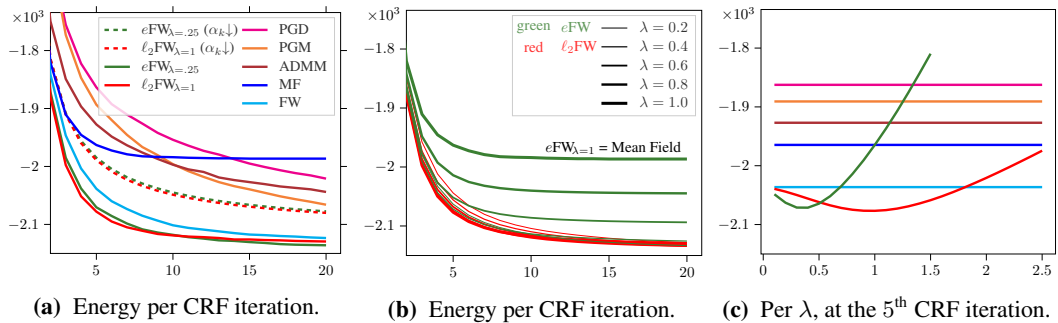
\begin{figure*}[!htb]
	\centering
	\begin{subfigure}{0.33\textwidth}
		\centering
		\resizebox{\linewidth}{!}{
			\begin{tikzpicture} %

\begin{axis}[
legend cell align={left},
legend style={fill opacity=0.8, draw opacity=1, text opacity=1, draw=white!80!black},
tick align=outside,
tick pos=both,
x grid style={white!69.0196078431373!black},
xmin=2, xmax=20.95,
xtick style={color=black},
y grid style={white!69.0196078431373!black},
ymin=-2150.075, ymax=-1734.425,
scaled y ticks={real:1000},
ytick scale label code/.code={$\times 10^3$},
every y tick scale label/.style={at={(yticklabel cs:0.86)},yshift=12.pt,xshift=2pt,anchor=south west},
ytick style={color=black},
legend columns=5,
transpose legend
]

\addplot [ultra thick, efwcolor, dashed]
table {%
	0 0
	1 -1533.00983436853
	2 -1814.93081435473
	3 -1908.11180124224
	4 -1955.88388543823
	5 -1984.89820565908
	6 -2004.38405797101
	7 -2018.63354037267
	8 -2029.44979296066
	9 -2037.93823326432
	10 -2044.92701863354
	11 -2050.65804002761
	12 -2055.43167701863
	13 -2059.54158040028
	14 -2063.09644582471
	15 -2066.24654934438
	16 -2069.02432712215
	17 -2071.51293995859
	18 -2073.72550034507
	19 -2075.70755693582
	20 -2077.52933057281
};
\addlegendentry{\efw[\lambda=.25] ($\alpha_k$$\downarrow$)}

\addplot [ultra thick, lfwcolor, dashed]
table {%
	0 0
	1 -1533.00845410628
	2 -1813.994478951
	3 -1910.52708764665
	4 -1959.86438923395
	5 -1989.2380952381
	6 -2008.96652864044
	7 -2023.09213250518
	8 -2033.70203588682
	9 -2042.04606625259
	10 -2048.81504485852
	11 -2054.27795031056
	12 -2058.89527260179
	13 -2062.92028985507
	14 -2066.38388543823
	15 -2069.39130434783
	16 -2072.05503795721
	17 -2074.41597653554
	18 -2076.53088336784
	19 -2078.44151138716
	20 -2080.12491373361
};
\addlegendentry{\lfw[\lambda=1] ($\alpha_k$$\downarrow$)}

\addplot [ultra thick, efwcolor]
table {%
	1 -1534.5
	2 -1871.5
	3 -1985
	4 -2038
	5 -2070.5
	6 -2085.5
	7 -2098.5
	8 -2108.5
	9 -2114
	10 -2118
	11 -2122
	12 -2124.5
	13 -2127.5
	14 -2128.5
	15 -2130.5
	16 -2132
	17 -2134
	18 -2135
	19 -2135.5
	20 -2136
};
\addlegendentry{\efw[\lambda=.25]}

\addplot [ultra thick, lfwcolor]
table {%
	1 -1534.5
	2 -1880
	3 -1997.5
	4 -2051.5
	5 -2078
	6 -2095
	7 -2105
	8 -2111
	9 -2116.5
	10 -2119
	11 -2121
	12 -2122
	13 -2123.5
	14 -2126
	15 -2126
	16 -2127
	17 -2127.5
	18 -2128.5
	19 -2129
	20 -2129.5
};
\addlegendentry{\lfw[\lambda=1]}

\addlegendimage{empty legend}
\addlegendentry{}

\addplot [ultra thick, pgdcolor]
table {%
	1 -1311.5
	2 -1628
	3 -1755.5
	4 -1822.5
	5 -1863.5
	6 -1892.5
	7 -1910.5
	8 -1926
	9 -1942
	10 -1954
	11 -1964
	12 -1973
	13 -1979.5
	14 -1987.5
	15 -1995
	16 -2001
	17 -2006.5
	18 -2011.5
	19 -2015.5
	20 -2021
};
\addlegendentry{PGD}

\addplot [ultra thick, pgmcolor]
table {%
1 -1311.5
2 -1628
3 -1773
4 -1847
5 -1893.5
6 -1920
7 -1945.5
8 -1966.5
9 -1980.5
10 -1996
11 -2008
12 -2018
13 -2027
14 -2034
15 -2039.5
16 -2045.5
17 -2051.5
18 -2056.5
19 -2061
20 -2066
};
\addlegendentry{PGM}

\addplot [ultra thick, admmcolor]
table {%
	1 -1495
	2 -1749.5
	3 -1849
	4 -1896.5
	5 -1927.5
	6 -1948.5
	7 -1963.5
	8 -1976
	9 -1987
	10 -1997
	11 -2005
	12 -2009.5
	13 -2019
	14 -2023.5
	15 -2027
	16 -2030.5
	17 -2034.5
	18 -2037
	19 -2039.5
	20 -2044
};
\addlegendentry{ADMM}

\addplot [ultra thick, mfcolor]
table {%
1 -1534.5
2 -1820.5
3 -1910.5
4 -1945.5
5 -1963
6 -1972.5
7 -1979
8 -1982.5
9 -1983.5
10 -1984
11 -1985
12 -1985.5
13 -1986
14 -1986
15 -1986.5
16 -1986.5
17 -1986.5
18 -1986.5
19 -1986.5
20 -1986.5
};
\addlegendentry{MF}

\addplot [ultra thick, fwcolor]
table {%
	1 -1534.5
	2 -1841.5
	3 -1946.5
	4 -2003.5
	5 -2038.5
	6 -2060
	7 -2073.5
	8 -2085
	9 -2093.5
	10 -2101.5
	11 -2106
	12 -2109
	13 -2112
	14 -2115
	15 -2117.5
	16 -2118.5
	17 -2120
	18 -2121
	19 -2122.5
	20 -2123.5
};
\addlegendentry{FW}

\end{axis}

\end{tikzpicture}
		}
		\caption{\label{fig:energy:comparison-2} Energy per CRF iteration.}
	\end{subfigure}%
	\begin{subfigure}{0.33\textwidth}
		\centering
		\resizebox{\linewidth}{!}{
			\begin{tikzpicture}

\begin{axis}[
legend cell align={left},
legend style={fill opacity=0.8, draw opacity=1, text opacity=1, draw=white!80!black},
legend entries={{\textcolor{efwcolor}{\efw}}, {\textcolor{lfwcolor}{\lfw}}, { }, { }, { }, {$\lambda = 0.2$},{$\lambda = 0.4$},{$\lambda = 0.6$},{$\lambda = 0.8$}, {$\lambda = 1.0$}},
legend columns=5,
transpose legend,
tick align=outside,
tick pos=left,
x grid style={white!69.0196078431373!black},
xmin=2, xmax=20.95,
xtick style={color=black},
y grid style={white!69.0196078431373!black},
ymin=-2150.075, ymax=-1734.425,
scaled y ticks={real:1000},
ytick scale label code/.code={$\times 10^3$},
every y tick scale label/.style={at={(yticklabel cs:0.86)},yshift=12.pt,xshift=2pt,anchor=south west},
ytick style={color=black}
]

\addlegendimage{dash pattern=on 0pt off 3pt, mark=text, text mark=green, efwcolor}
\addlegendimage{dash pattern=on 0pt off 3pt, mark=text, text mark=red, lfwcolor}
\addlegendimage{empty legend}
\addlegendimage{empty legend}
\addlegendimage{empty legend}

\addlegendimage{black}
\addlegendimage{thick, black}
\addlegendimage{very thick, black}
\addlegendimage{ultra thick, black}
\addlegendimage{line width=2pt, black}

\addplot [lfwcolor]
table {%
	0 0
	1 -1532.6635610766
	2 -1849.73878536922
	3 -1957.61680469289
	4 -2011.8262594893
	5 -2044.06763285024
	6 -2065.42063492063
	7 -2080.19013112491
	8 -2090.5864389234
	9 -2098.23964803313
	10 -2104.19755003451
	11 -2108.77104899931
	12 -2112.53105590062
	13 -2115.62146307799
	14 -2118.31780538302
	15 -2120.75241545894
	16 -2122.71290545204
	17 -2124.40907522429
	18 -2125.86111111111
	19 -2127.14078674948
	20 -2128.24120082816
};

\addplot [thick, lfwcolor]
table {%
	0 0
	1 -1532.66235334714
	2 -1863.60766045549
	3 -1972.71273291925
	4 -2025.45065562457
	5 -2055.88440303658
	6 -2075.47049689441
	7 -2088.56694271912
	8 -2097.76224982747
	9 -2104.52501725328
	10 -2109.76052449965
	11 -2114.06383712905
	12 -2117.51207729469
	13 -2120.46100759144
	14 -2122.79002760525
	15 -2124.73033126294
	16 -2126.41770186335
	17 -2127.90148378192
	18 -2129.12939958592
	19 -2130.12767425811
	20 -2131.05417529331
};

\addplot [very thick, lfwcolor]
table {%
	0 0
	1 -1532.66390614217
	2 -1872.90338164251
	3 -1986.24465148378
	4 -2038.43823326432
	5 -2067.36870255349
	6 -2085.20272601794
	7 -2096.89648033126
	8 -2105.02881297446
	9 -2111.1488957902
	10 -2115.84351276743
	11 -2119.60541752933
	12 -2122.64510006901
	13 -2125.08298826777
	14 -2127.08385093168
	15 -2128.72584541063
	16 -2130.11956521739
	17 -2131.28571428571
	18 -2132.20410628019
	19 -2133.1718426501
	20 -2133.99775707384
};

\addplot [ultra thick, lfwcolor]
table {%
	0 0
	1 -1532.66442374051
	2 -1877.57367149758
	3 -1994.51552795031
	4 -2046.94634230504
	5 -2074.8630089717
	6 -2091.40269151139
	7 -2102.08833678399
	8 -2109.66959972395
	9 -2115.19064872326
	10 -2119.41218081435
	11 -2122.56055900621
	12 -2124.88940648723
	13 -2126.80728088337
	14 -2128.38267770876
	15 -2129.68253968254
	16 -2130.80641821946
	17 -2131.72394755003
	18 -2132.40596963423
	19 -2133.03536922015
	20 -2133.61456176674
};

\addplot [line width=2pt, lfwcolor]
table {%
	0 0
	1 -1532.66287094548
	2 -1878.45790200138
	3 -1997.17443064182
	4 -2049.54002760525
	5 -2076.6856452726
	6 -2092.52329192547
	7 -2102.82729468599
	8 -2109.70859213251
	9 -2114.3309178744
	10 -2117.56262939959
	11 -2120.06487232574
	12 -2122.0248447205
	13 -2123.64164941339
	14 -2124.97739820566
	15 -2126.11335403727
	16 -2127.02070393375
	17 -2127.85697032436
	18 -2128.5762594893
	19 -2129.1231884058
	20 -2129.56418219462
};

\addplot [efwcolor]
table {%
	0 0
	1 -1532.66252587992
	2 -1866.56504485852
	3 -1979.24413388544
	4 -2032.70652173913
	5 -2062.93995859213
	6 -2081.67805383023
	7 -2094.06331953071
	8 -2102.62939958592
	9 -2109.03554175293
	10 -2114.07988267771
	11 -2118.02070393375
	12 -2121.26035196687
	13 -2123.91494133885
	14 -2126.016563147
	15 -2127.7270531401
	16 -2129.20410628019
	17 -2130.46566597654
	18 -2131.53778467909
	19 -2132.43961352657
	20 -2133.26483781919
};

\addplot [thick, efwcolor]
table {%
	0 0
	1 -1532.66407867495
	2 -1871.83592132505
	3 -1990.32953761215
	4 -2043.56004140787
	5 -2071.32125603865
	6 -2087.38474810214
	7 -2097.75845410628
	8 -2104.96583850932
	9 -2109.94289164941
	10 -2113.46928916494
	11 -2115.92874396135
	12 -2117.91131815045
	13 -2119.48550724638
	14 -2120.75724637681
	15 -2121.88716356108
	16 -2122.8345410628
	17 -2123.62715665977
	18 -2124.34955141477
	19 -2125.00138026225
	20 -2125.58661145618
};

\addplot [very thick, efwcolor]
table {%
	0 0
	1 -1532.66407867495
	2 -1861.44496204279
	3 -1976.73274672188
	4 -2026.80572808834
	5 -2051.82522429262
	6 -2065.75224292616
	7 -2074.233436853
	8 -2079.80072463768
	9 -2083.50931677019
	10 -2086.09989648033
	11 -2087.89113181504
	12 -2089.24861973775
	13 -2090.30365769496
	14 -2091.19099378882
	15 -2091.87439613527
	16 -2092.40251897861
	17 -2092.85852311939
	18 -2093.23585231194
	19 -2093.53433402346
	20 -2093.80745341615
};

\addplot [ultra thick, efwcolor]
table {%
	0 0
	1 -1532.66252587992
	2 -1842.38992408558
	3 -1948.11732229124
	4 -1992.56935817805
	5 -2014.00189786059
	6 -2025.3676673568
	7 -2031.86507936508
	8 -2035.86162870945
	9 -2038.3850931677
	10 -2040.11714975845
	11 -2041.29951690821
	12 -2042.17287784679
	13 -2042.79761904762
	14 -2043.30693581781
	15 -2043.68374741201
	16 -2043.98636991028
	17 -2044.23464458247
	18 -2044.41890959282
	19 -2044.56055900621
	20 -2044.70169082126
};

\addplot [line width=2pt, efwcolor]
table {%
	0 0
	1 -1534.5
	2 -1820.5
	3 -1910.5
	4 -1945.5
	5 -1963
	6 -1972.5
	7 -1979
	8 -1982.5
	9 -1983.5
	10 -1984
	11 -1985
	12 -1985.5
	13 -1986
	14 -1986
	15 -1986.5
	16 -1986.5
	17 -1986.5
	18 -1986.5
	19 -1986.5
	20 -1986.5
};
\end{axis}

\node (mf) at (4.8,2.5) {\efw[\lambda = 1] = Mean Field};

\end{tikzpicture}
		}
		\caption{\label{fig:energy:lambdas} Energy per CRF iteration.}
	\end{subfigure}%
	\begin{subfigure}{0.33\textwidth}
		\centering
		\resizebox{\linewidth}{!}{
			\begin{tikzpicture}

\begin{axis}[
legend cell align={left},
legend style={fill opacity=0.8, draw opacity=1, text opacity=1, draw=white!80!black},
tick align=outside,
tick pos=left,
x grid style={white!69.0196078431373!black},
xmin=-0.02, xmax=2.62,
xtick style={color=black},
ymin=-2150.075, ymax=-1734.425,
scaled y ticks={real:1000},
ytick scale label code/.code={$\times 10^3$},
every y tick scale label/.style={at={(yticklabel cs:0.86)},yshift=12.pt,xshift=2pt,anchor=south west},
ytick style={color=black}
]

\addplot [ultra thick, pgmcolor]
table {%
0.1 -1890.73947550035
0.2 -1890.73947550035
0.3 -1890.73947550035
0.4 -1890.73947550035
0.5 -1890.73947550035
0.6 -1890.73947550035
0.7 -1890.73947550035
0.8 -1890.73947550035
0.9 -1890.73947550035
1 -1890.73947550035
1.1 -1890.73947550035
1.2 -1890.73947550035
1.3 -1890.73947550035
1.4 -1890.73947550035
1.5 -1890.73947550035
1.6 -1890.73947550035
1.7 -1890.73947550035
1.8 -1890.73947550035
1.9 -1890.73947550035
2 -1890.73947550035
2.1 -1890.73947550035
2.2 -1890.73947550035
2.3 -1890.73947550035
2.4 -1890.73947550035
2.5 -1890.73947550035
};

\addplot [ultra thick, mfcolor]
table {%
0.1 -1964.62525879917
0.2 -1964.62525879917
0.3 -1964.62525879917
0.4 -1964.62525879917
0.5 -1964.62525879917
0.6 -1964.62525879917
0.7 -1964.62525879917
0.8 -1964.62525879917
0.9 -1964.62525879917
1 -1964.62525879917
1.1 -1964.62525879917
1.2 -1964.62525879917
1.3 -1964.62525879917
1.4 -1964.62525879917
1.5 -1964.62525879917
1.6 -1964.62525879917
1.7 -1964.62525879917
1.8 -1964.62525879917
1.9 -1964.62525879917
2 -1964.62525879917
2.1 -1964.62525879917
2.2 -1964.62525879917
2.3 -1964.62525879917
2.4 -1964.62525879917
2.5 -1964.62525879917
};

\addplot [ultra thick, admmcolor]
table {%
0.1 -1926.95358868185
0.2 -1926.95358868185
0.3 -1926.95358868185
0.4 -1926.95358868185
0.5 -1926.95358868185
0.6 -1926.95358868185
0.7 -1926.95358868185
0.8 -1926.95358868185
0.9 -1926.95358868185
1 -1926.95358868185
1.1 -1926.95358868185
1.2 -1926.95358868185
1.3 -1926.95358868185
1.4 -1926.95358868185
1.5 -1926.95358868185
1.6 -1926.95358868185
1.7 -1926.95358868185
1.8 -1926.95358868185
1.9 -1926.95358868185
2 -1926.95358868185
2.1 -1926.95358868185
2.2 -1926.95358868185
2.3 -1926.95358868185
2.4 -1926.95358868185
2.5 -1926.95358868185
};

\addplot [ultra thick, pgdcolor]
table {%
0.1 -1862.48723257419
0.2 -1862.48723257419
0.3 -1862.48723257419
0.4 -1862.48723257419
0.5 -1862.48723257419
0.6 -1862.48723257419
0.7 -1862.48723257419
0.8 -1862.48723257419
0.9 -1862.48723257419
1 -1862.48723257419
1.1 -1862.48723257419
1.2 -1862.48723257419
1.3 -1862.48723257419
1.4 -1862.48723257419
1.5 -1862.48723257419
1.6 -1862.48723257419
1.7 -1862.48723257419
1.8 -1862.48723257419
1.9 -1862.48723257419
2 -1862.48723257419
2.1 -1862.48723257419
2.2 -1862.48723257419
2.3 -1862.48723257419
2.4 -1862.48723257419
2.5 -1862.48723257419
};

\addplot [ultra thick, fwcolor]
table {%
0.1 -2036.38008971705
0.2 -2036.38008971705
0.3 -2036.38008971705
0.4 -2036.38008971705
0.5 -2036.38008971705
0.6 -2036.38008971705
0.7 -2036.38008971705
0.8 -2036.38008971705
0.9 -2036.38008971705
1 -2036.38008971705
1.1 -2036.38008971705
1.2 -2036.38008971705
1.3 -2036.38008971705
1.4 -2036.38008971705
1.5 -2036.38008971705
1.6 -2036.38008971705
1.7 -2036.38008971705
1.8 -2036.38008971705
1.9 -2036.38008971705
2 -2036.38008971705
2.1 -2036.38008971705
2.2 -2036.38008971705
2.3 -2036.38008971705
2.4 -2036.38008971705
2.5 -2036.38008971705
};

\addplot [ultra thick, lfwcolor]
table {%
	0.1 -2039.2922705314
	0.2 -2044.06763285024
	0.3 -2049.78071083506
	0.4 -2055.88440303658
	0.5 -2061.90648723257
	0.6 -2067.36870255349
	0.7 -2071.81193926846
	0.8 -2074.8630089717
	0.9 -2076.52777777778
	1 -2076.6856452726
	1.1 -2075.47860593513
	1.2 -2072.9218426501
	1.3 -2069.11473429952
	1.4 -2064.23964803313
	1.5 -2058.61093857833
	1.6 -2052.14406487233
	1.7 -2044.97998619738
	1.8 -2037.25138026225
	1.9 -2029.11059351277
	2 -2020.68668046929
	2.1 -2011.8897515528
	2.2 -2002.85058661146
	2.3 -1993.56452726018
	2.4 -1984.13854382333
	2.5 -1974.4641131815
};

\addplot [ultra thick, efwcolor]
table {%
	0.1 -2048.60334713596
	0.2 -2062.93995859213
	0.3 -2071.13940648723
	0.4 -2071.32125603865
	0.5 -2064.39734299517
	0.6 -2051.82522429262
	0.7 -2034.71825396825
	0.8 -2014.00189786059
	0.9 -1990.58229813665
	1 -1964.6231884058
	1.1 -1936.92908902692
	1.2 -1907.31331953071
	1.3 -1875.9135610766
	1.4 -1843.2472394755
	1.5 -1809.56142167012
};
\end{axis}

\end{tikzpicture}
		}
		\caption{\label{fig:energy:relative-2} Per $\lambda$, at the $5\textsuperscript{th}$ CRF iteration.}
	\end{subfigure}%
	\caption{\label{fig:energy-2} \textbf{CRF energy} averaged over \num{1449} \emph{val} images of \textbf{PASCAL VOC} using DeepLabv3+ and Potts dense CRF. \textbf{(a)} Comparison between regularized Frank-Wolfe and the other methods for some selected values of the regularization weight $\lambda$. \textbf{(b)} Results of regularized Frank-Wolfe for different values of $\lambda$. \textbf{(c)} Energy per $\lambda$ after $5$ iterations.	Best viewed in color. %
	}
\end{figure*}

\begin{figure*}[!htb]
	\centering
	\begin{subfigure}{0.33\textwidth}
		\centering
		\resizebox{\linewidth}{!}{
			\begin{tikzpicture}

\definecolor{color0}{rgb}{1,0,1}
\definecolor{color1}{rgb}{1,0.647058823529412,0}
\definecolor{color2}{rgb}{0.803921568627451,0.52156862745098,0.247058823529412}
\definecolor{color3}{rgb}{0,1,1}

\begin{axis}[
legend cell align={left},
legend style={fill opacity=0.8, draw opacity=1, text opacity=1, draw=white!80!black},
tick align=outside,
tick pos=left,
x grid style={white!69.0196078431373!black},
xmin=2, xmax=20.95,
xtick style={color=black},
y grid style={white!69.0196078431373!black},
ymin=-3750, ymax=-3370,
scaled y ticks={real:1000},
ytick scale label code/.code={$\times 10^3$},
every y tick scale label/.style={at={(yticklabel cs:0.86)},yshift=12.pt,xshift=2pt,anchor=south west},
ytick style={color=black}
]

\addplot [ultra thick, pgdcolor]
table {%
0 0
1 -2701.58
2 -3195.522
3 -3376.772
4 -3464.964
5 -3516.086
6 -3549.532
7 -3573.412
8 -3591.524
9 -3605.698
10 -3617.282
11 -3626.782
12 -3634.798
13 -3641.804
14 -3647.858
15 -3653.21
16 -3657.958
17 -3662.18
18 -3666.132
19 -3669.592
20 -3672.736
};
\addlegendentry{PGD}

\addplot [ultra thick, pgmcolor]
table {%
0 0
1 -2701.578
2 -3195.524
3 -3400.666
4 -3498.054
5 -3552.424
6 -3587.066
7 -3611.438
8 -3629.712
9 -3643.73
10 -3654.78
11 -3663.89
12 -3671.668
13 -3678.082
14 -3683.566
15 -3688.336
16 -3692.414
17 -3696.116
18 -3699.268
19 -3702.132
20 -3704.656
};
\addlegendentry{PGM}

\addplot [ultra thick, admmcolor]
table {%
0 0
1 -2951.142
2 -3348.008
3 -3481.624
4 -3545.284
5 -3582.094
6 -3605.92
7 -3622.694
8 -3635.012
9 -3644.882
10 -3652.962
11 -3659.518
12 -3665.024
13 -3669.868
14 -3674.034
15 -3677.822
16 -3681.206
17 -3684.274
18 -3687.042
19 -3689.326
20 -3691.624
};
\addlegendentry{ADMM}

\addplot [ultra thick, mfcolor]
table {%
0 0
1 -3085.482
2 -3386.532
3 -3451.18
4 -3471.962
5 -3479.988
6 -3483.744
7 -3485.338
8 -3486.06
9 -3486.516
10 -3486.74
11 -3486.878
12 -3486.992
13 -3487.038
14 -3487.06
15 -3487.072
16 -3487.064
17 -3487.092
18 -3487.098
19 -3487.098
20 -3487.08
};
\addlegendentry{MF}

\addplot [ultra thick, fwcolor]
table {%
0 0
1 -3085.488
2 -3522.772
3 -3625.746
4 -3668.572
5 -3690.822
6 -3704.314
7 -3712.748
8 -3718.728
9 -3723.08
10 -3726.258
11 -3728.718
12 -3730.546
13 -3732.07
14 -3733.294
15 -3734.372
16 -3735.176
17 -3735.8
18 -3736.388
19 -3736.926
20 -3737.376
};
\addlegendentry{FW}

\addplot [ultra thick, efwcolor]
table {%
0 0
1 -3085.494
2 -3534.184
3 -3638.606
4 -3677.696
5 -3697.198
6 -3708.306
7 -3715.106
8 -3719.714
9 -3723.116
10 -3725.774
11 -3727.626
12 -3729.354
13 -3730.736
14 -3731.762
15 -3732.612
16 -3733.132
17 -3733.694
18 -3734.152
19 -3734.462
20 -3734.886
};
\addlegendentry{\efw[\lambda=.25]}

\addplot [ultra thick, lfwcolor]
table {%
0 0
1 -3085.486
2 -3522.246
3 -3627.604
4 -3666
5 -3683.654
6 -3693.074
7 -3698.482
8 -3702.064
9 -3704.774
10 -3706.54
11 -3707.808
12 -3708.62
13 -3709.128
14 -3709.532
15 -3709.904
16 -3710.206
17 -3710.448
18 -3710.57
19 -3710.702
20 -3710.778
};
\addlegendentry{\lfw[\lambda=1]}

\end{axis}

\end{tikzpicture}
		}
		\caption{\label{fig:energy:cityscapes:comparison}Energy per CRF iteration.}
	\end{subfigure}%
	\begin{subfigure}{0.33\textwidth}
		\centering
		\resizebox{\linewidth}{!}{
			\begin{tikzpicture}

\begin{axis}[
legend cell align={left},
legend style={fill opacity=0.8, draw opacity=1, text opacity=1, draw=white!80!black},
legend entries={{\textcolor{efwcolor}{\efw}}, {\textcolor{lfwcolor}{\lfw}}, { }, { }, { }, {$\lambda = 0.2$},{$\lambda = 0.4$},{$\lambda = 0.6$},{$\lambda = 0.8$}, {$\lambda = 1.0$}},
legend columns=5,
transpose legend,
tick align=outside,
tick pos=left,
x grid style={white!69.0196078431373!black},
xmin=2, xmax=20.95,
xtick style={color=black},
y grid style={white!69.0196078431373!black},
ymin=-3750, ymax=-3370,
scaled y ticks={real:1000},
ytick scale label code/.code={$\times 10^3$},
every y tick scale label/.style={at={(yticklabel cs:0.86)},yshift=12.pt,xshift=2pt,anchor=south west},
ytick style={color=black}
]

\addlegendimage{dash pattern=on 0pt off 3pt, mark=text, text mark=green, efwcolor}
\addlegendimage{dash pattern=on 0pt off 3pt, mark=text, text mark=red, lfwcolor}
\addlegendimage{empty legend}
\addlegendimage{empty legend}
\addlegendimage{empty legend}

\addlegendimage{black}
\addlegendimage{thick, black}
\addlegendimage{very thick, black}
\addlegendimage{ultra thick, black}
\addlegendimage{line width=2pt, black}

\addplot [efwcolor]
table {%
0 0
1 -3085.494
2 -3534.184
3 -3638.606
4 -3677.696
5 -3697.198
6 -3708.306
7 -3715.106
8 -3719.714
9 -3723.116
10 -3725.774
11 -3727.626
12 -3729.354
13 -3730.736
14 -3731.762
15 -3732.612
16 -3733.132
17 -3733.694
18 -3734.152
19 -3734.462
20 -3734.886
};
\addplot [thick, efwcolor]
table {%
0 0
1 -3085.482
2 -3514.298
3 -3619.508
4 -3658.284
5 -3676.606
6 -3686.244
7 -3691.988
8 -3695.566
9 -3698.104
10 -3700.178
11 -3701.506
12 -3702.476
13 -3703.152
14 -3703.728
15 -3704.148
16 -3704.438
17 -3704.69
18 -3704.94
19 -3705.122
20 -3705.278
};
\addplot [very thick, efwcolor]
table {%
0 0
1 -3085.48
2 -3477.768
3 -3574.98
4 -3609.922
5 -3625.198
6 -3632.626
7 -3636.626
8 -3638.988
9 -3640.518
10 -3641.484
11 -3642.208
12 -3642.596
13 -3642.86
14 -3643.07
15 -3643.23
16 -3643.31
17 -3643.4
18 -3643.436
19 -3643.468
20 -3643.454
};
\addplot [ultra thick, efwcolor]
table {%
0 0
1 -3085.484
2 -3434.068
3 -3516.608
4 -3544.892
5 -3556.59
6 -3561.902
7 -3564.59
8 -3566.158
9 -3566.684
10 -3567.18
11 -3567.49
12 -3567.672
13 -3567.8
14 -3567.892
15 -3567.9
16 -3567.92
17 -3567.928
18 -3567.938
19 -3567.958
20 -3567.956
};
\addplot [line width=2pt, efwcolor]
table {%
0 0
1 -3085.492
2 -3386.524
3 -3451.178
4 -3471.964
5 -3479.99
6 -3483.744
7 -3485.346
8 -3486.05
9 -3486.508
10 -3486.748
11 -3486.848
12 -3487.008
13 -3487.036
14 -3487.062
15 -3487.068
16 -3487.09
17 -3487.09
18 -3487.09
19 -3487.094
20 -3487.108
};
\addplot [lfwcolor]
table {%
0 0
1 -3085.48
2 -3530.728
3 -3631.764
4 -3672.082
5 -3692.252
6 -3704.196
7 -3711.852
8 -3717.2
9 -3720.994
10 -3723.846
11 -3726.324
12 -3727.974
13 -3729.61
14 -3730.812
15 -3731.794
16 -3732.696
17 -3733.444
18 -3734.13
19 -3734.808
20 -3735.27
};
\addplot [thick, lfwcolor]
table {%
0 0
1 -3085.488
2 -3536.822
3 -3638.632
4 -3677.136
5 -3696.786
6 -3707.752
7 -3715.034
8 -3719.704
9 -3722.996
10 -3725.854
11 -3727.844
12 -3729.484
13 -3730.83
14 -3732.094
15 -3732.988
16 -3733.816
17 -3734.456
18 -3734.942
19 -3735.476
20 -3735.814
};
\addplot [very thick, lfwcolor]
table {%
0 0
1 -3085.48
2 -3536.944
3 -3640.996
4 -3679.778
5 -3698.59
6 -3709.036
7 -3715.536
8 -3719.834
9 -3723.13
10 -3725.586
11 -3727.408
12 -3728.898
13 -3730.122
14 -3731.082
15 -3731.83
16 -3732.322
17 -3732.826
18 -3733.242
19 -3733.546
20 -3733.952
};
\addplot [ultra thick, lfwcolor]
table {%
0 0
1 -3085.492
2 -3531.526
3 -3637.144
4 -3676.194
5 -3694.76
6 -3704.53
7 -3710.612
8 -3714.582
9 -3717.486
10 -3719.538
11 -3721.374
12 -3722.6
13 -3723.426
14 -3724.032
15 -3724.582
16 -3725.006
17 -3725.268
18 -3725.578
19 -3725.838
20 -3726.13
};
\addplot [line width=2pt, lfwcolor]
table {%
0 0
1 -3085.486
2 -3522.246
3 -3627.604
4 -3666
5 -3683.654
6 -3693.074
7 -3698.482
8 -3702.064
9 -3704.774
10 -3706.54
11 -3707.808
12 -3708.62
13 -3709.128
14 -3709.532
15 -3709.904
16 -3710.206
17 -3710.448
18 -3710.57
19 -3710.702
20 -3710.778
};
\end{axis}

\end{tikzpicture}
		}
		\caption{\label{fig:energy:cityscapes:lambdas}Energy per CRF iteration.}
	\end{subfigure}%
	\begin{subfigure}{0.33\textwidth}
		\centering
		\resizebox{\linewidth}{!}{
			\begin{tikzpicture}

\definecolor{color0}{rgb}{1,0,1}
\definecolor{color1}{rgb}{1,0.647058823529412,0}
\definecolor{color2}{rgb}{0.803921568627451,0.52156862745098,0.247058823529412}
\definecolor{color3}{rgb}{0,1,1}

\begin{axis}[
legend cell align={left},
legend style={fill opacity=0.8, draw opacity=1, text opacity=1, draw=white!80!black},
tick align=outside,
tick pos=left,
x grid style={white!69.0196078431373!black},
xmin=-0.02, xmax=2.62,
xtick style={color=black},
y grid style={white!69.0196078431373!black},
ymin=-3750, ymax=-3370,
scaled y ticks={real:1000},
ytick scale label code/.code={$\times 10^3$},
every y tick scale label/.style={at={(yticklabel cs:0.86)},yshift=12.pt,xshift=2pt,anchor=south west},
ytick style={color=black}
]

\addplot [ultra thick, pgdcolor]
table {%
0.1 -3516.086
0.2 -3516.086
0.3 -3516.086
0.4 -3516.086
0.5 -3516.086
0.6 -3516.086
0.7 -3516.086
0.8 -3516.086
0.9 -3516.086
1 -3516.086
1.1 -3516.086
1.2 -3516.086
1.3 -3516.086
1.4 -3516.086
1.5 -3516.086
1.6 -3516.086
1.7 -3516.086
1.8 -3516.086
1.9 -3516.086
2 -3516.086
2.1 -3516.086
2.2 -3516.086
2.3 -3516.086
2.4 -3516.086
2.5 -3516.086
};
\addplot [ultra thick, fwcolor]
table {%
0.1 -3690.822
0.2 -3690.822
0.3 -3690.822
0.4 -3690.822
0.5 -3690.822
0.6 -3690.822
0.7 -3690.822
0.8 -3690.822
0.9 -3690.822
1 -3690.822
1.1 -3690.822
1.2 -3690.822
1.3 -3690.822
1.4 -3690.822
1.5 -3690.822
1.6 -3690.822
1.7 -3690.822
1.8 -3690.822
1.9 -3690.822
2 -3690.822
2.1 -3690.822
2.2 -3690.822
2.3 -3690.822
2.4 -3690.822
2.5 -3690.822
};
\addplot [ultra thick, admmcolor]
table {%
0.1 -3582.094
0.2 -3582.094
0.3 -3582.094
0.4 -3582.094
0.5 -3582.094
0.6 -3582.094
0.7 -3582.094
0.8 -3582.094
0.9 -3582.094
1 -3582.094
1.1 -3582.094
1.2 -3582.094
1.3 -3582.094
1.4 -3582.094
1.5 -3582.094
1.6 -3582.094
1.7 -3582.094
1.8 -3582.094
1.9 -3582.094
2 -3582.094
2.1 -3582.094
2.2 -3582.094
2.3 -3582.094
2.4 -3582.094
2.5 -3582.094
};
\addplot [ultra thick, mfcolor]
table {%
0.1 -3479.988
0.2 -3479.988
0.3 -3479.988
0.4 -3479.988
0.5 -3479.988
0.6 -3479.988
0.7 -3479.988
0.8 -3479.988
0.9 -3479.988
1 -3479.988
1.1 -3479.988
1.2 -3479.988
1.3 -3479.988
1.4 -3479.988
1.5 -3479.988
1.6 -3479.988
1.7 -3479.988
1.8 -3479.988
1.9 -3479.988
2 -3479.988
2.1 -3479.988
2.2 -3479.988
2.3 -3479.988
2.4 -3479.988
2.5 -3479.988
};
\addplot [ultra thick, pgmcolor]
table {%
0.1 -3552.424
0.2 -3552.424
0.3 -3552.424
0.4 -3552.424
0.5 -3552.424
0.6 -3552.424
0.7 -3552.424
0.8 -3552.424
0.9 -3552.424
1 -3552.424
1.1 -3552.424
1.2 -3552.424
1.3 -3552.424
1.4 -3552.424
1.5 -3552.424
1.6 -3552.424
1.7 -3552.424
1.8 -3552.424
1.9 -3552.424
2 -3552.424
2.1 -3552.424
2.2 -3552.424
2.3 -3552.424
2.4 -3552.424
2.5 -3552.424
};

\addplot [ultra thick, lfwcolor]
table {%
	0.1 -3690.088
	0.2 -3692.252
	0.3 -3694.518
	0.4 -3696.786
	0.5 -3698.252
	0.6 -3698.59
	0.7 -3697.614
	0.8 -3694.76
	0.9 -3689.984
	1 -3683.654
	1.1 -3675.966
	1.2 -3666.94
	1.3 -3656.724
	1.4 -3645.7
	1.5 -3633.782
	1.6 -3621.182
	1.7 -3608.11
	1.8 -3594.748
	1.9 -3580.916
	2 -3566.868
	2.1 -3552.742
	2.2 -3538.552
	2.3 -3524.584
	2.4 -3510.206
	2.5 -3495.988
};
\addplot [ultra thick, efwcolor]
table {%
	0.1 -3694.574
	0.2 -3697.198
	0.3 -3691.716
	0.4 -3676.606
	0.5 -3653.698
	0.6 -3625.198
	0.7 -3592.316
	0.8 -3556.59
	0.9 -3518.876
	1 -3479.99
	1.1 -3441.206
	1.2 -3401.766
	1.3 -3362.5
	1.4 -3323.644
	1.5 -3284.862
};
\end{axis}

\end{tikzpicture}
		}
		\caption{\label{fig:energy:cityscapes:relative}Per $\lambda$, at the $5\textsuperscript{th}$ CRF iteration.}
	\end{subfigure}%
	\caption{\label{fig:energy:cityscapes} \textbf{CRF energy} averaged over \num{500} \emph{val} images of \textbf{Cityscapes} using DeepLabv3+ and Potts dense CRF. \textbf{(a)} Comparison between regularized Frank-Wolfe and the other methods for some selected values of the regularization weight $\lambda$. \textbf{(b)} Results of regularized Frank-Wolfe for different values of $\lambda$. \textbf{(c)} Energy per $\lambda$ after $5$ iterations.	Best viewed in color.
	}
\end{figure*}

\end{document}